\DeclarePairedDelimiter\ceil{\lceil}{\rceil}
\DeclarePairedDelimiter\floor{\lfloor}{\rfloor}
\begin{document}

\title{Self-paced Resistance Learning against Overfitting on Noisy Labels}
%
%
%
%

\author{Xiaoshuang~Shi,
		Zhenhua~Guo,
         Kang~Li,
         Yun~ Liang,
         and Xiaofeng~ Zhu
\thanks{X. Shi and X. Zhu are with School of Computer Science and Technology, University of Electronic Science and Technology of China, Chengdu, Sichuan, China, email:(xsshi2013@gmail.com and seanzhuxf@gmail.com) }
\thanks{Z. Guo is with Tsinghua Shenzhen International Graduate School, Tsinghua University, Shenzhen, Guangdong, China, e-mail: (zhenhua.guo@sz.tsinghua.edu.cn).}
\thanks{Kang Li is with West China Medical Center, Sichuan University, Chengdu, Sichuan, Chian, e-mail: (likang@wchscu.cn).}
\thanks{Y. Liang is with the J. Crayton Pruitt Family Department of Biomedical Engineering, University of Florida, Gainesville, FL, USA, e-mail: (yunliang@ufl.edu).}}

\maketitle

\begin{abstract}
Noisy labels composed of correct and corrupted ones are pervasive in practice. They might significantly deteriorate the performance of convolutional neural networks (CNNs), because CNNs are easily overfitted on corrupted labels. To address this issue, inspired by an observation, deep neural networks might first memorize the probably correct-label data and then corrupt-label samples, we propose a novel yet simple self-paced resistance framework to resist corrupted labels, without using any clean validation data. The proposed framework first utilizes the memorization effect of CNNs to learn a curriculum, which contains confident samples and provides meaningful supervision for other training samples. Then it adopts selected confident samples and a proposed resistance loss to update model parameters;  the resistance loss tends to  smooth model parameters' update or attain equivalent prediction over each class, thereby resisting model overfitting on corrupted labels. Finally, we unify these two modules into a single loss function and optimize it in an alternative learning. Extensive experiments  demonstrate the significantly superior performance of the proposed framework over recent state-of-the-art methods on noisy-label data.  \emph{Source codes of the proposed method are available on \textcolor{blue}{https://github.com/xsshi2015/Self-paced-Resistance-Learning}.}

\end{abstract}

\begin{IEEEkeywords}
Convolutional neural networks, self-paced resistance, model overfitting, noisy labels
\end{IEEEkeywords}

\IEEEpeerreviewmaketitle

\vspace{-0.5em}
\section{Introduction}
\label{sec:introduction}

\IEEEPARstart{R}ecently, convolutional neural networks (CNNs) have achieved tremendous success on various different tasks, such as image classification \cite{krizhevsky2012imagenet}  \cite{szegedy2016rethinking} \cite{he2016deep} \cite{Feng_2018_CVPR} \cite{feng2019hypergraph}, retrieval \cite{shi2018pairwise} \cite{shi2020anchor},  detection \cite{girshick2014rich} and segmentation \cite{long2015fully}. However, most CNNs usually require large-scale high-quality labels to obtain desired accuracy, because deep CNNs are capable of memorizing the entire training data even with completely random labels \cite{zhang2016understanding}. This infers that noisy labels might significantly deteriorate the performance of CNNs during training. Unfortunately, noisy labels are pervasive in practice and it is expensive to obtain accurate labeled data. 

To tackle noisy labels for effectively and robustly training CNNs, some methods \cite{reed2014training} \cite{laine2016temporal} utilize regularization terms for label correction to alleviate the deterioration of deep networks during training, but they often fail to attain the optimal accuracy. Another popular way is to estimate a label transition matrix without using regularizations for loss correction \cite{patrini2017making}. However, it is usually difficult to accurately estimate the label transition matrix, especially for a large number of classes. The third promising direction is to select confident samples based on small-loss distances in order to update networks robustly, without estimating the label transition matrix. MentorNet \cite{jiang2017mentornet} and Co-teaching \cite{han2018co} are two representative methods.  When no clean validation data is available, self-paced MentorNet learns a neural network to approximate a predefined curriculum to provide meaningful supervision for StudentNet, so that it can focus on the samples with probably correct labels. Self-paced MentorNet is similar to the self-training method \cite{chapelle2009semi}, and it inherits the same inferiority of accumulated errors generated by sample-selection bias. To address the issue, Co-teaching utilizes the memorization effect of deep neural networks \cite{arpit2017closer}, which might first memorize training data with correct labels and then those with corrupted labels (please refer to Fig. A1 in the supplemental material), and symmetrically trains two networks, each of which filters corrupted labels and selects the samples with small-loss to update the peer network. However, with the increasing number of training epochs, the two networks will gradually form consensus predictions and Co-teaching will functionally deteriorate to self-paced MentorNet. Although the strategy of ``Update by Disagreement'' \cite{malach2017decoupling} can slow down the two networks of Co-teaching to form consensus predictions, it still cannot prevent the effect of sample-selection bias in many cases \cite{yu2019does}. Additionally, when training data is with extremely noisy labels, MentorNet and Co-teaching easily select the corrupt-label data as confident samples so that the networks are overfitted on corrupted labels, thereby decreasing their performance. Moreover, Co-teaching aims to filter corrupt-label training samples and thus might fail to explore their correct semantic information.

To address the performance deterioration of CNNs generated by model overfitting on corrupted labels, and meanwhile explore the correct semantic information of training samples with corrupted labels, in this paper, we propose a novel self-paced resistance framework using a resistance loss to robustly train CNNs on noisy labels, without using any clean validation data. The proposed framework is mainly inspired by: (i) Deep neural networks might first memorize the probably correct-label data and then samples with corrupted labels or outliers \cite{han2018co}; (ii) A curriculum consisting of confident samples can provide meaningful supervision for other training data \cite{jiang2017mentornet}; (iii) A resisting model overfitted on corrupted labels can reduce the deterioration of model performance. We summarize three major contributions as follows:

\begin{itemize}
\item We propose a novel resistance loss to significantly alleviate model overfitting on corrupted labels, by smoothing model parameters' update or attaining equivalent prediction on each class. For clarity, we present the difference between the resistance loss and the traditional cross-entropy loss in Fig. \ref{fig:idea}.  

\item We propose a novel yet simple framework, self-paced resistance learning (SPRL), by effectively using the memorization effect of deep neural networks, curriculum learning and a resistance loss to robustly train CNNs on noisy labels.

\item Extensive experiments on four image datasets demonstrate that (i) The proposed framework can prevent the accuracy deterioration of CNNs on noisy labels, leading to superior classification accuracy over recent state-of-the-art methods on multiple types of label noise; (ii) With clean training data only, the proposed method usually obtains better results than standard networks.
\end{itemize}

The rest of the paper is organized as follows. Section 2 briefly reviews some popular methods to tackle noisy labels; Section 3 introduces the preliminaries on curriculum learning; Section 4 presents the proposed framework, SPRL. Section 5 shows and analyzes experimental results of various methods, and points out the future work; Finally, Section 6 concludes this paper.

\begin{figure}[tbp]
	\includegraphics[trim={0em 0em 0em 0em}, width=0.48\textwidth]{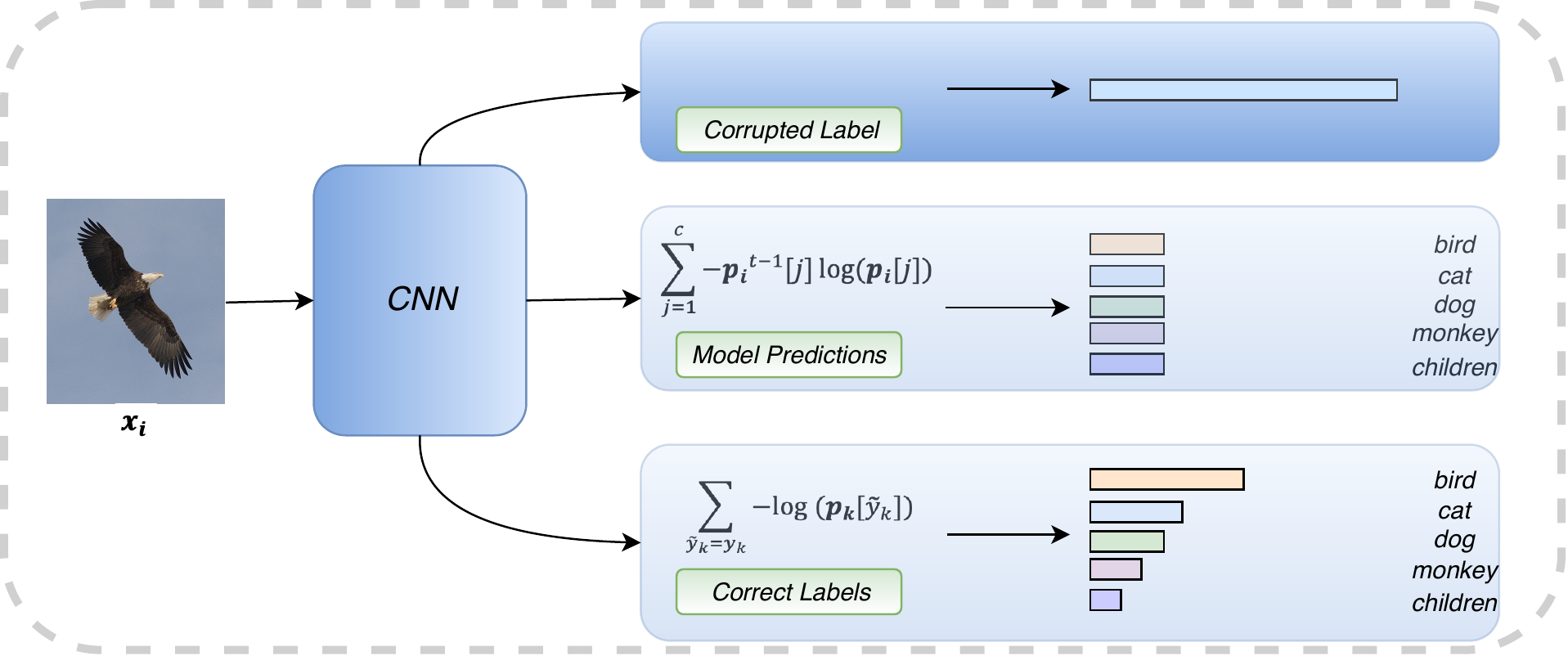}
	\vspace{-1em}
	\caption{The difference between the resistance loss and the traditional cross-entropy loss. The middle is the resistance loss, which employs \(c\) weighted cross-entropy losses to learn model parameters (\(\mathbf{p}_i^{t-1}[j]\) (\(1\leq j\leq c\)) is the weight of each cross-entropy), tending to make the prediction over each class be equivalent.  The top cross-entropy loss utilizes a corrupted label \(\tilde{y}_i\) of \(\mathbf{x}_{i}\) to update model parameters and make a wrong prediction or tends to be overfitting by an outlier. The bottom cross-entropy loss utilizes the samples with correct labels to update model parameters for providing a correct prediction.  \(\mathbf{p}_i^{t-1}\) is the model prediction of the \(i^{th}\) training sample \(\mathbf{x}_{i}\) in the \(t-1^{th}\) epoch before using \(\mathbf{x}_{i}\) to update model parameters, \(\mathbf{p}_i\) represents the model prediction in the \(t^{th}\) epoch, \(t\) is the current number of training epochs, \(c\) is the number of classes, \(y_{k}\) denotes the correct label of the \(k^{th}\) (\(1\leq k \leq n\)) sample and \(n\) is the total number of training samples.} 
	\vspace{-1em}
	\label{fig:idea}
\end{figure}

\vspace{-1em}
\section{Related Work}
\label{sec:relatedwork}
Here, we briefly review some popular statistical learning methods for tackling noisy labels and deep neural networks with noisy labels. 

\textbf{Statistical learning methods.} There are numerous statistical learning algorithms to handle noisy labels \cite{frenay2013classification}. They can be roughly categorized into three groups: probabilistic modeling, surrogate losses and noise rate estimation. One popular probabilistic modeling method is \cite{raykar2010learning}, which proposes a two-coin model to handle noisy labels provided by multiple annotators. For surrogate losses based methods, \cite{natarajan2013learning} proposes an unbiased estimator to provide the noise corrected loss and then presents a weighted loss function for handling class-dependent noisy labels; \cite{masnadi2009design} introduces a robust non-convex loss for tackling the contamination of data with outliers and a boosting algorithm, SavageBoost, to minimize the loss; \cite{van2015learning} presents a convex loss modified from the hinged loss and proves its robustness to symmetric label noise. In the noise rate estimation category, \cite{scott2013classification} designs consistent estimators for classification with asymmetric (class-dependent) label noise; \cite{ramaswamy2016mixture} utilizes kernel embeddings onto reproducing kernel Hilbert space for mixture proportion estimation; \cite{sanderson2014class} estimates class proportions when the distributions of training and test samples are different; \cite{sanderson2014class} and \cite{liu2015classification}  introduce class-probability estimators using order statistics on the range of scores. Most of these statistical learning methods are proposed for traditional algorithms on relatively small datasets. Thus they usually fail to obtain promising performance on real applications, especially large datasets. 

\textbf{Deep neural networks with noisy labels.} Because deep neural networks are sensitive to noisy labels, a few methods have been proposed to handle noisy labels for robust network training. \cite{mnih2012learning} proposes two robust loss functions for binary classification of aerial image patches to handle omission and wrong location of training labels. \cite{ghosh2017robust} \cite{ghosh2015making} \cite{manwani2013noise} investigate noise-tolerant of loss functions under risk minimization. \cite{reed2014training} \cite{laine2016temporal} \cite{shi2020graph} consider the prediction consistency via adding a regularization term for robustly training deep neural networks. This strategy cannot prevent the performance deterioration of CNNs in many cases and it usually fails to obtain optimal accuracy. \cite{sukhbaatar2014training} and \cite{patrini2017making} estimate a label transition matrix, which summarizes the probability of one class being flipped into another, to correct loss functions, and \cite{ma2018dimensionality} employs a dimensionality-driven learning strategy to estimate the correct labels of samples during training and adapt the loss function. However, it is difficult to accurately estimate the label transition matrix or the labels of training samples. \cite{wang2018iterative} proposes an iterative learning framework to handle open-set noisy labels.  \cite{xiao2015learning}\cite{li2017learning} \cite{veit2017learning} and \cite{vahdat2017toward} adopt a small clean dataset to leverage samples with noisy labels; \cite{ren2018learning} adopts a small clean dataset to assign weights for training samples based on their gradient directions to reduce the effect of corrupted labels. These methods usually require an additional clean dataset to alleviate the overfitting of CNNs on noisy labels. \cite{northcutt2017learning} and \cite{zhang2018generalized} adopt the confident samples for training by cleaning up corrupted labels, and thus they fail to exploit the semantic information of the samples with corrupted labels.  \cite{malach2017decoupling} introduces a strategy, ``Update by Disagreement'', that updates the parameters of two networks by using the samples with different predictions. This strategy cannot handle noisy labels explicitly, because the disagreement predictions usually contain corrupted labels. MentorNet \cite{jiang2017mentornet} and Co-teaching \cite{han2018co} are two popular learn-to-teach methods to handle noisy labels. They select confident samples based on small-loss distances to teach the student or other network.  \cite{yu2019does} extends Co-teaching to alleviate the performance deterioration of deep neural networks. However, these learn-to-teach methods easily select corrupt-label samples as confident ones and then make CNNs be overfitted on corrupted labels in many cases, especially on extremely noisy labels (please refer to Fig. A2 in the supplemental material), thereby deteriorating and decreasing the accuracy of CNNs during training. \cite{yao2020searching} formulates the sample selection from noisy labels as a function approximation problem, and proposes a novel Newton algorithm to solve the problem. However, its selection performance is still far from satisfying on extremely noisy labels.

Similar to previous learn-to-teach methods, the proposed method utilizes the memorization effect of deep neural networks to select confident samples as a curriculum to provide supervision of other training samples. However, unlike previous learn-to-teach methods that are very likely to deteriorate with the increasing number of training epochs, the proposed method can prevent the performance degradation during training. This is because the proposed resistance loss can significantly reduce the effect of corrupted labels by alleviating model overfitting. Additionally, the proposed framework does not require the noise rate and only trains a single network, differing from Co-teaching \cite{han2018co} and its variant \cite{yu2019does} that need to know or estimate the rate of label noise and train two networks in a symmetric way. Overall, the proposed method is easy to utilize and can obtain good performance for image classification.

\begin{figure*}[tbp]
\center
\includegraphics[trim={0em 0em 0em 0em}, clip, width=0.98\textwidth]{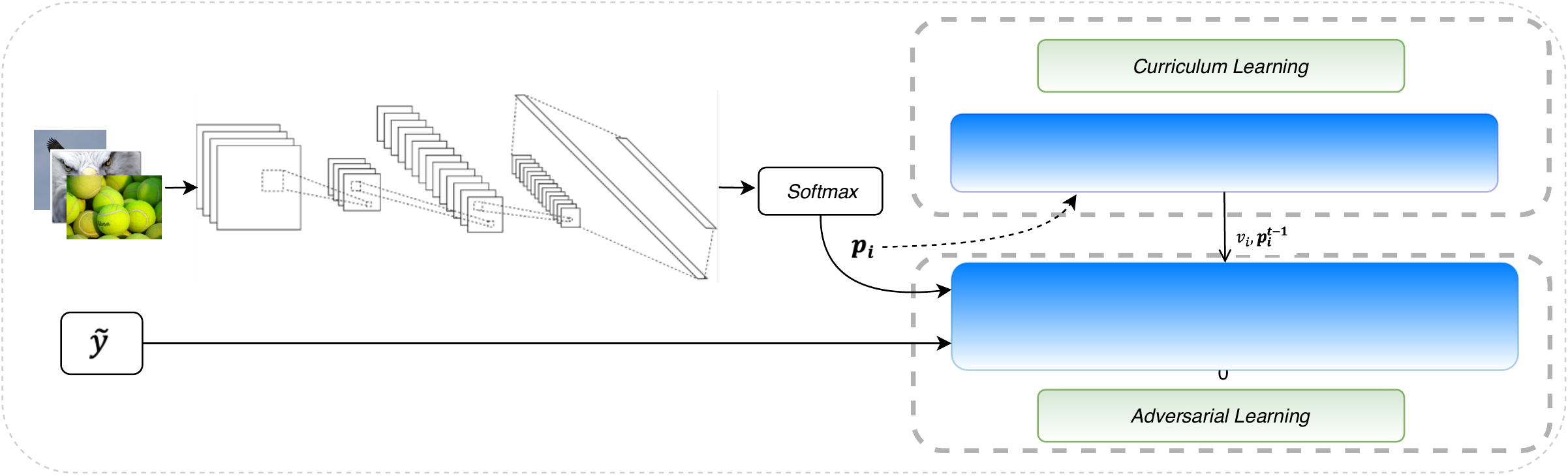}
\vspace{-0.5em}
\caption{The flowchart of the proposed SPRL, which alternatively learns a curriculum \(\mathbf{v}\) and updates model parameters \(\mathbf{w}\) during training. } 
\label{fig:framework}
\vspace{-1em}
\end{figure*}

\section{Preliminaries on Curriculum Learning}

Curriculum learning (CL) \cite{bengio2009curriculum} is a training strategy inspired by the learning process of humans and animals that gradually proceeds easy to difficult samples. CL predetermines the curriculum based on the prior knowledge so that training data is ranked in a meaningful order to facilitate learning. In the following, we briefly introduce three major variants of CL that are related to our proposed method.

\textbf{Self-paced learning (SPL) \cite{kumar2010self}:} CL heavily relies on the prior knowledge and ignores the feedback of the learner (model); to address this issue, SPL dynamically determines the curriculum based on the learner abilities. Given training data \(\mathbf{X}=\left \{ \mathbf{x}_{i} \right \}_{i=1}^n\) and the corresponding labels \(\mathbf{y}= \left \{ y_{i}\right \}_{i=1}^n\), where \(\mathbf{x}_{i}\) and \(y_{i}\) denote the \(i^{th}\) sample and its correct label, respectively. Let \(f(\cdot)\) represent a classifier and \(\mathbf{w}\) be its model parameters. SPL simultaneously selects easy samples and learns model parameters in each iteration by solving the following problem:
\begin{equation}
\begin{array}{cc}
\underset{\mathbf{w},\mathbf{v}}{min}\ E(\mathbf{w},\mathbf{v};\lambda)= \sum_{i=1}^n v_{i}L(y_i,f(\mathbf{x}_i, \mathbf{w})) \\ -\lambda \sum_{i=1}^n v_i, \ \  s.t.\ \mathbf{v}\in \left \{ 0,1 \right \}^n, 
\label{eqn:spl}
\end{array}
\end{equation}
where \(L(y_i,f(\mathbf{x}_i, \mathbf{w}))\) denotes the loss function that calculates the cost between the ground truth label \(y_{i}\) and the estimated label \(f(\mathbf{x}_i, \mathbf{w})\), \(\mathbf{v}\) is a binary vector to indicate which ones are easy samples, and \(\lambda\) is a parameter to control the learning pace. Eq. (\ref{eqn:spl}) is usually solved by an alternative minimization strategy: with fixing \(\mathbf{w}\), calculating \(\mathbf{v}\) by \(\mathbf{v}=\left\{\begin{matrix}
1 & L(\mathbf{x}_i, f(\mathbf{x}_i, \mathbf{w})))<\lambda, \\ 
0 & otherwise.
\end{matrix}\right. \), and then with fixing \(\mathbf{v}\), updating \(\mathbf{w}\) by using selected easy samples to train the classifier \(f(\cdot)\).

\textbf{Self-paced curriculum learning (SPCL) \cite{jiang2015self} :} Although SPL can dynamically learn the curriculum, it does not take into account the prior knowledge. Let \(\Psi\) be a feasible region encoding the information of a predetermined curriculum. To connect CL with SPL, SPCL \cite{jiang2015self} employs both the predetermined curriculum obtained by the prior knowledge before training and the learned curriculum during training with the following model:
\begin{equation}
\begin{array}{cc}
\underset{\mathbf{w},\mathbf{v}}{min}\ E(\mathbf{w},\mathbf{v};\lambda)= \sum_{i=1}^n v_{i}L(y_i,f(\mathbf{x}_i, \mathbf{w})) \\ +G(\mathbf{v}, \lambda), \ \  s.t.\ \mathbf{v}\in \left [ 0,1 \right ]^n, \mathbf{v}\in \Psi,
\end{array}
\label{eqn:spcl}
\end{equation}
where \(\mathbf{v}\) is a weight vector to reflect the significance of samples, and \(G(\cdot)\) is a self-paced function to control the learning scheme. For example, in SPL, \(G(\mathbf{v}, \lambda) =  -\lambda \sum_{i=1}^n v_i\). Similar to Eq. (\ref{eqn:spl}), Eq. (\ref{eqn:spcl}) can also be solved by using an alternative minimization method.

\textbf{Self-paced MentorNet \cite{jiang2017mentornet}:} Because the learning procedure of deep neural networks is very complicated, it is difficult to be accurately modeled by the predefined curriculum. To tackle this issue, \cite{jiang2017mentornet} employs two neural networks, one network called MentorNet \(f_m(\cdot)\) and the other called StudentNet \(f_{s}(\cdot)\).  MentorNet is to approximate a predefined curriculum in order to compute time-varying weights \(f_m(\mathbf{z}_i; \Theta^{\ast}) \in \left [ 0,1 \right ]\) for each training sample, where \(\Theta^{\ast}\) denotes the optimal parameters in \(f_m(\cdot)\), \(\mathbf{z}_{i} = \phi(\mathbf{x}_{i}, \tilde{y}_{i}, \mathbf{w})\) represents the input feature to MentorNet of the \(i^{th}\) sample \(\mathbf{x}_{i}\), \(\tilde{y}_{i}\) is the noisy label of \(\mathbf{x}_{i}\) and \(\mathbf{w}\) is the parameter of StudentNet \(f_{s}(\cdot)\), which will utilize the learned weights \(f_m(\mathbf{z}_i; \Theta^{\ast})\) to update \(\mathbf{w}\). To learn a \(\Theta^{\ast}\), MentorNet minimizes the following function:

\begin{equation}
arg\ \underset{\Theta}{min} \sum_{i=1}^n f_m(\mathbf{z}_i; \Theta) \l_i + G(f_m(\mathbf{z}_i;
\Theta); \lambda),
\end{equation}
where \(\l_i\) is the loss between one hot vector \(\mathbf{\tilde{y}}_{i}\) of the noisy label \(\tilde{y}_{i}\) and a predicting class probability vector \(f_s(\mathbf{x}_i, \mathbf{w})\), which is a discriminative function of StudentNet. Similar to SPCL, \(G(f_m(\mathbf{z}_i;
\Theta); \lambda)\) is a self-paced function.

\vspace{-0.5em}
\section{Self-Paced Resistance Learning (SPRL)}
Although MentorNet can boost model robustness when no clean validation data is used, it easily selects corrupt-label samples as confident ones and then overfits a model on them. To address this problem, we propose a novel training strategy, SPRL. It employs the memorization effect of deep neural networks to approximate a predefined curriculum in order to provide meaningful supervision for other training samples, and adopts a resistance loss to resist the effect of corrupted labels on the network. For clarity, we present the proposed SPRL framework in Fig. \ref{fig:framework}.

\vspace{-0.5em}
\subsection{Curriculum Learning using the Memorization Effect}
Given \(n\) training samples \(\mathbf{X}= \left \{ \mathbf{x}_i \right \}_{i=1}^n \), \(\mathbf{\tilde{y}}= \left \{ \tilde{y}_i \right \}_{i=1}^n \) denotes their corresponding noisy labels, where \(\mathbf{x}_{i}\) is the \(i^{th}\) training sample, \(\tilde{y}_{i}\in \left \{ 1,\cdots, c \right \}\) is its label and \(c\) is the number of classes. To avoid the abuse of symbols, we utilize \(f(\cdot)\) to represent an \(L\)-layer convolutional neural network and \(\mathbf{w}\) to denote model parameters. Let \(\mathbf{P}= \left \{ \mathbf{p}_i \right \}_{i=1}^n \) be label predictions of training samples and \(\emph{B}\) represent the index set of selected training data in each mini-batch, where  \(\mathbf{p}_{i} =f(\mathbf{x}_{i} , \mathbf{w})\in \mathbb{R}^{c}\) is the label prediction of the sample \(\mathbf{x}_{i}\). To update model parameters, we adopt the cross-entropy loss function as follows:
\begin{equation}
\underset{\mathbf{w}}{min}\  \frac{1}{\left | \emph{B}\right |} \sum_{i\in \emph{B}} -log(\mathbf{p}_{i}[\tilde{y}_i]),
\label{eqn:ce_loss}
\end{equation} 
where \(\left | \emph{B} \right |\) denotes the length of the index set \(\emph{B}\).

Suppose that we train the network for \(T\) epochs in total. When we only utilize Eq. (\ref{eqn:ce_loss}) to update model parameters during training, the model performance usually deteriorates after a few epochs, because the network might first memorize the correct and easy samples at initial epochs and then it will eventually overfit on the corrupted labels or outliers \cite{han2018co}. Based on this memorization of deep networks, we first run the model \(T_{1}\) epochs and then select \(m\) samples based on the small-loss distances to construct a predefined curriculum, which contains the probably correct data. Afterwards, we gradually add a number of samples into the curriculum every a few epochs for training. For clarity, we formulate this procedure as the following model:

\begin{equation}
\begin{array}{cc}
\underset{\mathbf{w}, \mathbf{v}}{min}\  \frac{1}{\sum_{i\in \emph{B}} v_i} \sum_{i\in \emph{B}} -v_{i}log(\mathbf{p}_{i}[\tilde{y}_i]) - \lambda v_{i}, \\ s.t.\ \mathbf{v}\in \left \{ 0,1 \right \}^n, \sum_{i=1}^n v_{i}=\delta (t),
\end{array}
\label{eqn:ce_curriculum}
\end{equation} 
where \(\delta (t)\) is a piecewise linear function to determine how many training samples are added into the curriculum and \(t\) is the current number of training epochs. Note that there are many possibilities to set \(\delta (t)\). To make confident samples play a better role in model training, we define it as:
\begin{equation}
\delta(t)=\left\{\begin{matrix}
n & t\leq T_1 \\ 
min(m+\floor{\frac{t-T_1}{\floor{\frac{T-T_1}{K-\frac{mK}{n}+1}}}}\floor{\frac{n}{K}}, n) & T_1 < t \leq T
\end{matrix}\right.
\label{eqn:vt}
\end{equation}
where \(K\in \mathbb{Z}\) is the number of subsets, each of which contains some training samples. Because the number of samples is usually much larger than the number of epochs, we add the subset into the curriculum in each epoch so that all training samples can be added in the curriculum during the training process. Eq. (\ref{eqn:vt}) suggests that Eq. (\ref{eqn:ce_curriculum}) is equivalent to Eq. (\ref{eqn:ce_loss}) when \(t\leq T_1\).

\begin{figure*}[tbp]
\center
\includegraphics[trim={0em 0em 0em 0em}, clip, width=0.8\textwidth]{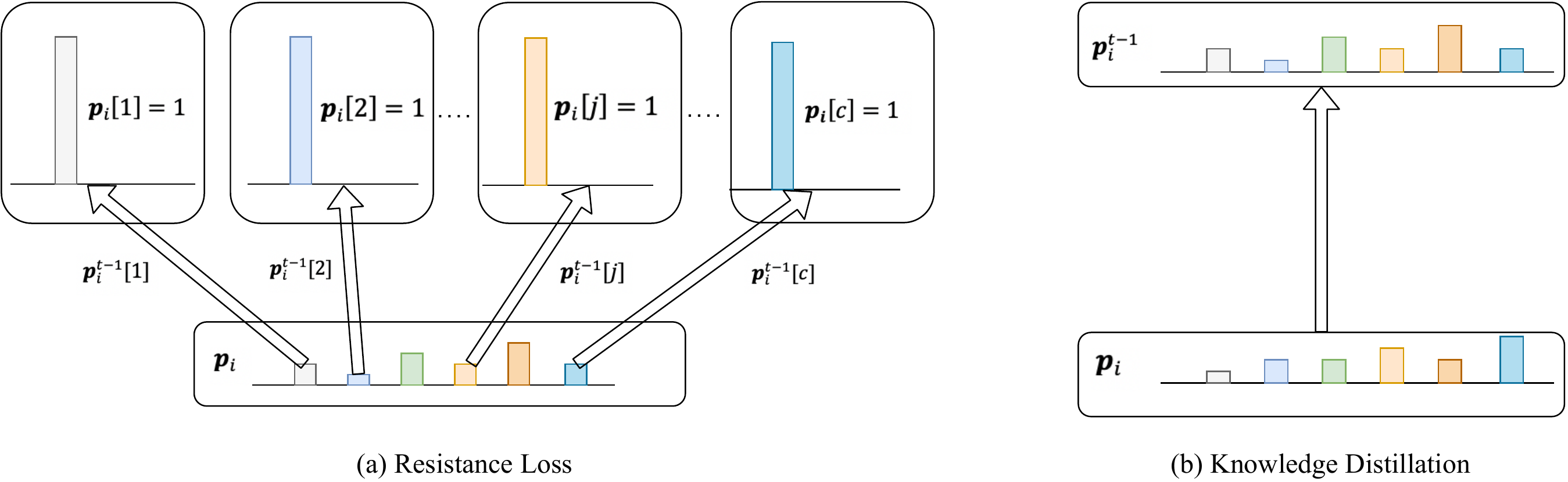}
\vspace{-0.5em}
\caption{The core idea of the proposed resistance loss and its difference from knowledge distillation. (a) Resistance loss, it contains \(c\) weighted cross-entropy losses (see Eq. (\ref{eqn:ce_ad})), \(\mathbf{p}_i^{t-1}\) is the weight, \(\mathbf{p}_{i}\) in the bottom row is the model prediction, and \(\mathbf{p}_{i}[j]=1\) (\(1\leq j\leq c\)) in the top row is the target; (b) Knowledge distillation, using the prediction \(\mathbf{p}_{i}^{t-1}\) of previous training epoch or a peer model as a teacher (see Eq. (\ref{eqn:kl}) in Section V.C). } 
\label{fig:RLKD}
\vspace{-1em}
\end{figure*}

\vspace{-0.5em}
\subsection{Resistance Loss}
Directly using Eq. (\ref{eqn:ce_curriculum}) to train CNNs is similar to self-paced learning, and the model will gradually overfit on corrupted labels, with the increasing number of training epochs. To address this problem, we propose a resistance loss using the cross entropy between model predictions of previous and current training epochs in each mini-batch. Because knowledge distillation methods \cite{hinton2015distilling} \cite{ashok2017n2n} \cite{polino2018model} \cite{lee2019overcoming} \cite{dong2019distillation} \cite{kim2020self} using model predictions as the teacher can also alleviate model overfitting, we present Fig. \ref{fig:RLKD} to illustrate the core idea of the proposed resistance loss and their differences.

Suppose that \(\mathbf{p}_{i}[j]\) is the label prediction of the sample \(\mathbf{x}_{i}\) belonging to the \(j^{th}\) class in the \(t^{th}\) training epoch, and \(\mathbf{p}_{i}^{t-1}[j] \in \mathbf{p}_{i}^{t-1} \) is the label prediction before using \(\mathbf{x}_{i}\) to update model parameters in the \(t-1^{th}\)  training epoch. We propose the resistance loss as follows:
\begin{equation}
\underset{\mathbf{w}}{min} \frac{1}{\left | \emph{B} \right |} \sum_{i\in \emph{B}} \sum_{j=1}^c  -\mathbf{p}_{i}^{t-1}[j] log(\mathbf{p}_{i}[j]).
\label{eqn:ce_ad}
\end{equation} 
Eq. (\ref{eqn:ce_ad}) is used to resist model overfitting of CNNs on corrupted labels for boosting model robustness. It is mainly inspired by: (i) Eq. (\ref{eqn:ce_ad}) might smooth the update of model parameters; (ii) Eq. (\ref{eqn:ce_ad}) tends to make \(\mathbf{p}_{i}[j]\rightarrow \frac{1}{c}\) (\(1\leq j\leq c\)). Let  \(\mathbf{p}_{i}^{t}\) be the label prediction of \(\mathbf{x}_{i}\) before using it to update model parameters in the \(t^{th}\) training epoch, to better illustrate these two motivations, we present Proposition 1 and show its proof in the following.

\newtheorem{ptheorem}{Proposition}
\begin{ptheorem}
Suppose that solving the problem in Eq. (\ref{eqn:ce_ad}) with gradient descent,  for any two entries \(\mathbf{p}_{i} [j], \mathbf{p}_{i}[k] \in \mathbf{p}_{i}\) and \(\mathbf{p}_{i}^t[j]> \mathbf{p}_{i}^t[k]\). There are only three cases between \(\frac{\mathbf{p}_{i}^t[j]}{\mathbf{p}_{i}^t[k]}\) and  \(\frac{\mathbf{p}_{i}^{t-1}[j]}{\mathbf{p}_{i}^{t-1}[k]}\): (i)  \(\frac{\mathbf{p}_{i}^{t-1}[j]}{\mathbf{p}_i^{t-1}[k]}<\frac{\mathbf{p}_{i}^{t}[j]}{\mathbf{p}_i^{t}[k]} \);  (ii)  \(\frac{\mathbf{p}_{i}^{t-1}[j] }{\mathbf{p}_{i}^{t-1}[k] } > (\frac{\mathbf{p}_{i}^{t}[j] }{\mathbf{p}_{i}^{t}[k] })^2\); (iii) \(  \frac{\mathbf{p}_{i}^{t}[j]}{\mathbf{p}_i^{t}[k]} \leq \frac{\mathbf{p}_{i}^{t-1}[j]}{\mathbf{p}_i^{t-1}[k]} \leq (\frac{\mathbf{p}_{i}^t[j]}{\mathbf{p}_{i}^t[k]})^2\). For case (i) and (ii), there exists \(\frac{\mathbf{p}_{i}[j]}{\mathbf{p}_i[k]} < \frac{\mathbf{p}_{i}^t[j]}{\mathbf{p}_i^t[k]}\) and \(\frac{\mathbf{p}_{i}^{t-1}[j]}{\mathbf{p}_{i}^{t-1}[k]}>  \frac{\mathbf{p}_{i}[j]}{\mathbf{p}_{i}[k]}> \frac{\mathbf{p}_{i}^t[j]}{\mathbf{p}_{i}^t[k]}\), respectively, thereby smoothing the update of model parameters; for case (iii), there exists \(\frac{\mathbf{p}_{i}[j]}{\mathbf{p}_{i}[k]} \leq \frac{\mathbf{p}_{i}^t[j]}{\mathbf{p}_{i}^t[k]}\leq  \frac{\mathbf{p}_{i}^{t-1}[j]}{\mathbf{p}_{i}^{t-1}[k]}\), upon which each entry in \(\mathbf{p}_{i}\) tends to be gradually equivalent, i.e. \(\mathbf{p}_i[j] = \mathbf{p}_i[k] = \frac{1}{c}\), \(\forall\)  \(1\leq j,k\leq c\).
\label{theorem:pro1}
\end{ptheorem}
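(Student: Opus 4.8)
The plan is to analyze a single gradient-descent step applied to the per-sample resistance loss \(\ell_i = \sum_{j=1}^c -\mathbf{p}_i^{t-1}[j]\log(\mathbf{p}_i[j])\) of Eq.~(\ref{eqn:ce_ad}), treating the weights \(\mathbf{p}_i^{t-1}[j]\) as fixed constants and the predictions \(\mathbf{p}_i[j]\) as the variables being updated. First I would compute the gradient \(\partial \ell_i/\partial \mathbf{p}_i[j] = -\mathbf{p}_i^{t-1}[j]/\mathbf{p}_i[j]\) and evaluate it at the pre-update state \(\mathbf{p}_i^t\), so that one descent step with step size \(\eta>0\) produces, up to the normalization constant which I can discard since the proposition only concerns ratios,
\[
\mathbf{p}_i[j] \;\propto\; \mathbf{p}_i^t[j] + \eta\,\frac{\mathbf{p}_i^{t-1}[j]}{\mathbf{p}_i^t[j]}.
\]

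Writing \(a=\mathbf{p}_i^t[j]\), \(b=\mathbf{p}_i^t[k]\), \(a'=\mathbf{p}_i^{t-1}[j]\), \(b'=\mathbf{p}_i^{t-1}[k]\) with \(a>b\) (hence \(a/b>1\)), I would form the updated ratio and factor out \(a/b\):
\[
\frac{\mathbf{p}_i[j]}{\mathbf{p}_i[k]} = \frac{a}{b}\cdot\frac{1+\eta a'/a^2}{1+\eta b'/b^2}.
\]
The central lemma is then the equivalence \(\frac{\mathbf{p}_i[j]}{\mathbf{p}_i[k]}<\frac{a}{b} \iff a'/a^2 < b'/b^2 \iff \frac{a'}{b'}<(a/b)^2\), obtained by clearing denominators. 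This single threshold \((a/b)^2\), together with the threshold \(a/b\), is precisely what carves the three cases of the statement, and I would first note they are exhaustive since \(0<a/b<(a/b)^2\) partitions the positive axis on which \(a'/b'\) lives.

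Next I would dispatch the cases. For (i), \(a'/b'<a/b<(a/b)^2\) feeds directly into the lemma to give \(\frac{\mathbf{p}_i[j]}{\mathbf{p}_i[k]}<a/b\). For (ii), \(a'/b'>(a/b)^2\) yields the reverse inequality \(\frac{\mathbf{p}_i[j]}{\mathbf{p}_i[k]}>a/b\); to pin the upper end I would separately prove \(\frac{\mathbf{p}_i[j]}{\mathbf{p}_i[k]}<a'/b'\) by cross-multiplying to the form \(ab'-a'b < \eta a'b'(a-b)/(ab)\), where the key observation is that \(a'/b'>a/b\) forces \(ab'-a'b<0\) while the competing term is strictly positive, so the bound holds for every \(\eta>0\). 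For (iii), \(a/b\le a'/b'\le(a/b)^2\) gives \(\frac{\mathbf{p}_i[j]}{\mathbf{p}_i[k]}\le a/b\le a'/b'\) from the lemma and the hypothesis. To obtain the equivalence claim \(\mathbf{p}_i[j]=\mathbf{p}_i[k]=\frac{1}{c}\), I would iterate: while a pair remains in regime (iii) its ratio is non-increasing and bounded below by \(1\), so it contracts toward \(1\), driving every coordinate toward the uniform value \(1/c\).

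I expect the main obstacle to be conceptual rather than computational: recognizing that the ratio thresholds \(a/b\) and \((a/b)^2\) in the statement arise precisely from differentiating the loss in the probability variables, whereas a logit parametrization would instead produce difference-type thresholds and would fail to match. The no-overshoot bound in case (ii) must also be argued cleanly so that it is valid for all step sizes, not just small ones. Finally, the convergence-to-uniform conclusion in case (iii) is delicate, since one step only establishes weak contraction; I would present it as the limiting tendency obtained by repeatedly applying the one-step estimate, which is the sense intended by ``gradually equivalent.''
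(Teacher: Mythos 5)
Your proposal is correct and takes essentially the same route as the paper's own proof: both differentiate the loss to obtain the one-step update \(\mathbf{p}_i[j] = \mathbf{p}_i^t[j] + \eta\,\mathbf{p}_i^{t-1}[j]/\mathbf{p}_i^t[j]\), compare the updated ratio against the thresholds \(\mathbf{p}_i^t[j]/\mathbf{p}_i^t[k]\) and its square, and treat the convergence to \(1/c\) in case (iii) as a limiting tendency of repeated weak contraction rather than a rigorous fixed-point argument. Your unified threshold lemma and the explicit all-\(\eta\) no-overshoot bound in case (ii) are merely a tidier packaging of the same cross-multiplications the paper performs (the paper's rewriting of the ratio as \(\frac{(\mathbf{p}_i^t[j])^2+\eta\,\mathbf{p}_i^{t-1}[j]}{(\mathbf{p}_i^t[k])^2+\eta\,\mathbf{p}_i^{t-1}[k]}\cdot\frac{\mathbf{p}_i^t[k]}{\mathbf{p}_i^t[j]}\) is your factorization in disguise).
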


\begin{proof}
Let \(E(\mathbf{p}_i) = \sum_{j=1}^c- \mathbf{p}_i^{t-1}[j]log(\mathbf{p}_i[j])\), taking its derivative with respect to (w.r.t) \(\mathbf{p}_i[j]\), we have:
\begin{equation} 
\frac{\partial E(\mathbf{p}_i)}{\partial \mathbf{p}_i[j]} = -\frac{\mathbf{p}_i^{t-1}[j]}{\mathbf{p}_i [j]},
\label{eqn:epj}
\end{equation}
which means that \(\bigtriangledown E(\mathbf{p}_i^{t}[j]) = -\frac{\mathbf{p}_i^{t-1}[j]}{\mathbf{p}_i^t [j]}\). Here, the entries in \(\mathbf{p}_{i}\) are independent, because Eq. (\ref{eqn:ce_ad}) is used as \(c\) weighted cross-entropy losses (please refer to Fig. \ref{fig:RLKD}a).  Note that in this paper, \(\log\) utilizes \(e\) as its base.  

If \(c=1\), then \(\mathbf{p}_{i}[j] = \mathbf{p}_{i}^{t}[j]+\eta \frac{\mathbf{p}_i^{t-1}[j]}{\mathbf{p}_i^t [j]}\), where \(\eta\) denotes the learning rate. Because \(\mathbf{p}_i^{t-1}[j]>0\), \(\mathbf{p}_i^t [j]>0\) and \(\eta>0\), \(\mathbf{p}_{i}[j]\) will gradually approximate to 1, i.e. \(-\mathbf{p}_i^{t-1}[j]log(\mathbf{p}_i[j])\rightarrow  0\).

If \(c>1\), for any two entries \(\mathbf{p}_i^{t}[j]> \mathbf{p}_i^{t}[k]\), \(1\leq j, k \leq c\), then when \(\eta>0\), there exists:
\begin{equation}
\frac{\mathbf{p}_{i}[j]}{\mathbf{p}_{i}[k]} = \frac{\mathbf{p}_i^{t}[j]+\eta \frac{\mathbf{p}_{i}^{t-1}[j]}{\mathbf{p}_i^t[j]}}{\mathbf{p}_i^{t}[k]+\eta \frac{\mathbf{p}_i^{t-1}[k]}{\mathbf{p}_i^t[k]}}.
\label{eqn:pc2}
\end{equation}

Based on Eq. (\ref{eqn:pc2}), when \(\frac{\mathbf{p}_{i}^{t-1}[j]}{\mathbf{p}_i^{t-1}[k]}<\frac{\mathbf{p}_{i}^{t}[j]}{\mathbf{p}_i^{t}[k]}\), there exists \(\frac{\frac{\mathbf{p}_{i}^{t-1}[j]}{\mathbf{p}_i^t[j]}}{\frac{\mathbf{p}_i^{t-1}[k]}{\mathbf{p}_i^t[k]}} <1\), leading to \(\frac{\mathbf{p}_{i}[j]}{\mathbf{p}_i[k]} < \frac{\mathbf{p}_{i}^t[j]}{\mathbf{p}_i^t[k]}\), thereby smoothing the update of model parameters. In addition, if \(\frac{\mathbf{p}_{i}^{t-1}[j]}{\mathbf{p}_i^{t-1}[k]}<1\), there exists \(\frac{\mathbf{p}_{i}^{t-1}[j]}{\mathbf{p}_i^{t-1}[k]}<\frac{\mathbf{p}_{i}[j]}{\mathbf{p}_i[k]} < \frac{\mathbf{p}_{i}^t[j]}{\mathbf{p}_i^t[k]}\).

When  \(\frac{\mathbf{p}_{i}^{t-1}[j] }{\mathbf{p}_{i}^{t-1}[k] } > (\frac{\mathbf{p}_{i}^{t}[j] }{\mathbf{p}_{i}^{t}[k] })^2\), i.e., \(\frac{\frac{\mathbf{p}_{i}^{t-1}[j]}{\mathbf{p}_i^t[j]}}{\frac{\mathbf{p}_i^{t-1}[k]}{\mathbf{p}_i^t[k]}} > \frac{\mathbf{p}_i^{t}[j]}{\mathbf{p}_i^{t}[k]}\), it has \(\frac{\mathbf{p}_i[j]}{\mathbf{p}_{i}[k]}>\frac{\mathbf{p}_{i}^t[j]}{\mathbf{p}_i^t[k]}\). Eq. (\ref{eqn:pc2}) equals
\(\frac{\mathbf{p}_{i}[j]}{\mathbf{p}_{i}[k]} = \frac{(\mathbf{p}_i^{t}[j])^2+\eta \mathbf{p}_{i}^{t-1}[j] } {(\mathbf{p}_i^{t}[k])^2+\eta \mathbf{p}_i^{t-1}[k]} \cdot \frac{\mathbf{p}_i^{t}[k]}{\mathbf{p}_{i}^t[j]}\) and  \(\frac{\mathbf{p}_i^{t}[k]}{\mathbf{p}_{i}^t[j]}<1\), so they suggest \(\frac{\mathbf{p}_{i}[j]}{\mathbf{p}_{i}[k]} < \frac{(\mathbf{p}_i^{t}[j])^2+\eta \mathbf{p}_{i}^{t-1}[j] } {(\mathbf{p}_i^{t}[k])^2+\eta \mathbf{p}_i^{t-1}[k]} <\frac{\mathbf{p}_{i}^{t-1}[j]}{\mathbf{p}_{i}^{t-1}[k]}\). Thus, \(\frac{\mathbf{p}_{i}^{t-1}[j]}{\mathbf{p}_{i}^{t-1}[k]}>  \frac{\mathbf{p}_{i}[j]}{\mathbf{p}_{i}[k]}> \frac{\mathbf{p}_{i}^t[j]}{\mathbf{p}_{i}^t[k]}\), which means model parameters' update would be smoothed.

When  \(\frac{\mathbf{p}_{i}^{t-1}[j] }{\mathbf{p}_{i}^{t-1}[k] } \leq (\frac{\mathbf{p}_{i}^{t}[j] }{\mathbf{p}_{i}^{t}[k] })^2\), i.e., \(\frac{\frac{\mathbf{p}_{i}^{t-1}[j]}{\mathbf{p}_i^t[j]}}{\frac{\mathbf{p}_i^{t-1}[k]}{\mathbf{p}_i^t[k]}} \leq \frac{\mathbf{p}_i^{t}[j]}{\mathbf{p}_i^{t}[k]}\), it has \(\frac{\mathbf{p}_{i}[j]}{\mathbf{p}_i[k]} \leq \frac{\mathbf{p}_{i}^t[j]}{\mathbf{p}_i^t[k]}\). With an additional constraint \(\frac{\mathbf{p}_{i}^{t-1}[j] }{\mathbf{p}_{i}^{t-1}[k] } \geq \frac{\mathbf{p}_{i}^{t}[j] }{\mathbf{p}_{i}^{t}[k] }\),  it means  \(\frac{\mathbf{p}_{i}[j]}{\mathbf{p}_i[k]} \leq \frac{\mathbf{p}_{i}^t[j]}{\mathbf{p}_i^t[k]}\leq \frac{\mathbf{p}_{i}^{t-1}[j]}{\mathbf{p}_i^{t-1}[k]} \). In this case, each entry in \(\mathbf{p}_{i}\) will gradually becomes equivalent, i.e. \(\mathbf{p}_{i}[j]=\mathbf{p}_{i}[k]\). With a constraint \(\sum_{j=1}^c \mathbf{p}_{i}^t[j]=1\), there will be \(\mathbf{p}_{i}[j]=\mathbf{p}_{i}[k]\rightarrow \frac{1}{c}\).

\noindent Therefore, Proposition \ref{theorem:pro1} is proved.
\end{proof}

\begin{figure}[tbp]
	\center
	\subfigure[MNIST]{\includegraphics[trim={3em 1em 1em 3em}, clip, width=0.24\textwidth]{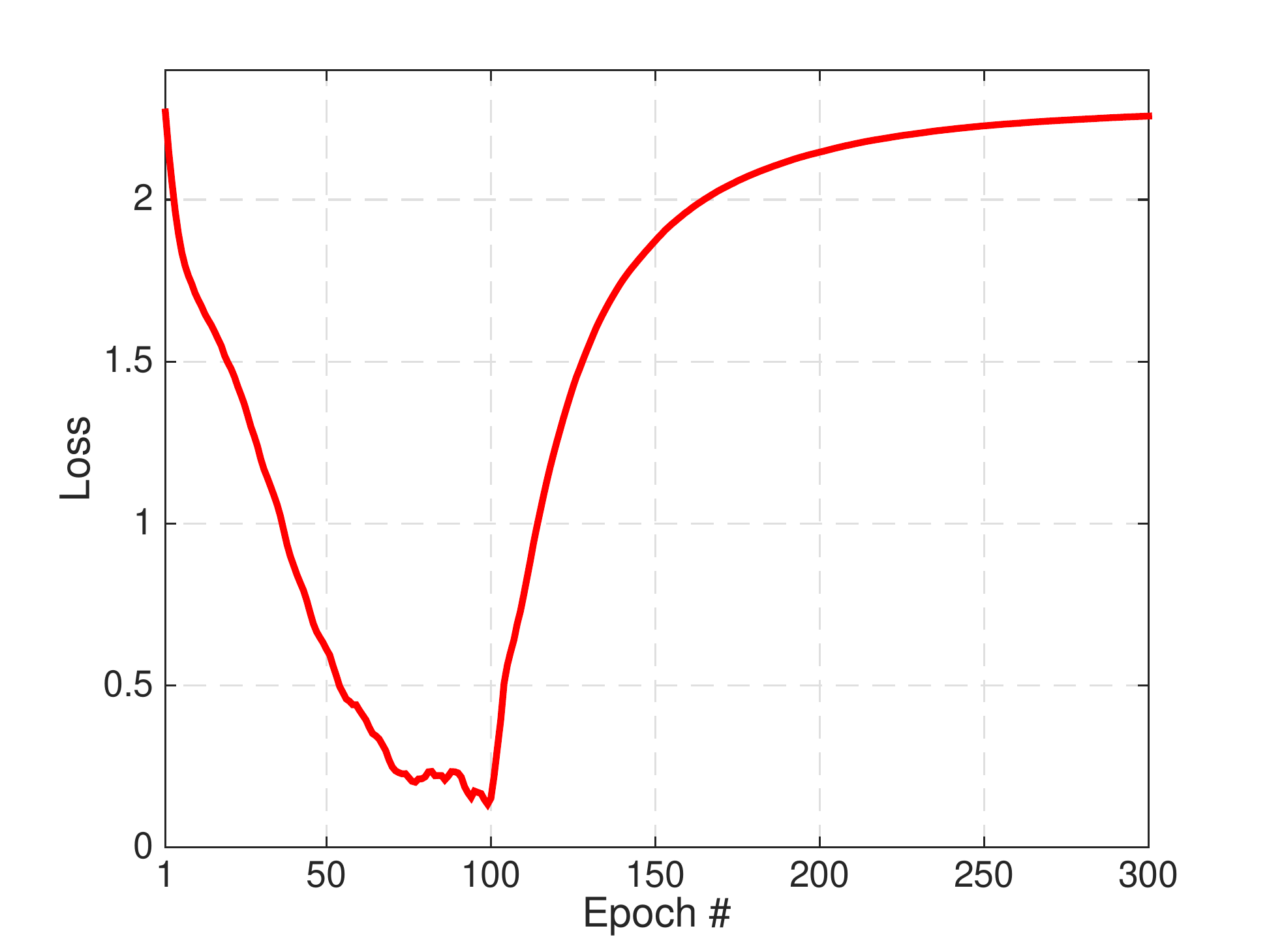}}
	\subfigure[CIFAR10]{\includegraphics[trim={3em 1em 1em 3em}, clip, width=0.24\textwidth]{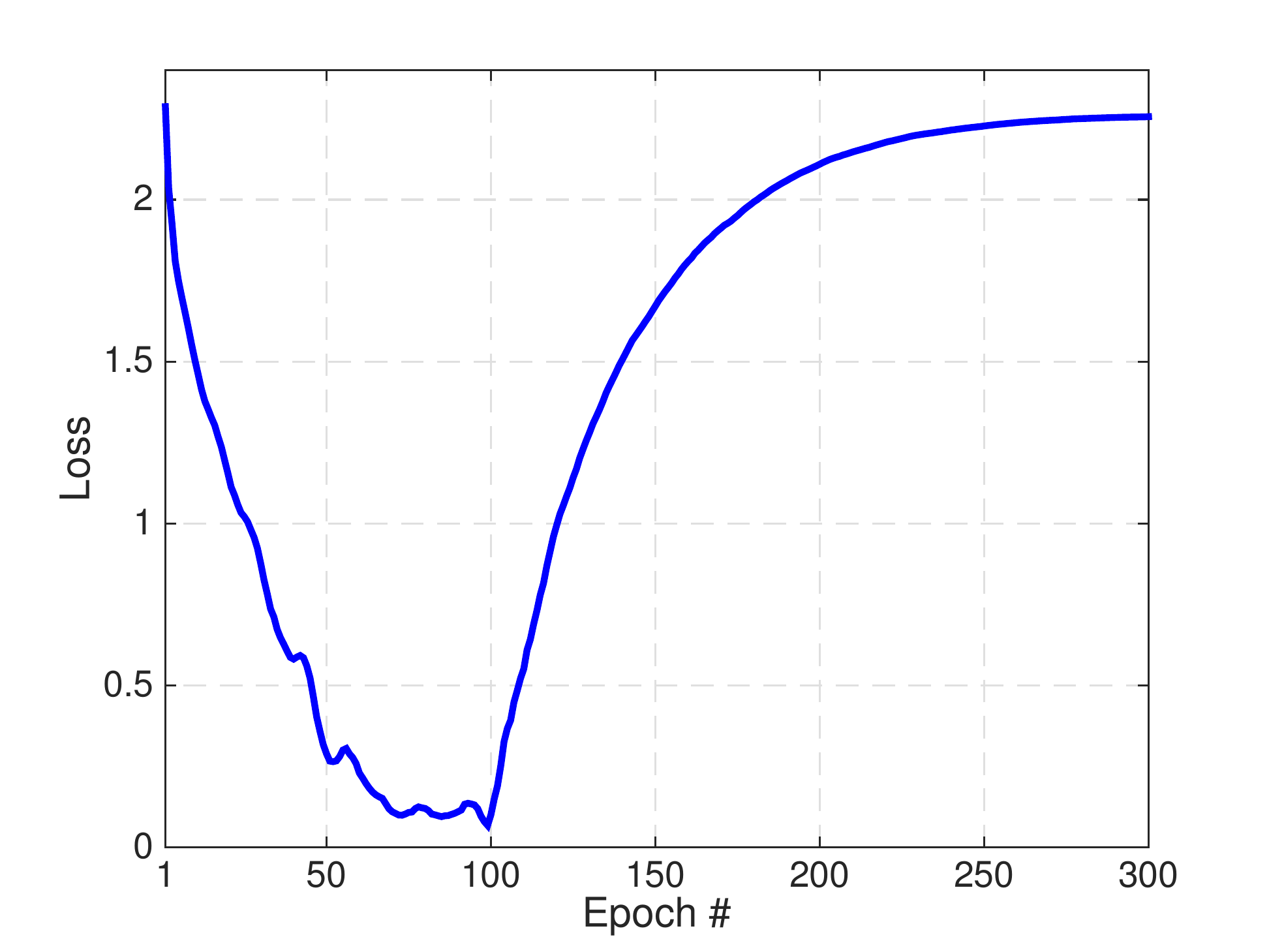}}
	\vspace{-0.5em}
	\caption{The loss of Eq. (\ref{eqn:ce_loss}) and Eq. (\ref{eqn:ce_ad}) change with the number of training epochs by using random 1000 digits from `0' to `9' in MNIST \cite{lecun1998gradient} and random 1000 images belonging to 10 categories from CIFAR10 \cite{krizhevsky2009learning} . We first utilize Eq. (\ref{eqn:ce_loss}) to train ResNet18 \cite{he2016deep} with 100 epochs, and then adopt Eq. (\ref{eqn:ce_ad}) to train the network for the subsequent 200 epochs. The loss gradually approximates to \(\sum_{j=1}^{10}-0.1*log(0.1)=2.303\). } 
	\vspace{-1em}
	\label{fig:ce}
\end{figure}

In practice, because cases (i) and (ii) in Proposition 1 smoothly update model parameters, the relationship between \(\frac{\mathbf{p}_{i}^t[j]}{\mathbf{p}_{i}^t[k]}\) and  \(\frac{\mathbf{p}_{i}^{t-1}[j]}{\mathbf{p}_{i}^{t-1}[k]}\) might gradually satisfy the case (iii), thereby causing  each entry in \(\mathbf{p}_{i}\) gradually to be equivalent. For clarity, Fig. \ref{fig:ce} presents two examples to show the change of the objective in Eq. (\ref{eqn:ce_ad}) from 101 to 300 epochs during training, where \(\mathbf{p}_{i}[j]\) and \(\mathbf{p}_{i}[k]\) (\(\forall\)  \(1\leq j,k\leq c\)) are gradually equivalent, i.e. \(\mathbf{p}_i[j] = \mathbf{p}_i[k] \rightarrow \frac{1}{c}\), so that the objective of Eq. (\ref{eqn:ce_ad}) becomes larger. This infers the case (iii) in Proposition \ref{theorem:pro1}, i.e, Eq. (\ref{eqn:ce_ad}) can gradually make each entry of probability ratios be equivalent.  Moreover, Fig. \ref{fig:diff} shows an example to display the change of prediction probability of training data on their true classes when using Eq. (\ref{eqn:ce_loss}), Eq. (\ref{eqn:ce_curriculum}) and Eq. (\ref{eqn:loss}) (Eq. (\ref{eqn:ce_curriculum})+Eq. (\ref{eqn:ce_ad})).  Fig. \ref{fig:diff} presents that the curve in Fig. \ref{fig:diff}c is smoother than that in Fig. \ref{fig:diff}a-b. This suggests that Eq. (\ref{eqn:ce_ad}) can smooth the update of model parameters. Fig. \ref{fig:diff}c also illustrates that SPRL using Eq. (\ref{eqn:ce_ad}) can resist model overfitting on corrupted labels. Note that SPRL does not distinguish correct and corrupted labels during training, thereby causing the decrease of prediction probability on correct labels in Fig. \ref{fig:diff}c.

\begin{figure*}[htbp]
	\center
	\subfigure[Standard]{\includegraphics[trim={3em 0em 4em 0em}, clip, width=0.32\textwidth]{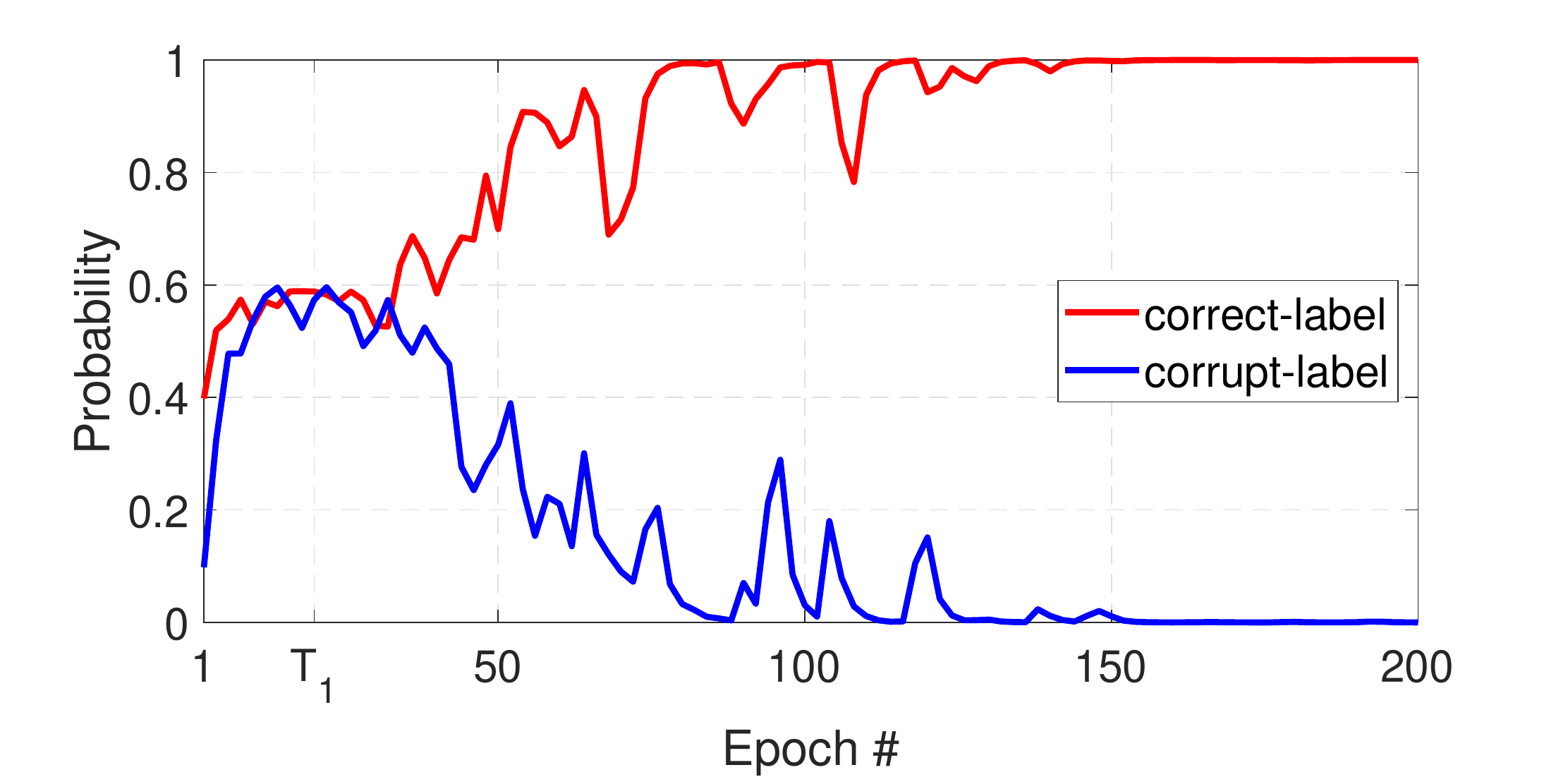}}
	\subfigure[CL]{\includegraphics[trim={3em 0em 4em 0em}, clip, width=0.32\textwidth]{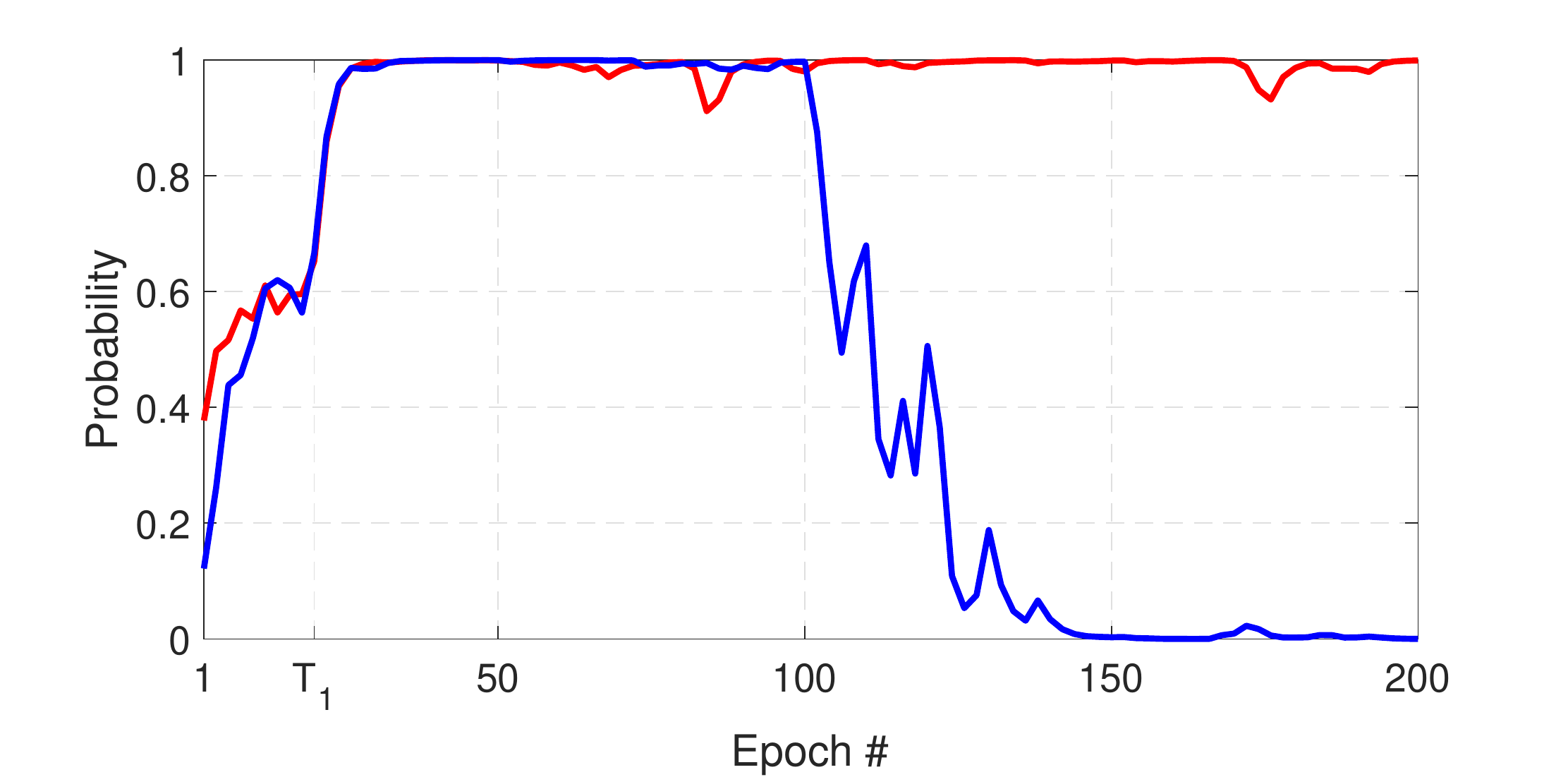}}
	\subfigure[SPRL]{\includegraphics[trim={3em 0em 4em 0em}, clip, width=0.32\textwidth]{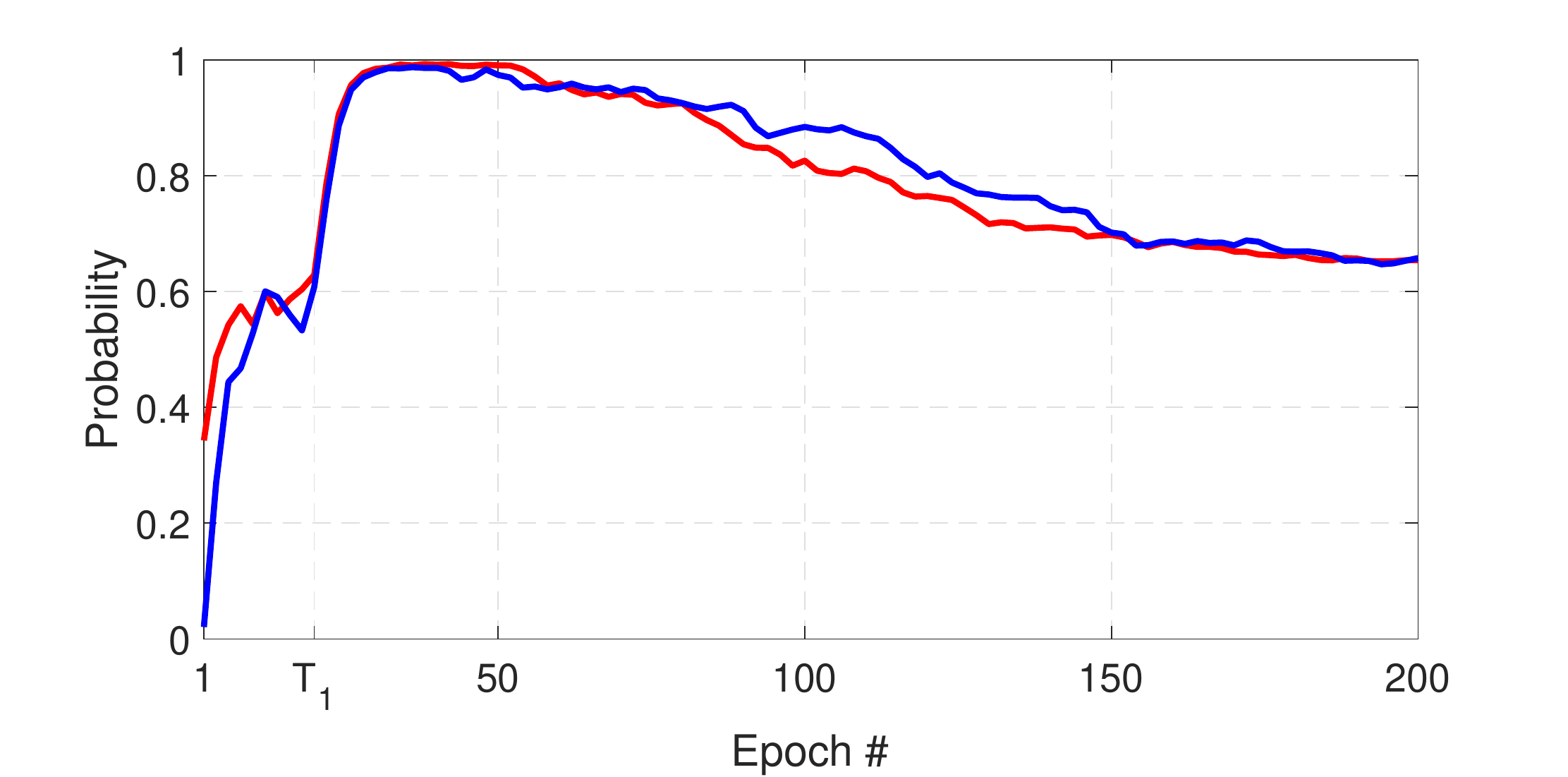}}
	\vspace{-0.5em}
	\caption{The prediction probability of training data on their true classes with respect to training epochs when using three methods: (a) Standard, (b) CL, (c) SPRL. They adopt Eq. (\ref{eqn:ce_loss}), Eq. (\ref{eqn:ce_curriculum}) and Eq. (\ref{eqn:loss}) to train ResNet18 \cite{he2016deep} on noisy-label data, respectively.  We randomly select training data from CIFAR-10 \cite{krizhevsky2009learning} and flip their labels using Eq. (\ref{subeqn:pair}).} 
	\vspace{-1em}
	\label{fig:diff}
\end{figure*}

\vspace{-0.5em}
\subsection{Self-paced Resistance Loss}
Based on the learned curriculum and the proposed resistance loss, we can obtain the loss function of the proposed framework. Specifically, combining Eq. (\ref{eqn:ce_curriculum}) with Eq. (\ref{eqn:ce_ad}), we have:
\begin{equation}
\begin{array}{ccc}
\underset{\mathbf{w}, \mathbf{v}}{min}\ E(\mathbf{w}, \mathbf{v};\lambda)= \frac{1}{\sum_{i\in \emph{B}} v_{i}}\sum_{i\in \emph{B}} -v_{i}(log(\mathbf{p}_{i}[\tilde{y}_i]) +\lambda)\\+ \frac{\gamma(t)}{\left |\emph{B} \right |} \sum_{i\in \emph{B}}\sum_{j=1}^c  -\mathbf{p}_{i}^{t-1}[j] log(\mathbf{p}_{i}[j]), \\ s.t.\ \mathbf{v}\in \left \{ 0,1 \right \}^n, \sum_{i=1}^n v_{i}=\delta (t),
\end{array}
\label{eqn:loss}
\end{equation}
where \(\gamma(t)\) is a time-dependent weighting function to gradually enhance the weight of model predictions with the increasing number of epochs, so that Eq. (\ref{eqn:ce_ad}) is mainly used to prevent model overfitting on corrupted labels. Because deep neural networks might first memorize the correct-label data and then corrupt-label samples, and the noise rate of selected samples usually increases eventually. 

There are many choices for \(\gamma(t)\). Similar to the popular ramp-up function in \cite{laine2016temporal}, we utilize the following function:
\begin{equation}
\gamma(t)=\left\{\begin{matrix}
0 & t\leq T_1 \\ 
\gamma_{max} e^{-5\left \| 1-\mu \right \|_F^2} & T_1 < t \leq T,
\end{matrix}\right.
\label{eqn:gammat}
\end{equation}
where \(\mu\) linearly ramps up from 0 to 1 during \(T-T_1\) epochs, \(\gamma_{max}\) is the maximum of  \(\gamma(t)\) depending on \(m\), e.g. \(\gamma_{max} = \gamma_d(10-\ceil{\frac{m}{0.1n}})\). This is because a larger \(\gamma_{max}\) is required for a larger noise rate. \

The optimization of Eq. (\ref{eqn:loss}) is similar to that of Eq. (\ref{eqn:spl}) and Eq. (\ref{eqn:spcl}), and thus we solve it by utilizing an alternative minimization strategy  \cite{kumar2010self}  \cite{jiang2015self} . Specifically, it can be divided into two sub-problems:
\begin{subequations}\label{eqn:subps}
 \begin{align}
 \begin{array}{cc}
     \underset{\mathbf{v}}{min} \sum_{i\in \emph{B}} -v_{i}log(\mathbf{p}_{i}^{t-1}[\tilde{y}_i]) - \lambda v_{i}, \\
      s.t. \ \mathbf{v}\in \left \{ 0,1 \right \}^n,  \sum_{i=1}^n v_{i}=\delta (t-1). \label{eqn:sub1} 
 \end{array} \\
 \begin{array}{cc}
  \underset{\mathbf{w}}{min}\frac{1}{\sum_{i \in \emph{B}}v_i} \sum_{i\in \emph{B}} -v_{i}log(\mathbf{p}_{i}[\tilde{y}_i])\\ + \frac{\gamma(t)}{\left | B \right |}\sum_{i\in \emph{B}}\sum_{j=1}^c  -\mathbf{p}_{i}^{t-1}[j] log(\mathbf{p}_{i}[j]).
  \label{eqn:sub2}
 \end{array}
 \end{align}
\end{subequations}
Eq. (\ref{eqn:sub1}) is a \(\mathbf{v}\)-subproblem, in which the model parameter \(\mathbf{w}\) is known, and it aims to learn a curriculum consisting of confident samples; Eq. (\ref{eqn:sub2}) is a \(\mathbf{w}\)-subproblem, which consists of a cross-entropy loss to utilize selected confident samples to update model parameters, and a resistance loss to resist model overfitting of CNNs on corrupted labels. We alternatively solve Eq. (\ref{eqn:sub1}) and Eq. (\ref{eqn:sub2}), i.e. fixing \(\mathbf{w}\), based on Eq. (\ref{eqn:sub1}), we can calculate \(\mathbf{v}\) as follows:
\begin{equation}
v_{i}^{\ast}=\left\{\begin{matrix}
1 & if -log(\mathbf{p}_i^{t-1}[\tilde{y}_i])< \lambda\\ 
0 & otherwise. 
\end{matrix}\right.
\label{eqn:v}
\end{equation}
Note that in each epoch we might need to adjust \(\lambda\) so that \(\sum_{i=1}^n v_i = \delta (t-1)\). Then with a fixed \(\mathbf{v}\), we can update the model parameter \(\mathbf{w}\) by solving Eq. (\ref{eqn:sub2}) via any optimizer, e.g. Adam \cite{kingma2014adam}. In summary, we present the detailed procedure to solve Eq. (\ref{eqn:loss}) in Algorithm 1.

\begin{table}[tb]
\small
\begin{tabular}{l}
\hline
\textbf{Algorithm 1: SPRL}         \\
\hline
\textbf{Input:} Training data \(\mathbf{X}=\left \{ \mathbf{x}_{i} \right \}_{i=1}^{n}\), noisy labels \(\mathbf{\tilde{y}}=\left \{ \tilde{y}_{i} \right \}_{i=1}^{n}\),  \\
number of training epochs: \(T_1\), \(T\), parameters \(\lambda\), \(K\), \(\gamma_d\) \\
piecewise linear function \(\delta (t)\), \\
stochastic neural network with parameters \(\mathbf{w}\): \(f(\cdot)\),   \\
stochastic input augmentation function: \(h(\cdot)\)   \\
\textbf{Output:} Parameters \(\mathbf{w}\) \\
\hline
1. \ \ \textbf{for} \(t\) \textbf{in} \([1, T_1]\) \textbf{do}   \\
2. \ \ \quad \textbf{for} each mini-batch \(\emph{B}\) \textbf{do}  \\
3. \ \ \qquad   \(\mathbf{p}_{i\in \emph{B}} \leftarrow f(h(\mathbf{x}_{i\in \emph{B}}))\)   \\
4. \ \ \qquad 	loss \(\leftarrow\) Eq. (\ref{eqn:ce_loss}) \\
5. \ \ \qquad updating \(\mathbf{w}\) using optimizers, e.g. Adam   \\
6. \ \ \quad \textbf{end for} \\
7. \ \ \textbf{end for} \\
8. \ \ \textbf{for} \(t\) \textbf{in} \([T_1+1, T]\) \textbf{do}   \\  
9.  \  \ \quad  \(\mathbf{v}\) \(\leftarrow\) Eq. (\ref{eqn:v}) \ \ \ \ \  \(\triangleright\)  Adjust \(\lambda\) to make \(\sum_{i=1}^n v_{i} =\delta (t-1)\)  \\
10.  \ \quad \textbf{for} each mini-batch \(\emph{B}\) \textbf{do}  \\
11.  \ \qquad   \(\mathbf{p}_{i\in \emph{B}} \leftarrow f(h(\mathbf{x}_{i\in \emph{B}}))\)   \\
12.  \ \qquad 	loss \(\leftarrow\) Eq. (\ref{eqn:sub2}) \\
13.  \ \qquad updating \(\mathbf{w}\) using optimizers, e.g. Adam   \\
14.  \ \qquad \(\mathbf{p}_{i\in \emph{B}}^{t-1} \leftarrow \mathbf{p}_{i\in \emph{B}}\)  \\
15.  \ \quad \textbf{end for} \\
16. \ \textbf{end for} \\
\hline
\end{tabular}
\vspace{-1em}
\end{table}

\vspace{-0.5em}
\section{Experiments}

To evaluate the proposed SPRL, we conduct experiments on four large-scale benchmark datasets: MNIST, CIFAR-10, CIFAR-100  and Mini-ImageNet. We briefly introduce them in the following. 

\textbf{MNIST} \cite{lecun1998gradient} consists of 70K images with handwritten digits from `0' to `9'. There are 60K training and 10K testing images, each of which has a size of \(28\times 28\). 

\textbf{CIFAR-10} \cite{krizhevsky2009learning} contains 60K color images belonging to 10 classes, each of which consists of 6K images. There are 50K training and 10K testing images. Each one is aligned and cropped to \(32\times 32\) pixels. 

\textbf{CIFAR-100} \cite{krizhevsky2009learning} has 60K color images in 100 classes, with 600 images per class. There are also 50K training and 10K testing images. Each image has a size of \(32\times 32\). 

\textbf{Mini-ImageNet} \cite{vinyals2016matching} is more complex than CIFAR-100. It is composed of 60K color images selected from the ImageNet dataset \cite{russakovsky2015imagenet}. These images belong to 100 classes, with 600 images per class. We divide them into a training set with 50K images and a testing set containing 10K images, and resize each image to \(32\times 32\).

The images in CIFAR-10, CIFAR-100 and Mini-ImageNet datasets are with the popular augmentation: random translations (\(\left \{ \triangle x,  \triangle y \right \} \sim \left [ -4, 4 \right ]\)) and horizontal flip (\(p=0.5\)), and each image in MNIST is only augmented by the random translation (\(\left \{ \triangle x,  \triangle y \right \} \sim \left [ -2, 2 \right ]\)) .

\vspace{-0.5em}
\subsection{Implementation Details}

We implement SPRL with the PyTorch framework and employ a 13-layer convoluational neural network (ConvNet) \cite{rasmus2015semi} \cite{laine2016temporal} or ResNet18 \cite{he2016deep} as the backbone network. We adopt the optimizer, Adam \cite{kingma2014adam}, to update the network parameters, with initializing the momentum parameters \(\beta_{1} =0.9\) and \(\beta_{2}=0.999\). By default, we follow \cite{han2018co} to set the maximum learning rate \(\eta\) to be 0.001, run the network for \(T=200\) epochs and set the batch size to be 128. When using ResNet18 on MNIST, we choose \(\eta =0.0001\) to avoid exploding gradient. After the first 80 epochs, \(\beta_{1}\) becomes 0.1 and the learning rate linearly decreases to 0 over the following 120 epochs. \(T_{1}\) can be obtained through a validation set. Specifically, we randomly select 10\% noisy training data to construct a validation set. \(T_{1}\) is the epoch number, at which the network attains the best validation accuracy, in order to obtain the best model predictions. When the noise rate \(\epsilon\) is not known, \(m\) is the maximum number of training data whose prediction \(\mathbf{p}_{i}[y_{i}]\geq 0.5 \ (1\leq i\leq n)\) during the first \(T_1\) epochs; when \(\epsilon\) is known, we can empirically choose \(m\) within the range of \(\left [ 0.5 (1-\epsilon)n , 0.8( 1-\epsilon)n \right ]\). Additionally, \(m\) should satisfy \(m\in \left [ 0.1n, 0.5n \right ]\), because the noise rate \(\epsilon\) is usually smaller than 0.9 and a large \(m\) might reduce the effect of curriculum learning. There are many choices for \(K\), we set \(K =10\). \(\gamma_d\) can be estimated with cross-validation on noisy validation sets.  For clarity, we present the detailed parameter settings (\(T_{1}\) and \(\gamma_d\)) of each experiment in the supplemental materials (Please refer to Tables A3-A4).

\subsection{Experimental Settings}
We compare the proposed SPRL with seven state-of-the-art algorithms. We briefly introduce them as follows: \\
\textbf{Standard}: the standard deep neural networks trained on noisy datasets. \\
\textbf{Bootstrap} \cite{reed2014training}: which corrects the label by using the weighted combination of predicted and original labels. We adopt hard labels in our experiments because they usually perform better than soft ones.\\ 
\textbf{F-correction} \cite{patrini2017making}: which utilizes a label transition matrix to correct model predictions. We employ the forward strategy, which usually yields better performance, and utilize a validation set to estimate the label transition matrix.\\
\textbf{Decoupling} \cite{malach2017decoupling}: which updates model parameters using the samples with different predictions of two classifiers. \\
\textbf{MentorNet} \cite{jiang2017mentornet}: which adopts an additional network to learn an approximate predefined curriculum and employs another network, StudentNet, for classification. We utilize self-paced MentorNet, which is used for the case that no clean validation data is known. \\
\textbf{Co-teaching} \cite{han2018co}: which trains two networks in a symmetric way and each network selects the samples with the small-loss distance as the confident data for the other one. \\
\textbf{Co-teaching+} \cite{yu2019does}: which is based on Co-teaching but using the strategy of``Update by Disagreement'' \cite{malach2017decoupling}. 

Here, we suppose that the noise rate is known in Co-teaching and Co-teaching+, but the noise rate is unknown in the proposed SPRL. 
For fairness, we re-implement all the seven state-of-the-art algorithms with the PyTorch framework based on their provided public codes and utilize their default parameter settings. Additionally, they adopt the same backbone networks and training procedure as SPRL.

\begin{table}[tbp]
\scriptsize 
\centering
\caption{The best testing accuracy (\%) of eight different methods on MNIST, CIFAR-10, CIFAR-100 and Mini-ImageNet with clean training data (\(\epsilon=0\)). We bold the best accuracy and its similar results (within 0.5\%).}
\vspace{-1em}
\begin{tabular}{|c|c|c|c|c|}
\hline
\multirow{3}*{Method}  & \multicolumn{4}{c|}{ResNet18}  \\
 \cline{2-5}
 & MNIST& CIFAR-10 & CIFAR-100& Mini-ImageNet   \\
\hline
Standard &\(\mathbf{99.63}\)     &93.06&72.35   & 55.26 \\
\hline
Boostrap  &\(\mathbf{99.65}\)  &\(\mathbf{94.25}\)  & 73.03 &58.27  \\
\hline
F-correction &\(\mathbf{99.65}\)      &\(\mathbf{94.08}\)  &72.92   &58.03  \\
\hline
Decoupling  &\(\mathbf{99.68}\)   &92.10  &69.87  & 47.74 \\
\hline
MentorNet  &\(\mathbf{99.65}\)   & \(\mathbf{93.87}\)  &70.46   &53.48  \\
\hline
Co-teaching  &\(\mathbf{99.58}\)   &92.88    &72.68 & 57.04  \\
\hline
Co-teaching+  & \(\mathbf{99.58}\)   &  93.17  & 70.48 & 55.88  \\
\hline
\textbf{SPRL}  &\(\mathbf{99.67}\)   &\(\mathbf{94.20}\)  &\(\mathbf{73.88}\)    &\(\mathbf{63.04}\)  \\
\hline
  & \multicolumn{4}{c|}{ConvNet} \\
\cline{2-5}
 & MNIST& CIFAR-10 & CIFAR-100& Mini-ImageNet   \\
\hline
Standard      &\(\mathbf{99.67}\)   &92.66  &71.04  & 53.82  \\
\hline
Boostrap     & \(\mathbf{99.69}\)  &\(\mathbf{93.57}\)  &72.34  &56.36  \\
\hline
F-correction     &\(\mathbf{99.69}\)  &\(\mathbf{93.76}\)  &\(\mathbf{73.01}\)  & 58.66 \\
\hline
Decoupling     & \(\mathbf{99.48}\)  &92.35  &70.03  & 47.68  \\
\hline
MentorNet    &\(\mathbf{99.71}\)  & 92.15 & 69.23  & 55.40  \\
\hline
Co-teaching    &\(\mathbf{99.74}\)   & \(\mathbf{93.60}\) &72.04 &58.57 \\
\hline
Co-teaching+  & \(\mathbf{99.69}\)   &  92.74  & 70.96 &59.10  \\
\hline
\textbf{SPRL}    &\(\mathbf{99.69}\)  & 92.76 &72.23  & \(\mathbf{60.27}\) \\
\hline
\end{tabular}
\label{table:clean}
\vspace{-1.5em}
\end{table}

\begin{table*}[tbp]
\scriptsize 
\centering
\caption{Average of testing accuracy (\%) on MNIST, CIFAR-10, CIFAR-100 and Mini-ImageNet over the last ten epochs. We bold the best results and highlight the second best ones via underlines.}
\vspace{-1em}
\begin{tabular}{|c|c|c|c|c|c|c|c|c|}
\hline
\multirow{3}*{Method}  & \multicolumn{4}{c|}{ResNet18} & \multicolumn{4}{c|}{ConvNet}  \\      
 \cline{2-9}
 & \multicolumn{3}{c|}{Symmetry}   & Pair & \multicolumn{3}{c|}{Symmetry}   & Pair    \\
  \cline{2-9}
   & \(\epsilon=0.2\) &\(\epsilon=0.5\)  & \(\epsilon=0.8\)   & \(\epsilon=0.45\)   & \(\epsilon=0.2\) &\(\epsilon=0.5\)  & \(\epsilon=0.8\)   & \(\epsilon=0.45\)  \\
 \hline
   \multicolumn{9}{|c|}{MNIST}  \\
\hline
Standard    & \(92.69\pm 0.18\) & \(65.49\pm 0.33\) & \(24.59\pm 0.18\) & \(58.50\pm 0.34\) & \(86.84\pm 0.27\)   & \(60.80\pm 0.59\)  & \(24.80\pm 0.51\)  & \(57.14\pm 0.56\)  \\
\hline
Boostrap  & \(93.89\pm 0.08\)  & \(66.48\pm 0.63\)   & \(24.38\pm 0.35\) & \(59.92\pm 0.52\) & \(91.48\pm 0.16\)  & \(61.05\pm 0.69\)  & \(21.11\pm 0.32\) & \(55.51\pm 0.72\)  \\
\hline
F-correction  & \(97.08\pm 0.11\) &\(92.86\pm 0.14\)   &\(40.93\pm 0.30\) & \(10.32\pm 0.01\) & \(87.12\pm 0.19\)  & \(66.36\pm 0.45\)  & \(58.17\pm 0.54\)  & \(57.70\pm 0.64\)  \\
\hline
Decoupling  & \(95.70\pm 0.64\)  & \(72.60\pm 4.17\)  & \(27.00\pm 0.52\) & \(71.58\pm 2.35\) & \(96.26\pm 0.24\)  & \(88.93\pm 0.40\)  & \(71.02\pm 0.39\) &\(61.22\pm 2.34\) \\
\hline
MentorNet & \(93.51\pm 0.01\) &\(83.10\pm 0.01\)   & \(24.96\pm 0.01\) & \(82.51\pm 0.01\) & \(95.78\pm 0.01\)  & \(92.44\pm 0.01\)  & \(47.55\pm 0.01\) & \(73.67\pm 0.01\) \\
\hline
Co-teaching  & \(96.89\pm 0.11\)  & \(91.01\pm 0.13\)  & \(\underline{75.92\pm 0.47}\) & \(\underline{87.44\pm 0.33}\) & \(98.91\pm 0.04\)   & \(96.55\pm 0.07\)  & \(\underline{89.54\pm 0.26}\) & \(93.64\pm 0.14\)   \\
\hline
Co-teaching+  & \(\underline{99.04\pm 0.02}\)  & \(\underline{94.69\pm 0.15}\) & \(38.34\pm 1.23\) & \(87.36\pm 0.39\) & \(\mathbf{99.56\pm 0.01}\) &\(\underline{99.15\pm 0.02}\) &  \(77.77\pm 0.03\)& \(\underline{97.25\pm 0.17}\)   \\
\hline
\textbf{SPRL}  & \(\mathbf{99.58\pm 0.01}\)  & \(\mathbf{99.53\pm 0.01}\) &\(\mathbf{98.35\pm 0.16}\) & \(\mathbf{99.30\pm 0.01}\) &\(\mathbf{99.56\pm 0.01}\)   & \(\mathbf{99.43\pm 0.01}\)  & \(\mathbf{97.52\pm 0.04}\) & \(\mathbf{99.28\pm 0.01}\)   \\
\hline
   \multicolumn{9}{|c|}{CIFAR-10}  \\
\hline
Standard &\(79.47\pm 0.24\)   &\(45.37\pm 0.53\) &\(10.01\pm 0.19\)  &\( 51.48\pm 0.80\) & \(77.82\pm 0.27\)  & \(48.11\pm 0.42\)  &\(22.31\pm 0.34\)  & \(50.73\pm 0.62\)  \\
\hline
Boostrap  &\(83.39\pm 0.26\)  &\(57.24\pm 0.48\)    &\(17.96\pm 0.27\) &\(51.95\pm 0.57\)  & \(75.34\pm 0.84\)   &\(47.37\pm 0.74\) & \(18.00\pm 0.53\) & \(51.11\pm 0.74\)  \\
\hline
F-correction  &\(80.11\pm 0.16\) &\(45.92 \pm 0.65\) & \(6.51\pm 0.19\)  & \(52.35\pm 0.48\) & \(84.26\pm 0.21\)  &\(62.90\pm 0.42\)  & \(11.58\pm 0.16\)  & \(61.98\pm 0.40\)  \\
\hline
Decoupling  & \(76.60\pm 1.25\)  & \(53.63\pm 1.21\)  & \(16.18\pm 0.17\) &\(50.22\pm 2.58\) &\(83.49\pm 0.19\) & \(68.73\pm 0.27\)  &\(\underline{40.16\pm 0.30}\)  & \(50.61\pm 3.12\)\\
\hline
MentorNet &\(80.04\pm 0.21\)&\(53.20\pm 0.16\)&\(\underline{42.02\pm 0.19}\) &\(49.93\pm 0.11\) & \(81.80\pm 0.01\)   & \(73.62\pm 0.14\) & \(27.90\pm 0.05\) & \(52.96\pm 0.02\)\\
\hline
Co-teaching  & \(88.34\pm 0.23\)  & \(81.64\pm 0.19\)  & \(32.39\pm 0.26\) & \(\underline{79.09\pm 0.43}\) & \(86.40\pm 2.58\) &\(83.06\pm 0.15\)  & \(28.39\pm 0.31\) & \(\underline{80.21\pm 0.58}\)   \\
\hline
Co-teaching+  & \(\underline{90.87\pm 0.10}\) &\(\underline{83.52\pm 0.10}\)   &\(23.18\pm 0.10\) & \(60.07\pm 0.56\) & \(\underline{90.43\pm 0.10}\)& \(\underline{85.87\pm 0.08}\)& \(20.41\pm 0.04\)  & \(77.51\pm 0.22\)  \\
\hline
\textbf{SPRL}  &\(\mathbf{92.68\pm 0.03}\) & \(\mathbf{88.25\pm0.06}\)   &\(\mathbf{57.50\pm0.11}\)   &\(\mathbf{91.89\pm 0.06}\) & \(\mathbf{90.47\pm 0.06}\)   & \(\mathbf{85.99\pm 0.05}\)  & \(\mathbf{60.42\pm 0.12}\)  & \(\mathbf{83.69\pm 0.12}\)   \\
\hline
   \multicolumn{9}{|c|}{CIFAR-100}  \\
\hline
Standard    &\(54.14\pm0.21\)  &\(28.73\pm 0.19\)  & \(7.03\pm 0.10\)  &\(35.24\pm 0.08\)& \(49.71\pm 0.44\)  & \(23.87\pm 0.15\)  & \(9.37\pm 0.11\) & \(34.83\pm 0.30\)  \\
\hline
Boostrap  &\(55.94\pm 0.29\) &\( 31.37\pm 0.22\)  &\( 7.48\pm 0.10\) &\(37.07\pm 0.21\)  & \(50.51\pm 0.25\)   & \(25.21\pm 0.18\) & \(9.66\pm 0.15\)  & \(34.43\pm 0.21\) \\
\hline
F-correction  &\(56.32\pm 0.12\)  &\(37.73\pm 0.08\)    & \(9.09\pm 0.08\)&\(37.79\pm 0.19\)  & \(54.42\pm 0.13\)   & \(33.19\pm 0.11\)  & \(5.54\pm 0.08\) & \(38.70\pm 0.34\)  \\
\hline
Decoupling  &\(54.56\pm 0.61\)  & \(30.51\pm 0.36\)  & \(7.37\pm 0.09\) & \(36.74\pm 0.28\)  & \(53.99\pm 0.17\)  & \(32.84\pm 0.10\) & \(14.81\pm 0.09\) & \(37.31\pm 0.23\)\\
\hline
MentorNet & \(52.11\pm 0.16\) & \(26.71\pm 0.16\)& \(12.76\pm 0.08\) &\(33.92\pm 0.14\) &  \(52.70\pm 0.01\) &\(38.75\pm 0.02\) & \(11.02\pm 0.01\) & \(31.87\pm 0.01\) \\
\hline
Co-teaching  & \(62.71\pm 0.13\) & \(48.14\pm 0.15\)   & \(15.94\pm 0.10\) & \(\underline{39.49\pm 0.23}\) & \(66.30\pm 0.43\)   &  \(57.29\pm 0.13\) &\(\underline{19.96\pm 0.17}\)  & \(37.50\pm 0.19\)   \\
\hline
Co-teaching+  & \(\underline{66.41\pm 0.12}\)  &\(\underline{51.65\pm 0.13}\)   &\(\underline{19.17\pm 0.09}\) &\(34.96\pm 0.28\)  &\(\mathbf{69.00\pm 0.12}\) & \(\underline{59.79\pm 0.13}\) &\(11.57\pm 0.07\)   &\(\underline{43.21\pm 0.21}\)   \\
\hline
\textbf{SPRL}  &\(\mathbf{70.93\pm 0.06}\) & \(\mathbf{59.31\pm 0.07}\)   &\(\mathbf{28.53\pm 0.10}\) &\(\mathbf{53.59\pm 0.06}\) & \(\underline{67.65\pm 0.10}\)   & \(\mathbf{59.81\pm 0.12}\) & \(\mathbf{35.82\pm 0.14}\) & \(\mathbf{47.26\pm 0.11}\)   \\
\hline
   \multicolumn{9}{|c|}{Mini-ImageNet}  \\
\hline
Standard    & \(34.07\pm 0.18\)  & \(16.17\pm 0.21\)  & \(3.55\pm 0.10\) & \(22.78\pm 0.16\) & \(38.06\pm 0.31\)   & \(19.61\pm 0.25\) & \(\underline{7.96\pm 0.11}\) & \(26.69\pm 0.16\)  \\
\hline
Boostrap  & \(34.91\pm  0.39\)  & \(18.22\pm 0.19\)  & \(4.14\pm 0.18\)  & \(24.28\pm 0.27\) & \(38.73\pm 0.38\)  & \(19.12\pm 0.15\)  & \(5.75\pm 0.11\)  & \(27.60\pm 0.22\) \\
\hline
F-correction  & \(31.81\pm 0.14\)  & \(12.29\pm 0.10\)  & \(2.13\pm 0.04\) & \(6.13\pm 0.07\) & \(33.45\pm 0.18\)   & \(26.96\pm 0.08\)  & \(2.24\pm 0.03\)   & \(5.04\pm 0.06\)  \\
\hline
Decoupling  & \(33.38\pm 0.17\)  & \(16.63\pm 0.12\)   & \(4.26\pm 0.06\) & \(22.78\pm 0.19\) & \(30.37\pm 0.15\)   & \(15.46\pm 0.17\)  & \(6.21\pm 0.05\)  & \(24.12\pm 0.11\)\\
\hline
MentorNet & \(29.19\pm  0.01\)  & \(14.14\pm 0.01\) &\(1.06\pm 0.01\)   & \(21.89\pm 0.01\) & \(43.47\pm 0.02\)  & \(31.09\pm 0.01\)  & \(1.80\pm 0.01\)  & \(27.01\pm 0.01\) \\
\hline
Co-teaching  & \(48.84\pm 0.09\)  & \(\underline{36.98\pm 0.17}\) & \(5.86\pm 0.11\) & \(\underline{29.21\pm 0.11}\) & \(53.62\pm 0.13\)   &  \(43.54\pm 0.15\) & \(5.51\pm 0.05\)  & \(30.68\pm 0.16\)  \\
\hline
Co-teaching+  &\(\underline{51.13\pm0.14}\)   &\(36.86\pm 0.23\)    & \(\underline{7.23\pm 0.05}\) & \(27.46\pm 0.07\)  &\(\underline{54.99\pm 0.13}\) & \(\underline{45.02\pm 0.23}\)& \(6.06\pm 0.06\) & \(\underline{33.94\pm 0.15}\)  \\
\hline
\textbf{SPRL}  & \(\mathbf{57.24\pm 0.09}\) & \(\mathbf{47.66\pm 0.11}\)  & \(\mathbf{20.77\pm 0.09}\) & \(\mathbf{39.53\pm 0.07}\)  & \(\mathbf{55.32\pm 0.08}\)   & \(\mathbf{46.32\pm 0.13}\)   & \(\mathbf{24.40\pm 0.08}\)  & \(\mathbf{37.78\pm 0.12}\)   \\
\hline
\end{tabular}
\label{table:noisy}
\vspace{-1.5em}
\end{table*}

\vspace{-0.5em}
\subsection{Experiments on Labels with Symmetry and Pair Flipping}

Following \cite{patrini2017making} \cite{han2018co}, we corrupt the four datasets manually via a label transition matrix \(\mathbf{Q}\) that is calculated by \(q_{ij} = Pr(\tilde{y}=j|y=i)\), where the noisy label \(\tilde{y}\) is flipped from the correct label \(y\). Similar to \cite{han2018co}, here \(\mathbf{Q}\) has two representative structures: symmetric flipping (class-independent noise) and pair flipping (class-dependent noise). For clarity, we present the definition of \(\mathbf{Q}\) with symmetric and pair flipping structures in Eqs. (\ref{subeqn:symmetric}) and (\ref{subeqn:pair}), respectively.
It is worth noting that for symmetric flipping, the noise rate \(\epsilon\) should be smaller than \(\frac{c-1}{c}\), i.e. \(\epsilon<\frac{c-1}{c}\); for pair flipping, \(\epsilon <0.5\) so that more than half of labels are correct.  Note that  the noise rate \(\epsilon\) denotes the ratio of corrupted labels in the whole training data.

\begin{subequations}\label{eqn:noise}
 \begin{align}
Q= \begin{bmatrix}
 1-\epsilon & \frac{\epsilon}{c-1}  & \cdots  & \frac{\epsilon}{c-1}   &\frac{\epsilon}{c-1}  \\ 
\frac{\epsilon}{c-1} &  1-\epsilon  & \frac{\epsilon}{c-1} & \cdots &\frac{\epsilon}{c-1} \\ 
\vdots & &  \ddots & & \vdots \\ 
 \frac{\epsilon}{c-1}& \cdots & \frac{\epsilon}{c-1} &  1-\epsilon & \frac{\epsilon}{c-1}\\ 
 \frac{\epsilon}{c-1}& \frac{\epsilon}{c-1} & \cdots &\frac{\epsilon}{c-1}  &  1-\epsilon
\end{bmatrix} 
\label{subeqn:symmetric}  \\
Q= \begin{bmatrix}
 1-\epsilon & \epsilon  & 0  & \cdots   & 0   \\ 
0 &  1-\epsilon  & \epsilon &  & 0 \\ 
\vdots & &  \ddots & & \vdots \\ 
 0 &  &  &  1-\epsilon & \epsilon\\ 
 \epsilon& 0 & \cdots & 0  &  1-\epsilon
\end{bmatrix} 
 \label{subeqn:pair}
 \end{align}
\end{subequations}

\vspace{-0.5em}
\subsubsection{Experimental Results and Analysis}
To better illustrate the strength of the proposed SPRL, we first run all the eight methods with clean training data of the four datasets, and then present their best testing accuracy in Table \ref{table:clean}. As we can see, SPRL can achieve better or very competitive testing accuracy to the best competitors when using clean training  data, and it consistently outperforms Standard, especially for more difficult datasets CIFAR-100 and Mini-ImageNet.  A main possible reason is that the proposed method could reduce overfitting caused by outliers. This finding is very important. It shows that the proposed method has a wide range of applications.

Table \ref{table:noisy} shows the average of testing accuracy of the proposed SPRL and seven compared algorithms on MNIST, CIFAR-10, CIFAR-100 and Mini-ImageNet over the last ten epochs. It illustrates that SPRL significantly outperforms the other seven algorithms on the four datasets, especially on extremely noisy labels. For example, when using ResNet18, for symmetric flipping with \(\epsilon=0.8\), the average accuracy of SPRL is 22.43\%, 15.48\%, 9.36\% and 13.54\%  higher than the best competitors on the four datasets, respectively; for pair flipping with \(\epsilon=0.45\), its accuracy is 11.86\%, 12.80\%, 14.10\% and  10.32\% higher than the best competitors on the four datasets, respectively. The superior accuracy of SPRL over the others can also be observed when using ConvNet. Note that, the implementation results of Co-teaching with ConvNet are significantly better than the reported ones in \cite{han2018co}. Because we utilize the data augmentation, which boosts the model performance. Moreover, we present testing accuracy of the eight methods at different numbers of training epochs on the four datasets in the supplemental materials (please see Figs. A3-A6), which further illustrate that SRL can obtain the best accuracy among all methods on two different network architectures, and its accuracy is much smoother than that of the others during training.

\renewcommand{\thesubfigure}{\relax}
\begin{figure*}[htbp]
\centering
\subfigure[(a) Symmetry \(\epsilon=0.2\)]{\includegraphics[trim={0.5em 0em 2em 1em}, clip, width=0.24\textwidth]{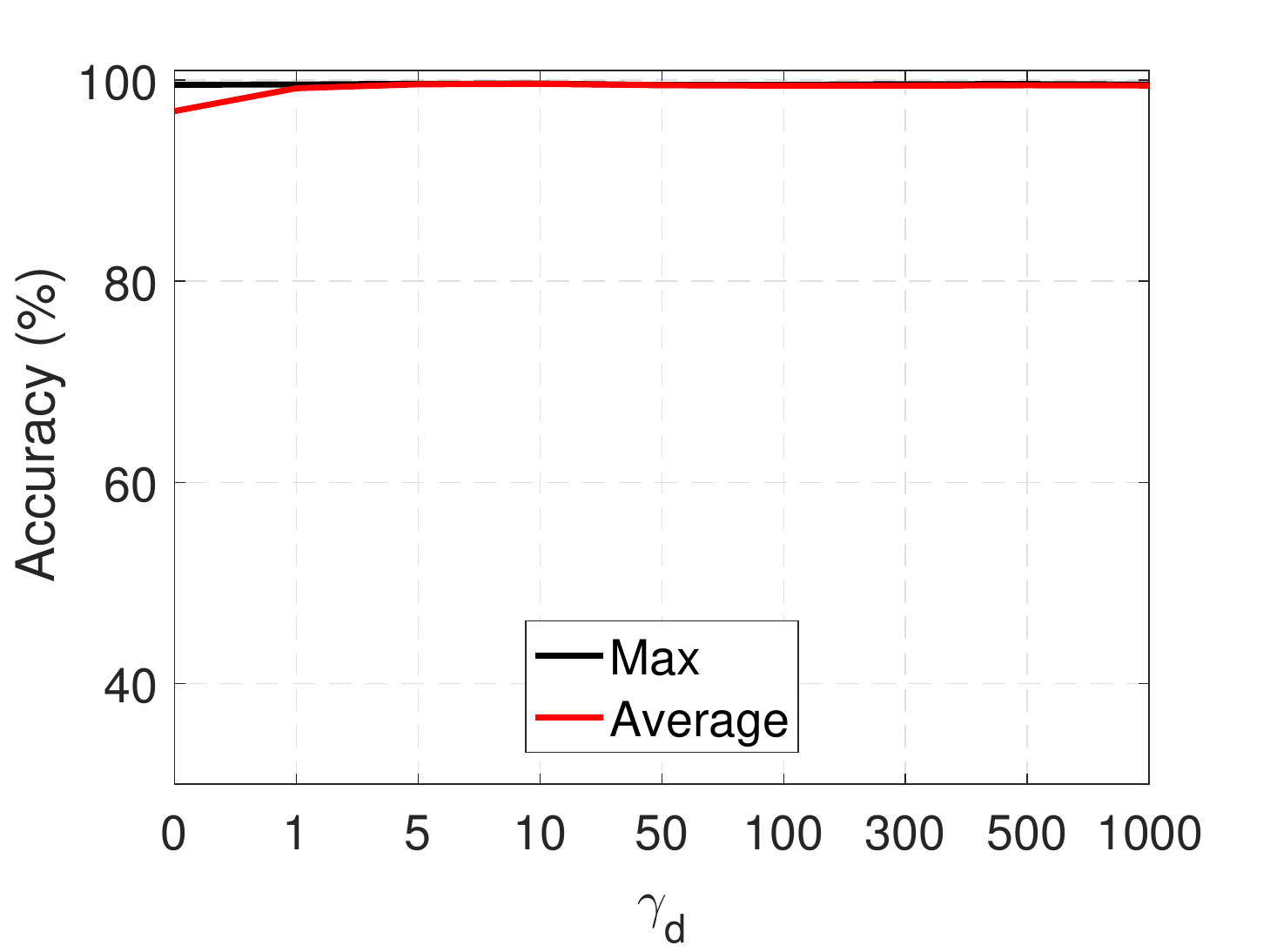}}
\subfigure[(b) Symmetry \(\epsilon=0.5\)]{\includegraphics[trim={0.5em 0em 2em 1em}, clip, width=0.24\textwidth]{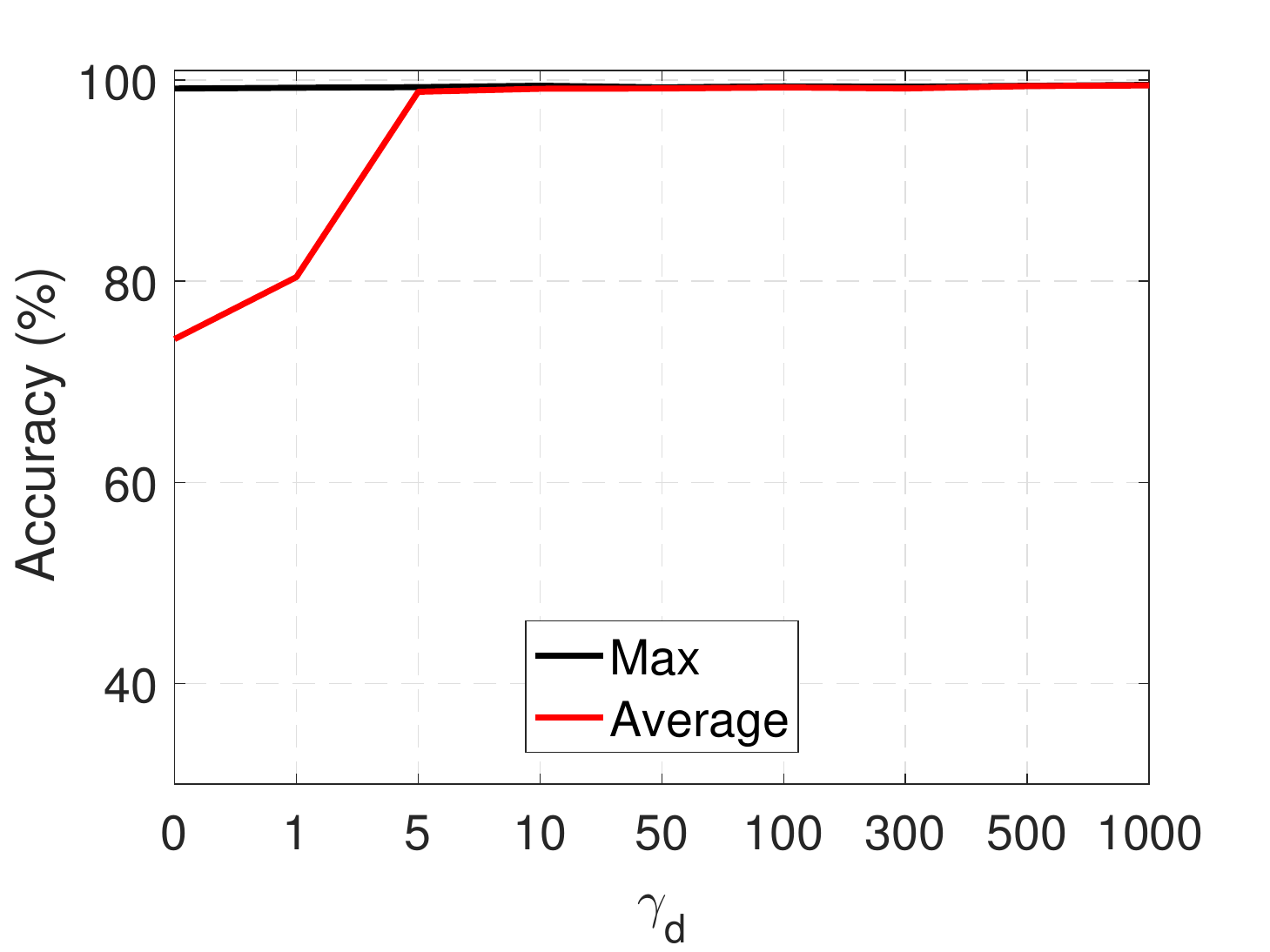}}
\subfigure[(c) Symmetry \(\epsilon=0.8\)]{\includegraphics[trim={0.5em 0em 2em 1em}, clip, width=0.24\textwidth]{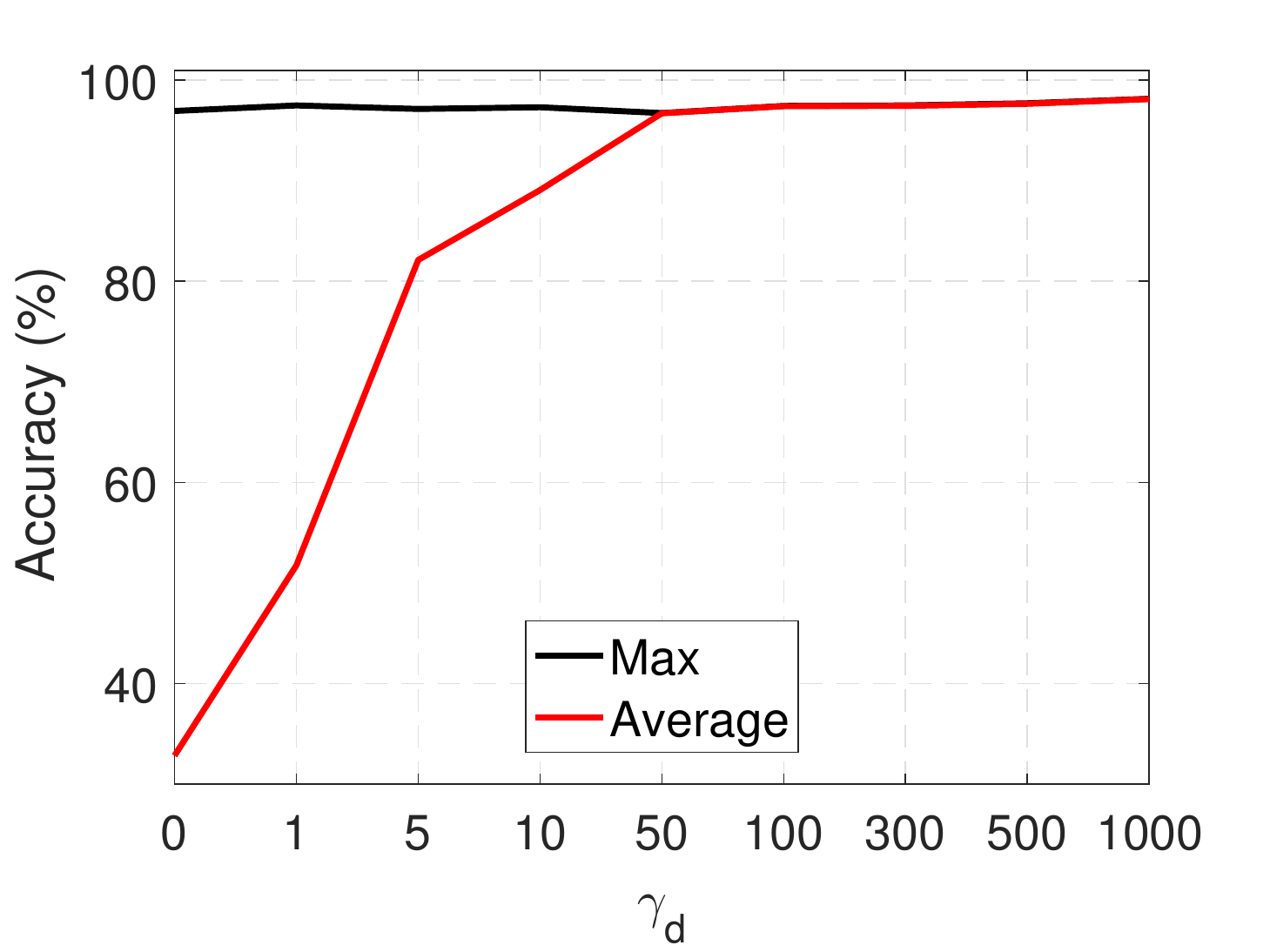}}
\subfigure[(d) Pair \(\epsilon=0.45\)]{\includegraphics[trim={0.5em 0em 2em 1em}, clip, width=0.24\textwidth]{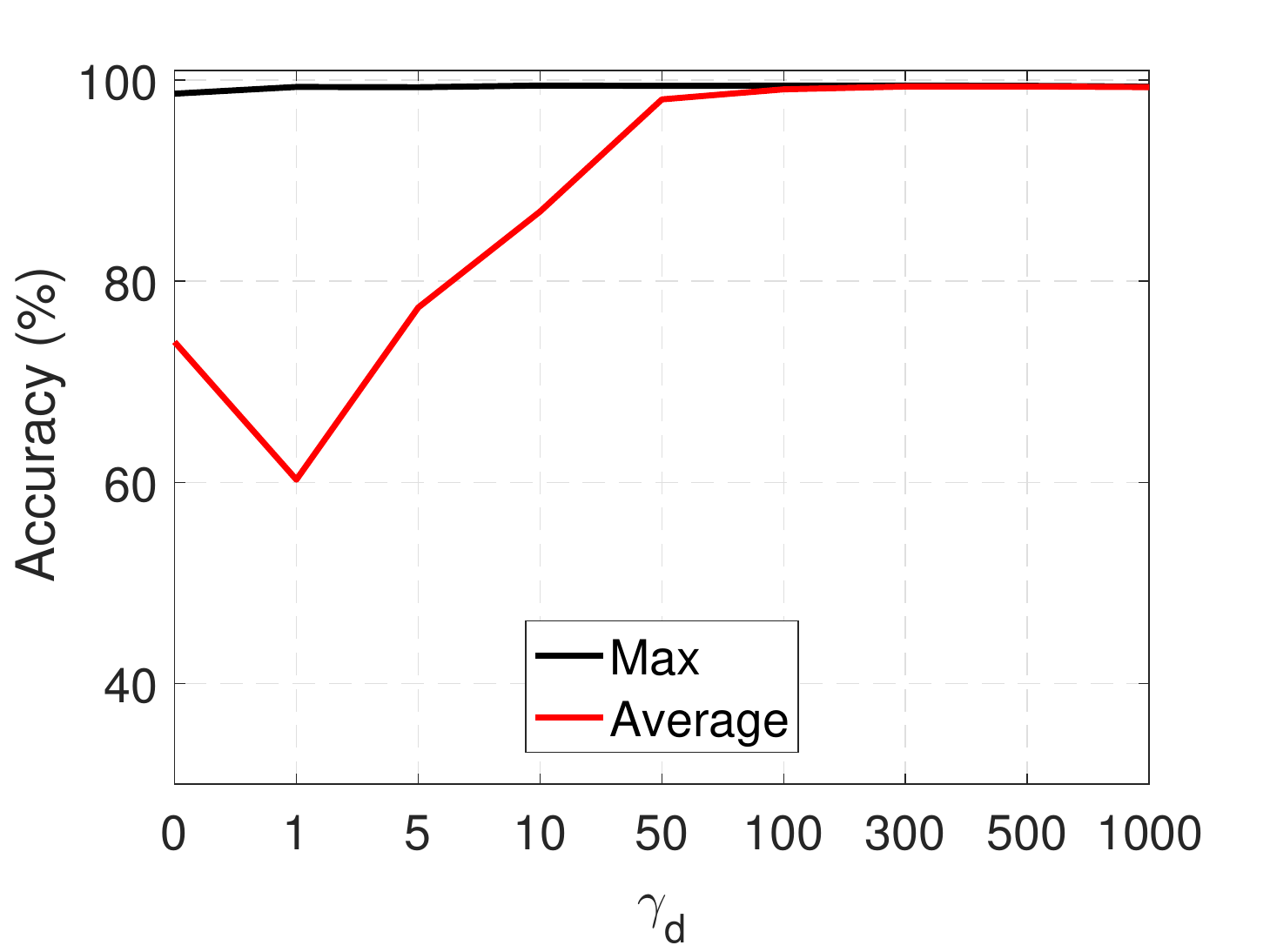}}
\vspace{-0.5em}
\caption{Testing accuracy of SPRL with different values of \(\gamma_d\) on MNIST when \(T_{1}=15\). `Average' means the average of testing accuracy over the last ten epochs, and `Max' denotes the maximum of testing accuracy among all training epochs.} 
\label{fig:mnist_gamma}
\vspace{-1.2em}
\end{figure*}

\renewcommand{\thesubfigure}{\relax}
\begin{figure*}[htbp]
\centering
\subfigure[(a) Symmetry \(\epsilon=0.2\)]{\includegraphics[trim={0.5em 0em 2em 1em}, clip, width=0.24\textwidth]{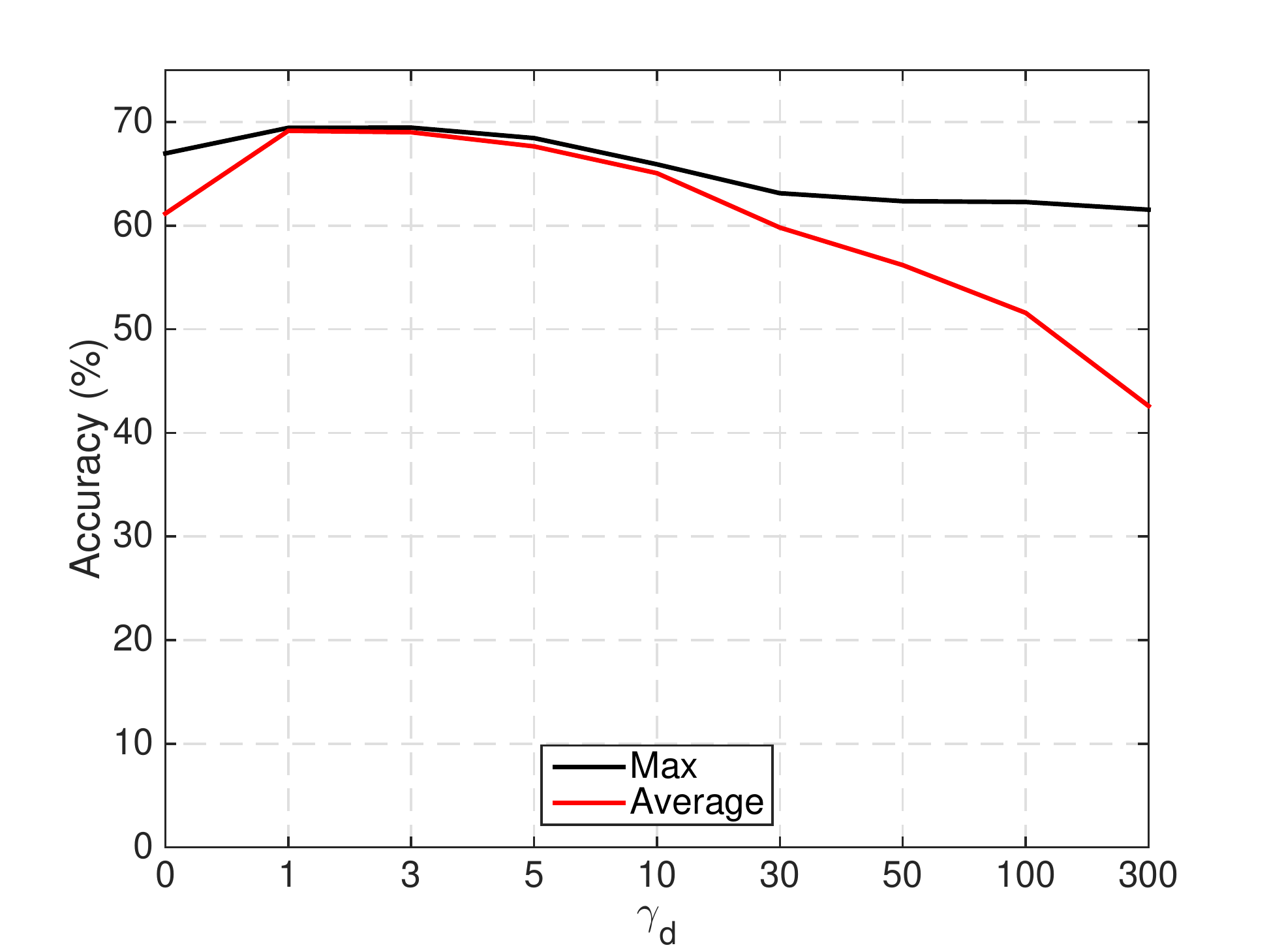}}
\subfigure[(b) Symmetry \(\epsilon=0.5\)]{\includegraphics[trim={0.5em 0em 2em 1em}, clip, width=0.24\textwidth]{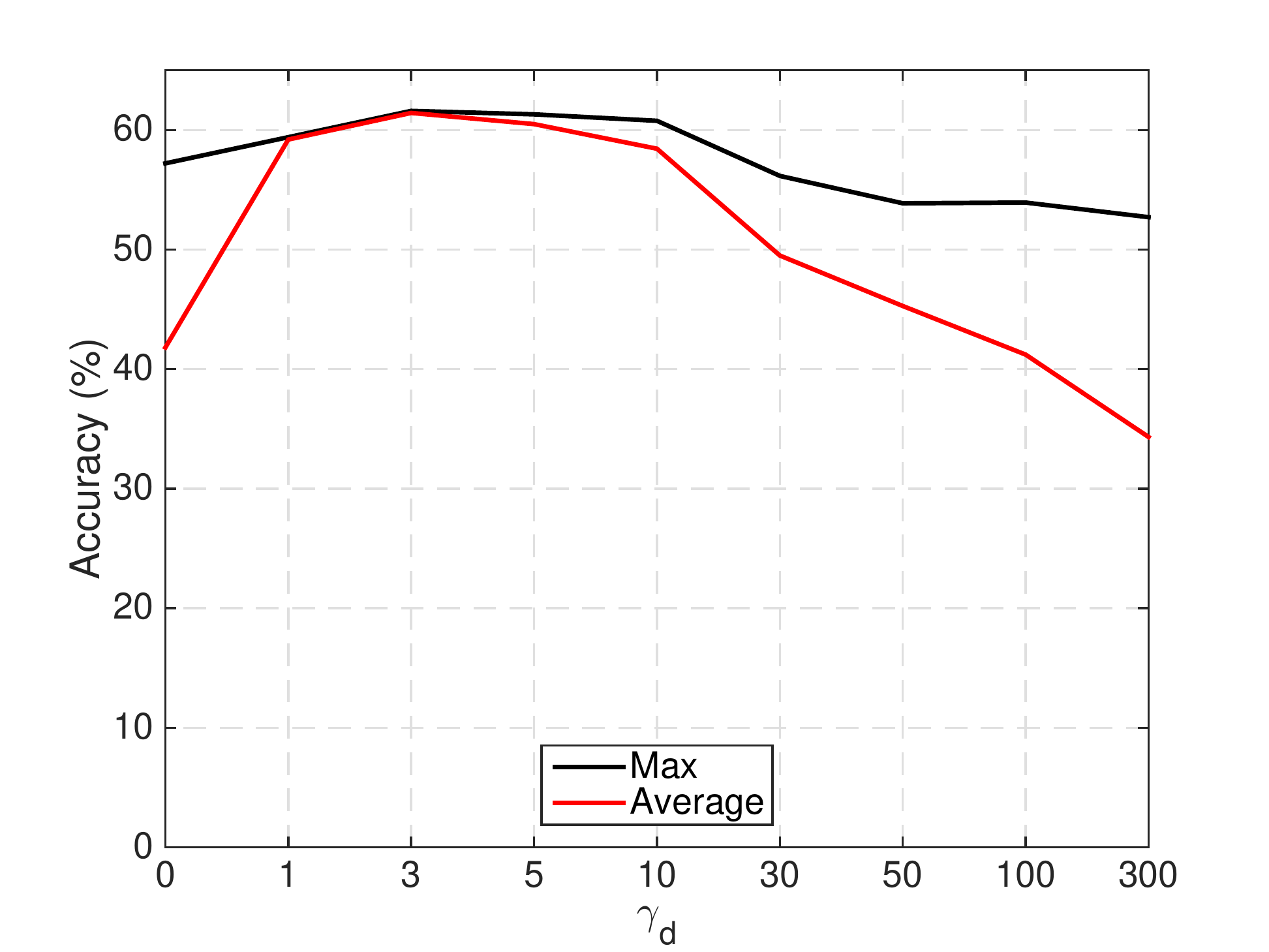}}
\subfigure[(c) Symmetry \(\epsilon=0.8\)]{\includegraphics[trim={0.5em 0em 2em 1em}, clip, width=0.24\textwidth]{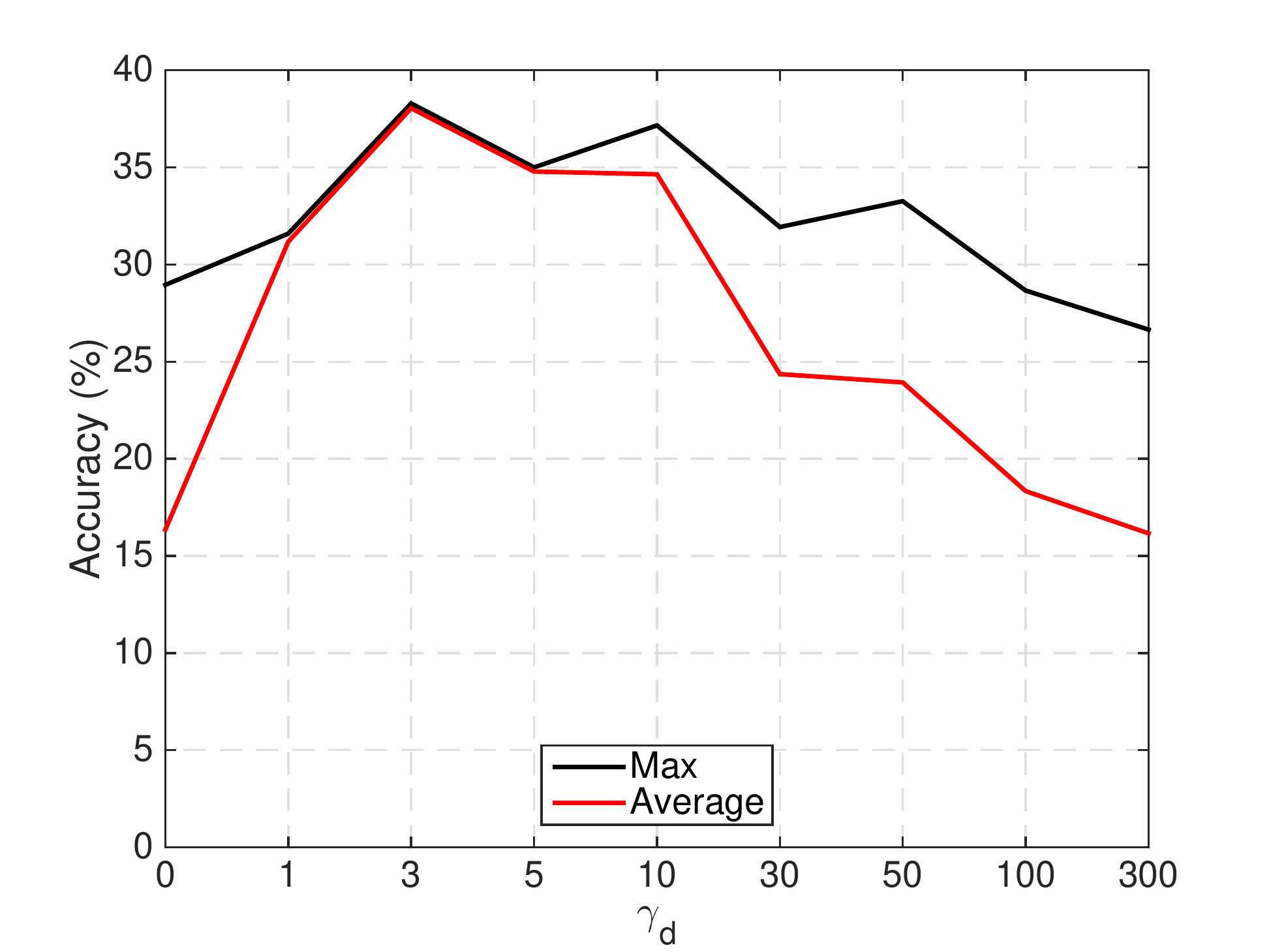}}
\subfigure[(d) Pair \(\epsilon=0.45\)]{\includegraphics[trim={0.5em 0em 2em 1em}, clip, width=0.24\textwidth]{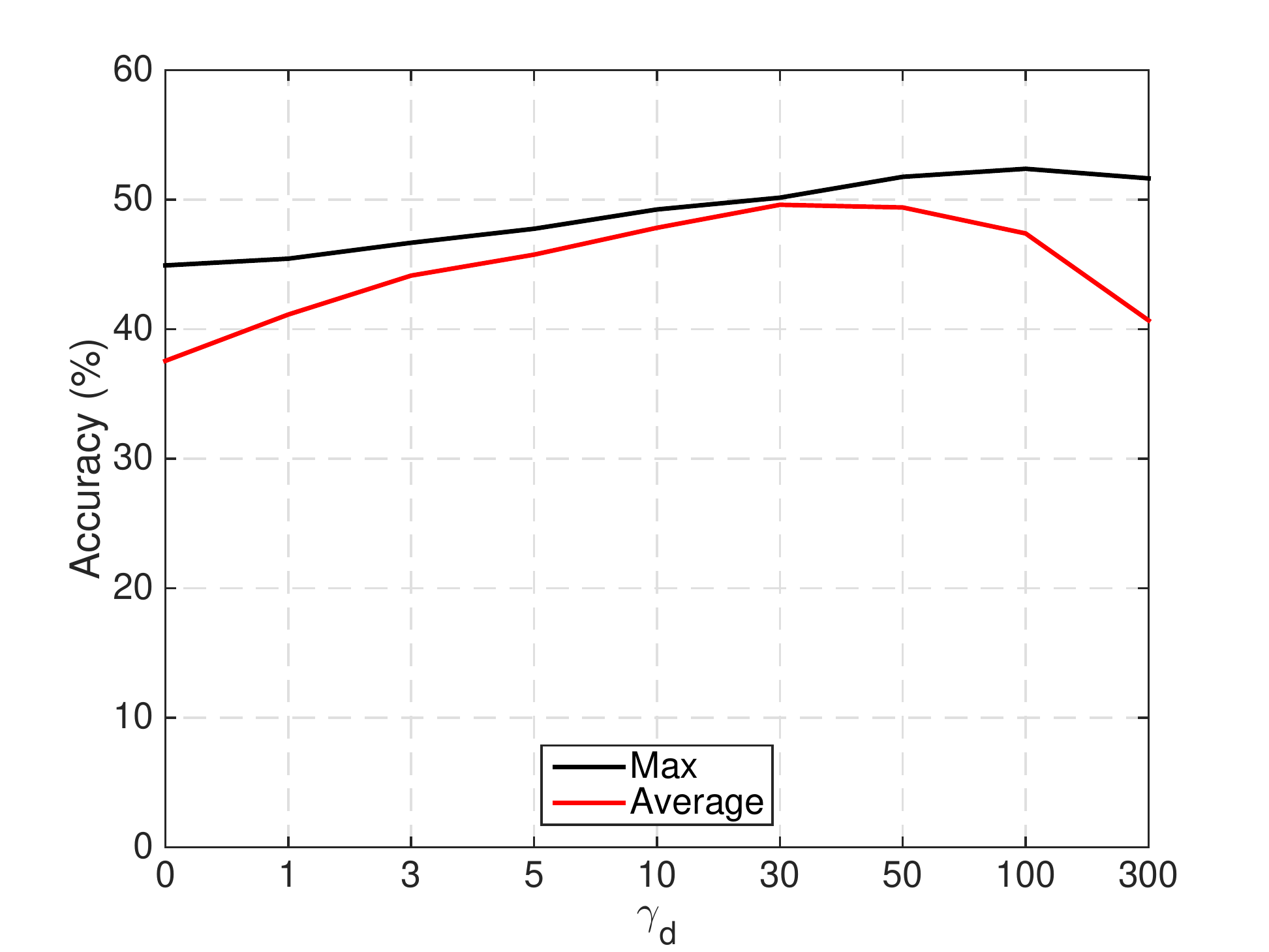}}
\vspace{-0.5em}
\caption{Testing accuracy of SPRL with different values of \(\gamma_d\) on CIFAR-100 when \(T_{1}=40\). `Average' means the average of testing accuracy over the last ten epochs, and `Max' denotes the maximum of testing accuracy among all training epochs.} 
\label{fig:cifar100_gamma}
\vspace{-1.2em}
\end{figure*}

\vspace{-0.5em}
\subsubsection{Parameter Analysis}
The proposed SPRL has three essential parameters \(\gamma_d\) , \(K\), and \(T_{1}\), where \(\gamma_d\) and \(K\) determine \(\gamma(t)\) and \(\delta(t)\), respectively, and \(T_1\) determines \(m\).  Here, we evaluate them by utilizing an easy dataset MNIST, a complex dataset CIFAR-100 and the network ConvNet. Specifically, Figs. \ref{fig:mnist_gamma}-\ref{fig:cifar100_gamma} show testing accuracy of SPRL with different values of \(\gamma_d\) on symmetric or pair flipping label noise, including \(\gamma_d\in \left \{ 0, 1, 5, 10, 50, 100, 300, 500, 1000 \right \} \) on MNIST and \(\gamma_d\in \left \{ 0, 1, 3, 5, 10, 30, 50, 100, 300 \right \} \) on CIFAR-100.  Fig. \ref{fig:K} displays testing accuracy of SPRL with different \(K\) within \(\left \{ 100, 50, 20, 10, 5, 2 \right \}\),  and Fig. \ref{fig:T} presents its testing accuracy  with different values of \(T_{1}\), like \(T_1\in \left \{ 5, 10, 15, 20, 30, 40, 60, 80, 100 \right \}\) on MNIST and \(T_1 \in \left \{ 5, 10, 15, 20, 30, 40, 60, 80, 100 \right \}\) on CIFAR-100. Fig. \ref{fig:loss} displays the effect of different noise rates on the loss function Eq. (\ref{eqn:sub2}) during training.

Figs. \ref{fig:mnist_gamma}-\ref{fig:cifar100_gamma} suggest that a large weight of the resistance loss (Eq. (\ref{eqn:ce_ad})) can prevent the performance degradation of CNNs on symmetric or pair flipping label noise.  Additionally, Fig. \ref{fig:cifar100_gamma} also suggests that Eq. (\ref{eqn:ce_ad}) with a large weight can boost the model accuracy. When \(\gamma_d\geq 50\), SPRL obtains the best or sub-optimal accuracy on MNIST; when \(\gamma_d\in \left [1,10  \right ]\), SPRL obtains the best or sub-optimal accuracy on CIFAR-100 with symmetric label noise, and when \(\gamma_d\in \left [10,50  \right ]\), it achieves the best or sub-optimal accuracy on pair flipping label noise. However, Fig. \ref{fig:cifar100_gamma} illustrates that if \(\gamma_d\) is too large, the model accuracy will decrease on CIFAR-100, probably because the resistance loss with model predictions tends to make the prediction on each class be equivalent. Furthermore, \(\gamma_d\) = 0 means removing the resistance loss (Eq. (\ref{eqn:ce_ad})) from the proposed loss function (Eq. (\ref{eqn:loss})), as shown in Figs. \ref{fig:mnist_gamma}-\ref{fig:cifar100_gamma}, the proposed resistance loss is very helpful to improve performance.

Figs. \ref{fig:K}-\ref{fig:T} infer that both \(K\) and \(T_{1}\) can affect the performance of SPRL on both MNIST and CIFAR-100, especially on the complex dataset CIFAR-100. If \(K\) is too small, SPRL might select more corrupt-label samples at each pace to update model parameters, thereby decreasing its accuracy.   When \(T_{1}\) is too small, it will result in low training and testing accuracy; when \(T_{1}\) is too large, the model will be overfitted on corrupted labels, thereby decreasing the model performance. Therefore, we select \(T_{1}\) where SPRL achieves the best or sub-optimal accuracy on a validation set constructed by noisy training data. Fig. \ref{fig:loss} demonstrates that a larger noise rate will result in a larger loss. The reason might be that a larger noise rate leads to more training samples with different model predictions from corrupted labels. 

\renewcommand{\thesubfigure}{\relax}
\begin{figure}[tbp]
	\subfigure[(a) MNIST@ \(\gamma_d=300\)]{\includegraphics[trim={0.5em 0em 2em 1em}, clip, width=0.24\textwidth]{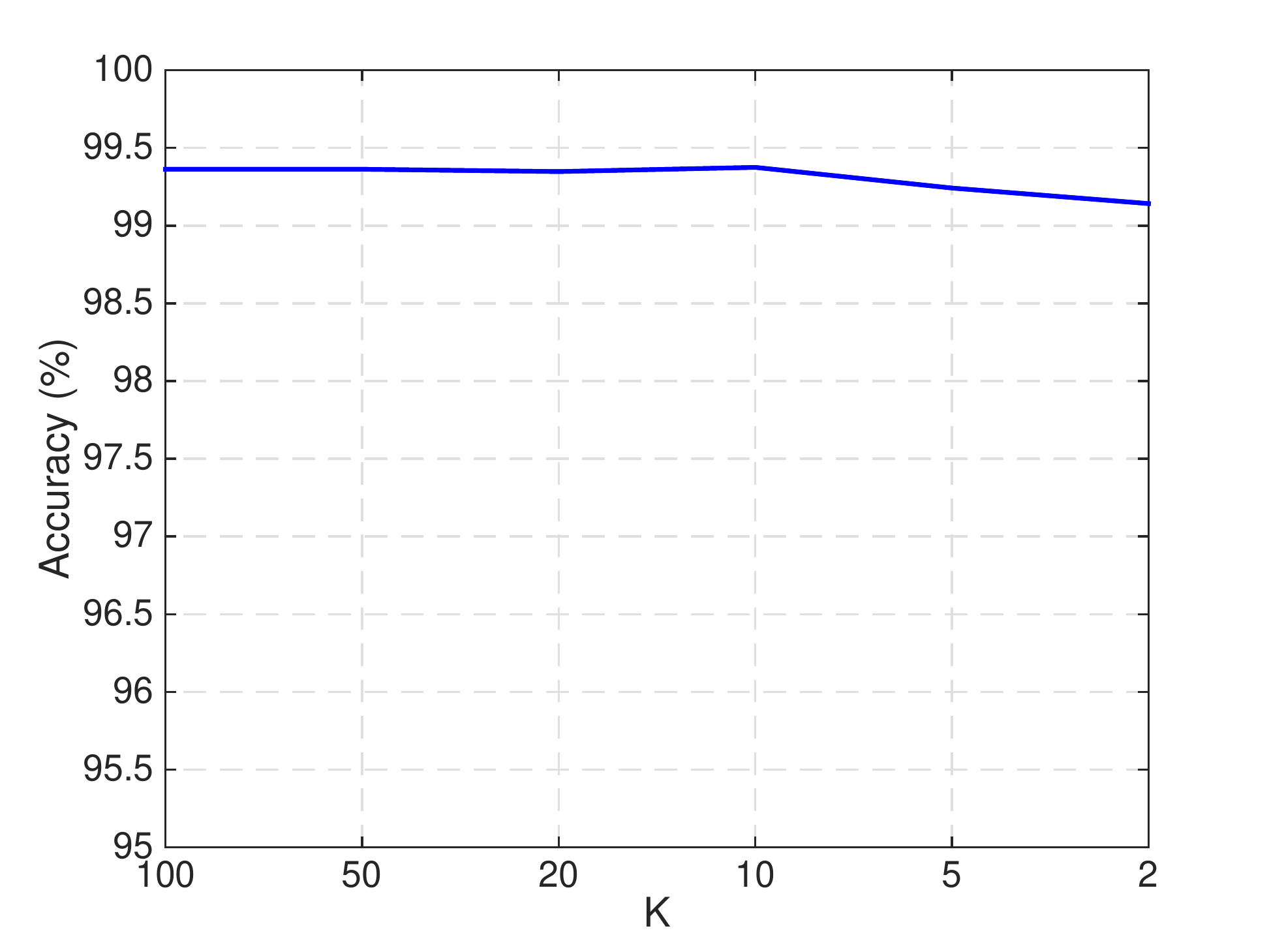}}
	\subfigure[(b) CIFAR-100@ \(\gamma_d=5\)]{\includegraphics[trim={0.5em 0em 2em 1em}, clip, width=0.24\textwidth]{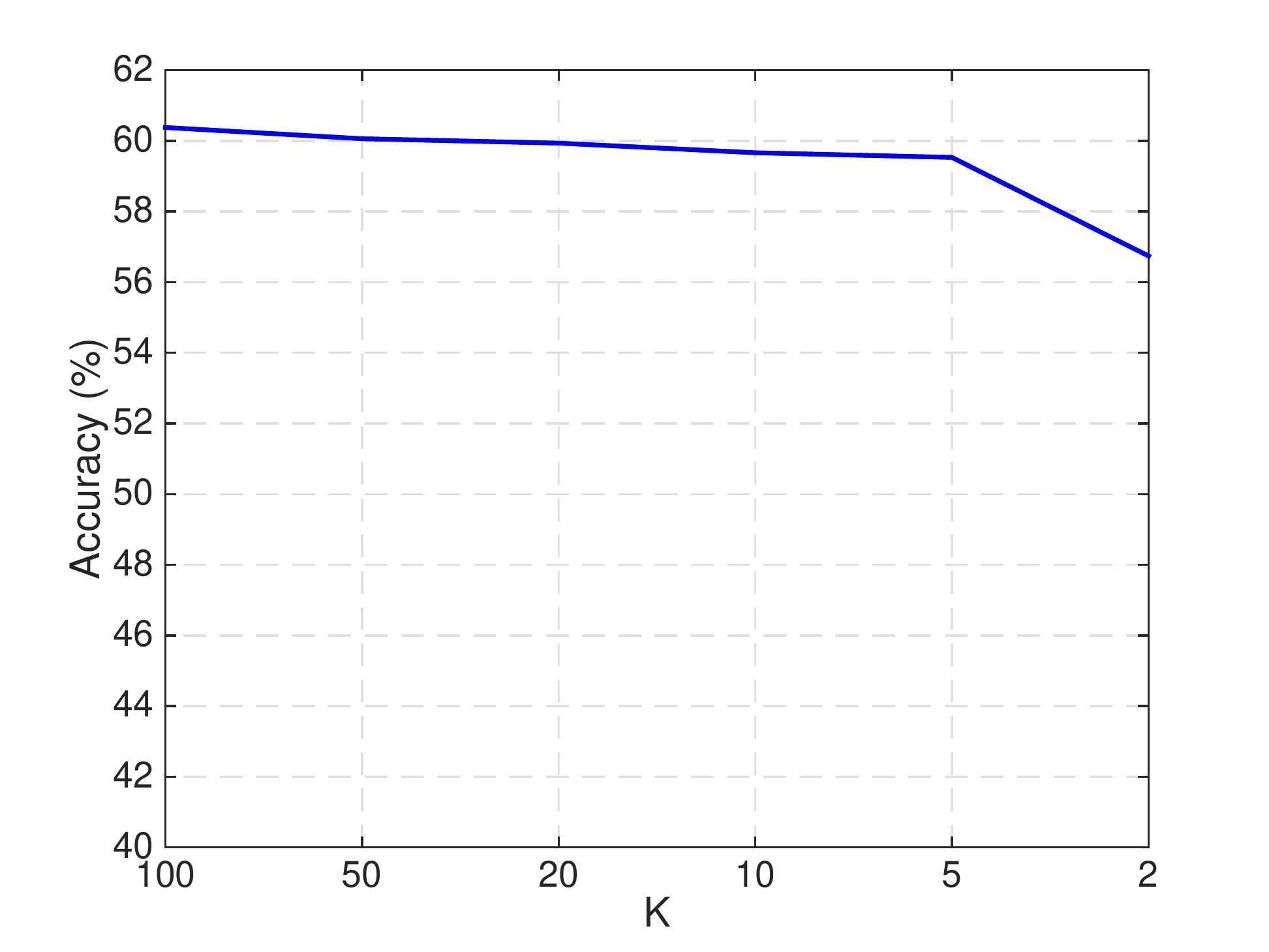}}
	\vspace{-0.5em}
	\caption{The average of testing accuracy of SPRL with different \(K\) on MNIST and CIFAR-100 over the last ten epochs when labels are corrupted by symmetric flipping with a 50\% noise rate.} 
	\label{fig:K}
	\vspace{-1.5em}
\end{figure}

\renewcommand{\thesubfigure}{\relax}
\begin{figure}[tbp]
	\subfigure[(a) MNIST@ \(\gamma_d=300\)]{\includegraphics[trim={0.5em 0em 2em 1em}, clip, width=0.24\textwidth]{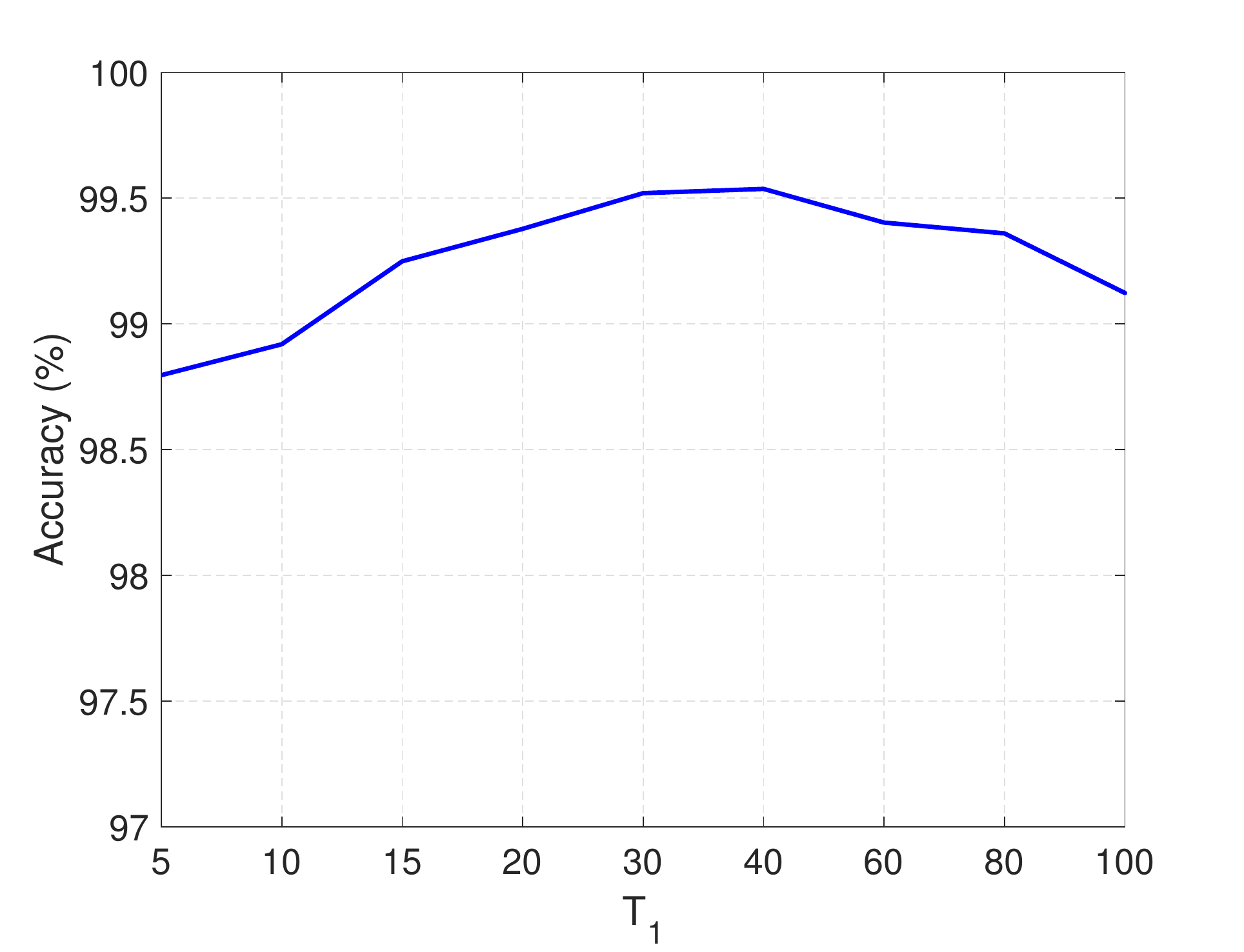}}
	\subfigure[(b) CIFAR-100@ \(\gamma_d=5\)]{\includegraphics[trim={0.5em 0em 2em 1em}, clip, width=0.24\textwidth]{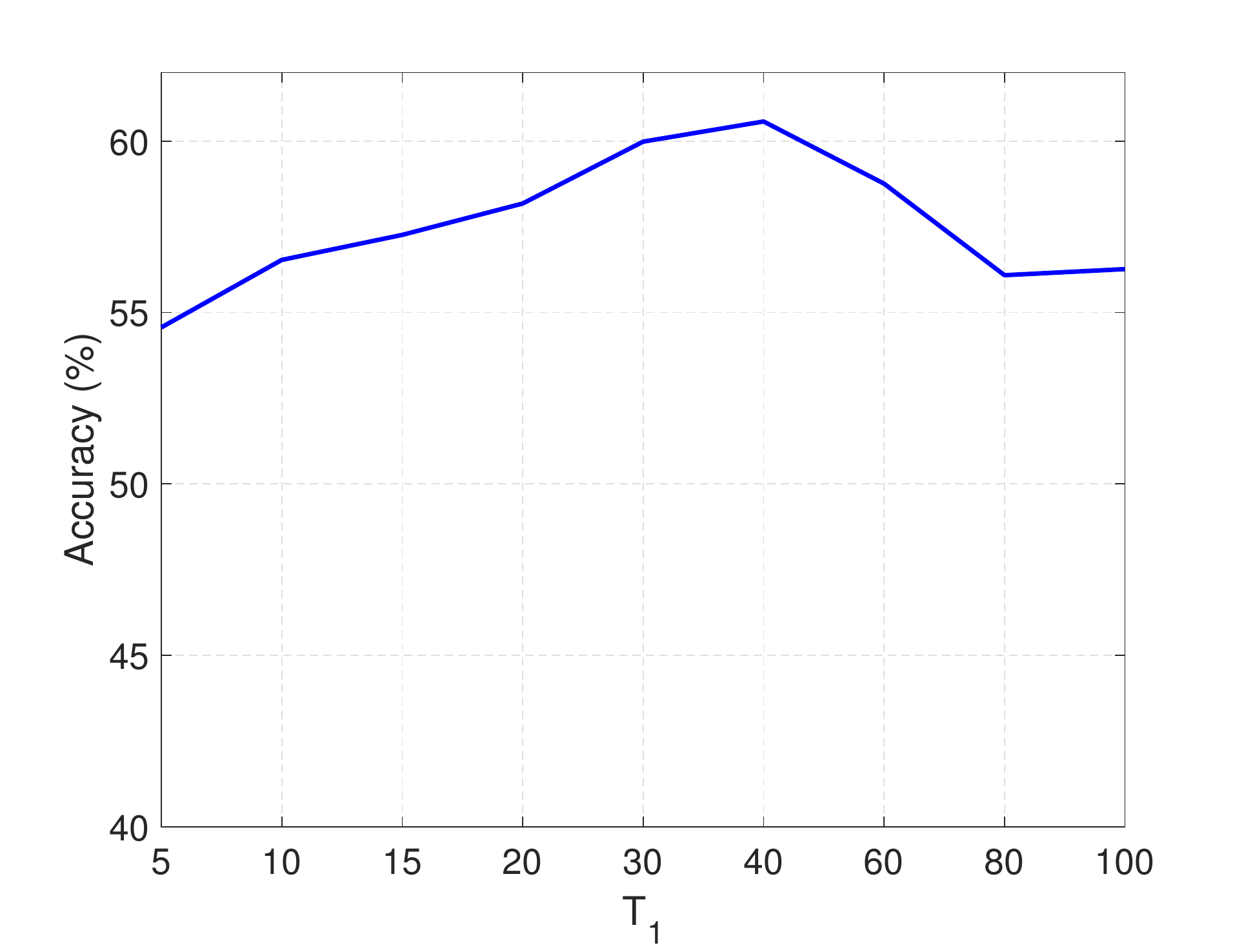}}
	\vspace{-0.5em}
	\caption{The average of testing accuracy of SPRL with different \(T_{1}\) on MNIST and CIFAR-100 over the last ten epochs when labels are corrupted by symmetric flipping with a 50\% noise rate. }
	\label{fig:T}
	\vspace{-1.5em}
\end{figure}

\subsubsection{Comparison with Knowledge Distillation and Label Smooth Regularization}
\label{comkl}
Knowledge distillation \cite{hinton2015distilling}  and label smooth \cite{szegedy2016rethinking} are two popular methods for boosting the model generalization. Here, we utilize Eq. (\ref{eqn:kl}) to distill knowledge from previous training epochs and Eq. (\ref{eqn:lsr}) to smooth labels, and replace Eq. (\ref{eqn:ce_ad}) with them in Eq. (\ref{eqn:loss}), respectively.  They are:

\begin{equation}
\underset{\mathbf{w}}{min} \frac{1}{\left | \emph{B} \right |} \sum_{i\in \emph{B}} \mathbf{p}_{i}^{t-1} log(\frac{\mathbf{p}_{i}^{t-1}}{\mathbf{p}_{i}}),
\label{eqn:kl}
\end{equation} 

\begin{equation}
\underset{\mathbf{w}}{min} \frac{1}{\left | \emph{B} \right |} \sum_{i\in \emph{B}} \mathbf{u}_{i} log(\frac{\mathbf{u}_{i}}{\mathbf{p}_{i}}),
\label{eqn:lsr}
\end{equation} 
where \(\mathbf{w}\) denotes model parameters and \(\mathbf{u}_{i}=\left \{ \frac{1}{c},\frac{1}{c},\cdots, \frac{1}{c} \right \}\in \mathbb{R}^{c}\).

Fig. \ref{fig:kl}  shows their performance using ResNet18 as the backbone network on CIFAR-10 and CIFAR-100 with symmetric label noise. It demonstrates the superior performance of Eq. (\ref{eqn:ce_ad}) over Eq. (\ref{eqn:kl}) and Eq. (\ref{eqn:lsr}).

\renewcommand{\thesubfigure}{\relax}
\begin{figure}[tbp]
	\centering
	\subfigure[(a) MNIST@\(T_{1}=15, \gamma_d=300\)]{\includegraphics[trim={0.5em 0em 2em 1em}, clip, width=0.24\textwidth]{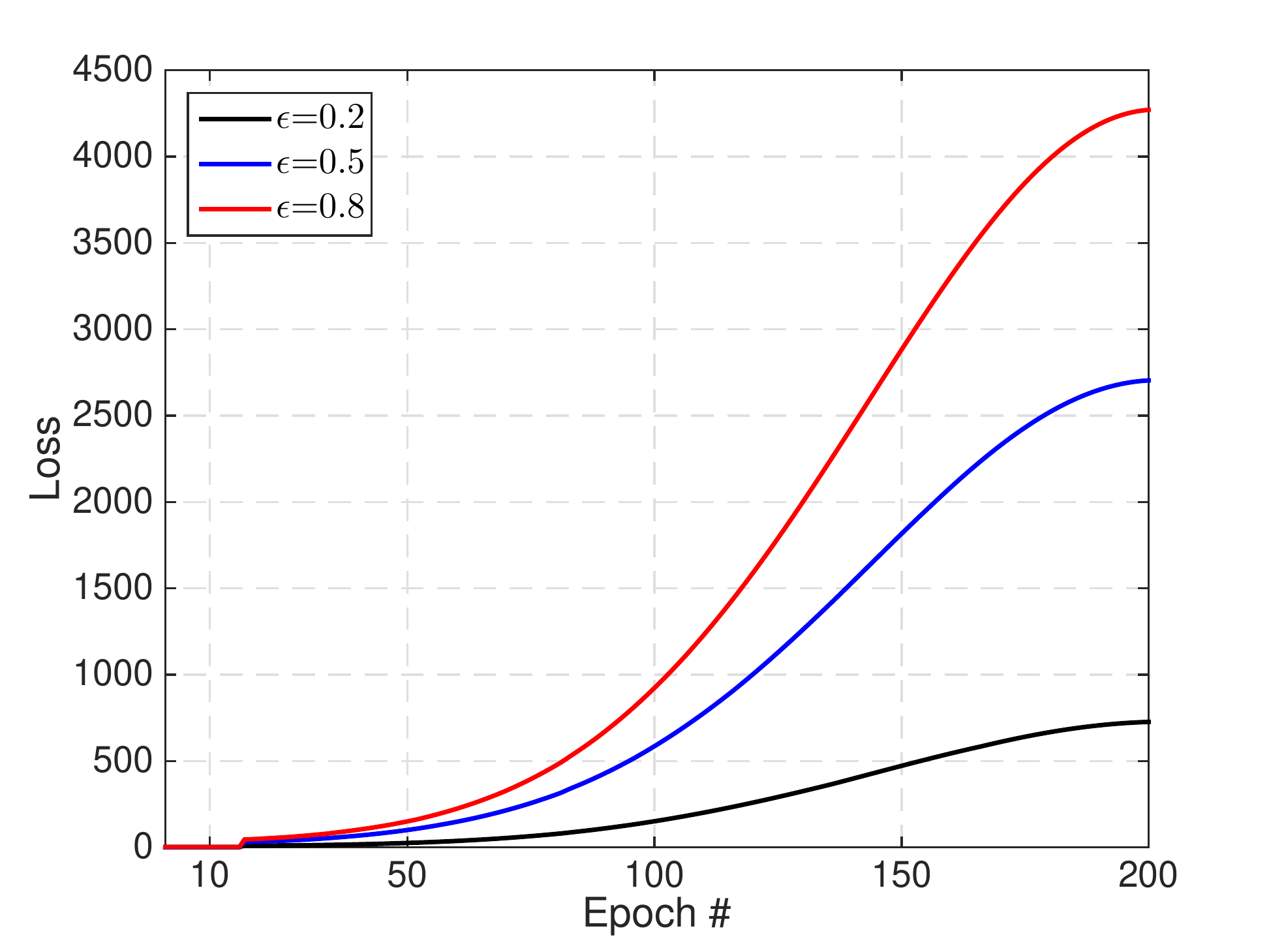}}
	\subfigure[(b) CIFAR-100@\(T_{1}=40, \gamma_d=5\)]{\includegraphics[trim={0.5em 0em 2em 1em}, clip, width=0.24\textwidth]{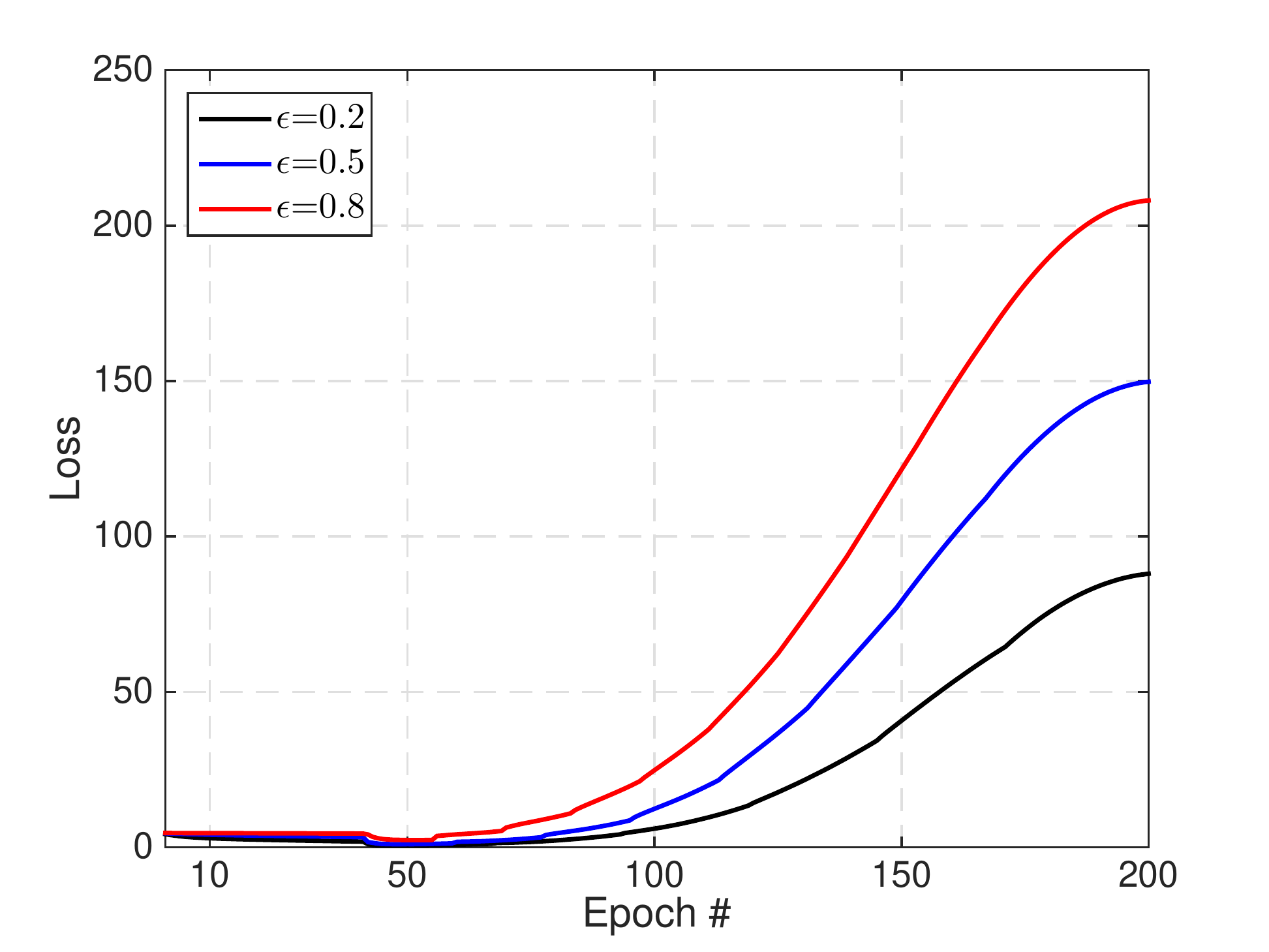}}
	\vspace{-0.5em}
	\caption{The loss of Eq. (\ref{eqn:sub2}) in SPRL with different rates of symmetric label noise on MNIST and CIFAR-100 during training.} 
	\label{fig:loss}
	\vspace{-1.5em}
\end{figure}

%

\renewcommand{\thesubfigure}{\relax}
\begin{figure}[!tbp] 
\centering
\subfigure[(a) CIFAR-10@Symmetry \(\epsilon=0.5\)]{\includegraphics[trim={0.5em 0em 2em 1em}, clip, width=0.24\textwidth]{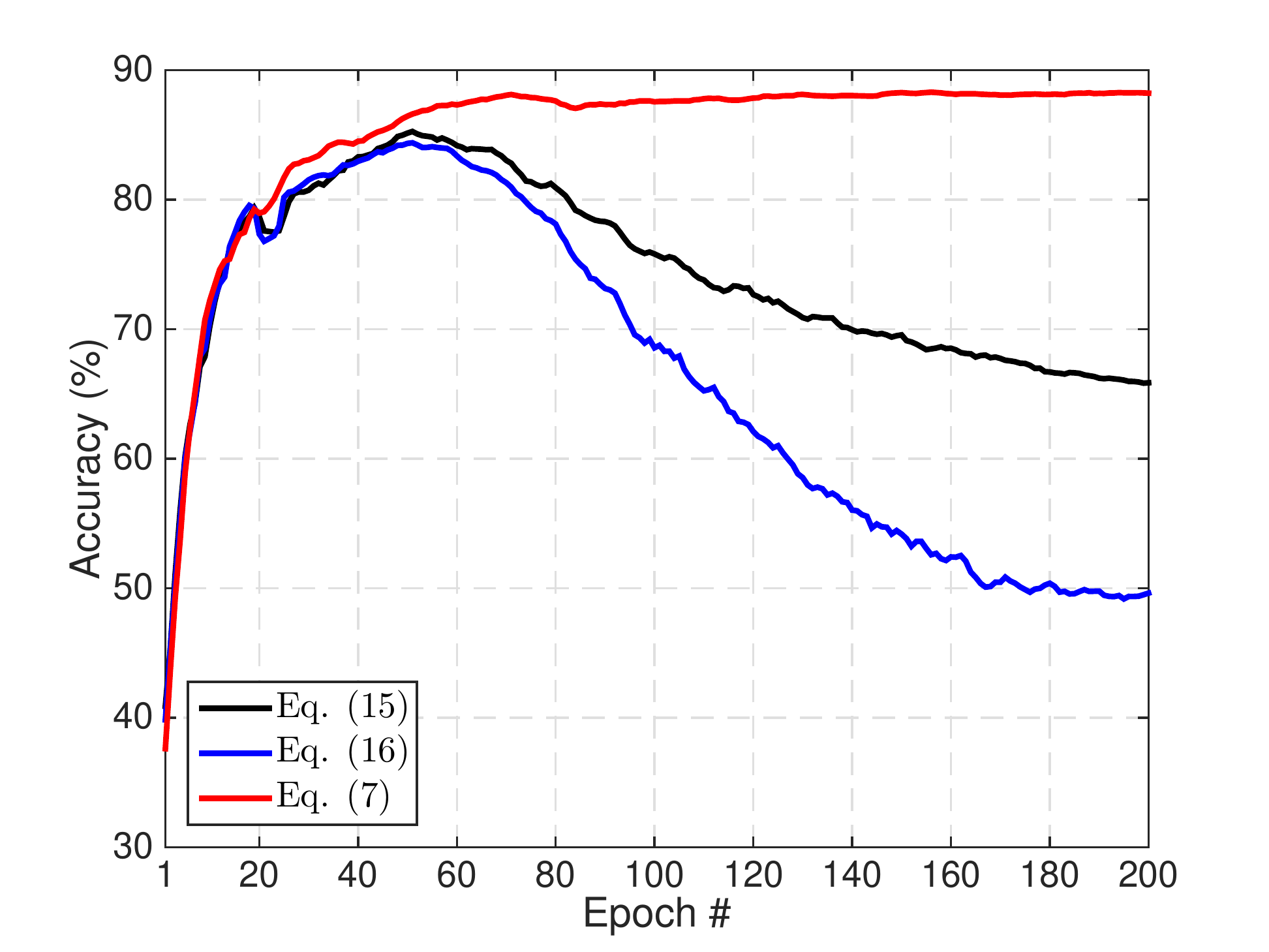}}
\subfigure[(b) CIFAR-100@Symmetry \(\epsilon=0.5\)]{\includegraphics[trim={0.5em 0em 2em 1em}, clip, width=0.24\textwidth]{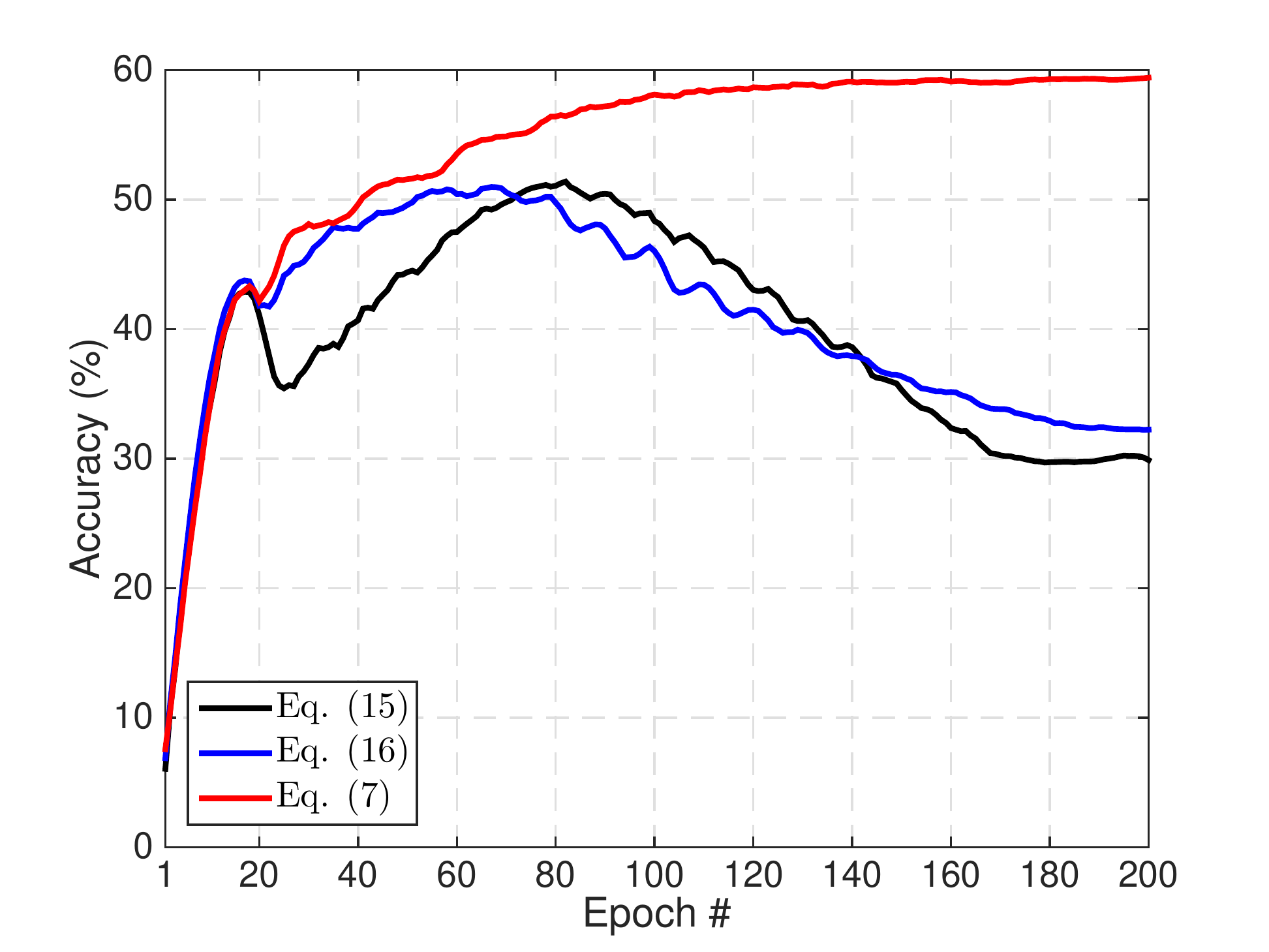}}
\vspace{-1em}
\caption{Testing accuracy of the proposed framework using ResNet18 with Eq. (\ref{eqn:kl}), Eq. (\ref{eqn:lsr}) and Eq. (\ref{eqn:ce_ad}) on CIFAR10 and CIFAR-100 at different numbers of training epochs for symmetry \(\epsilon=0.5\).} 
\label{fig:kl}
\vspace{-1.5em}
\end{figure}

\vspace{-0.5em}
 \subsection{Experiments on Noisy Labels Generated by CNNs}
 \label{section:NL}
In practice, labels might be not only symmetric or pair flipping. To further illustrate the strength of the proposed SPRL, we conduct experiments on noisy labels that are generated by CNNs. Specifically, we uniformly select 4K and 10K images from the training set of CIFAR-10 and CIFAR-100 as labeled data, respectively, and view the remaining images of training sets as unlabeled ones. Next, we only utilize labeled data to train models. Table \ref{table:labeled} presents the accuracy of trained models on training and testing sets of CIFAR-10 and CIFAR-100. Then we apply trained models on the whole training set and utilize predicting labels as noisy labels. Finally, we run the eight methods by utilizing training data with noisy labels to train models.

Table \ref{table:semi} shows the average accuracy of the eight methods on test sets of CIFAR-10 and CIFAR-100 over the last ten epochs. As shown in Tables \ref{table:labeled}-\ref{table:semi}, both SPRL and Co-teaching with noisy labels can consistently outperform ResNet18 and ConvNet with only partially labeled data. However, SPRL always achieves better average accuracy than the best competitor, Co-teaching, on two different deep architectures and datasets, especially on heavy noisy labels, e.g. labels (\(36.84\%\) noise rate) generated by ResNet18, which is trained with only partially labeled data of CIFAR-100.  We also present testing accuracy of the eight methods at different  numbers of training epochs in the supplemental materials (please see Fig. A7). 

\subsection{Experiments on Real-World Noisy Labels}
To be futher demonstrate the strength of the proposed SPRL on boosting model robustnes, we conduct experiments on real-world nosiy labels form the datasets Food101 and Clothing1M, respectively. Specifically, Food101 \cite{bossard14} contains 101,000 images belonging to 101 food categories, with 750 training and 250 testing images per category. Training images are with noisy labels, while testing images have clean labels. Clothing1M \cite{xiao2015learning} contains 1 million clothing images in 14 classes. We utilize training images with noisy labels for model training and 10,000 testing images with clean labels for testing.

Table \ref{table:real} displays the average accuracy of Standard, Co-teaching, Co-teaching+ and SPRL on Food101 and Clothing1M over the last ten epochs. It illustrates that SPRL consistently outperforms the best competitors Co-teaching and Co-teaching+ on real-world noisy labels, especifically using ResNet18 as the backbone network. Additionally, We show their testing accuracy at different  numbers of training epochs in the supplemental materials (please refer to Fig. A8).


\begin{table}[tbp]
	\scriptsize 
	\centering
	\caption{Accuracy (\%) of ResNet18 and ConvNet trained by partially labeled data on training and testing sets of CIFAR-10 and CIFAR-100 datasets (4K for CIFAR-10 and 10K for CIFAR-100).}
	\vspace{-1em}
	\begin{tabular}{|c|c|c|c|c|}
		\hline
		\multirow{3}*{Network}  & \multicolumn{2}{c|}{CIFAR-10} & \multicolumn{2}{c|}{CIFAR-100}  \\      
		\cline{2-5}
		& Training  & Testing   & Training  & Testing     \\
		\cline{2-5}
		\hline
		ResNet18  & 81.97  & 80.93 & 63.16 &54.64  \\
		\hline
		ConvNet  & 82.02 & 80.52   & 64.23  & 54.95 \\
		\hline
	\end{tabular}
	\label{table:labeled}
	\vspace{-1em}
\end{table}

\begin{table}[!tbp]
\scriptsize 
\centering
\caption{Average of testing accuracy (\%) on CIFAR-10 and CIFAR-100 over the last ten epochs by CNN generated noisy labels. We bold the best results and highlight the second best ones via underlines.}
\vspace{-1em}
\begin{tabular}{|c|c|c|c|c|}
\hline
\multirow{3}*{Method}  & \multicolumn{2}{c|}{CIFAR-10} & \multicolumn{2}{c|}{CIFAR-100}  \\      
 \cline{2-5}
 & ResNet18   & ConvNet & ResNet18   & ConvNet   \\
\hline
Standard    &81.94   &81.88   &54.86  & 54.52   \\
\hline
Boostrap  & 81.24   & 81.71  & 54.59 & 55.18 \\
\hline
F-correction  &83.40   & 81.28  &54.20  & 55.25   \\
\hline
Decoupling  & 79.31  &78.46   &49.80  &50.79  \\
\hline
MentorNet &81.27   &80.24   &52.39  & 54.06  \\
\hline
Co-teaching  & \underline{82.80} & \underline{82.80}   & \underline{55.44}  & \underline{55.56}    \\
\hline
Co-teaching+  &82.26 & 81.73 &54.66   & \(55.16\)    \\
\hline
\textbf{SPRL}  & \(\mathbf{85.63}\) & \(\mathbf{84.00}\)   & \(\mathbf{62.08}\)  & \(\mathbf{58.73}\)   \\
\hline
\end{tabular}
\label{table:semi}
\vspace{-1em}
\end{table}

\begin{table}[tbp]
	\scriptsize 
	\centering
	\caption{Average accuracy (\%) of four methods over the last ten epochs on real-world noisy labels.}
	\vspace{-1em}
	\begin{tabular}{|c|c|c|c|c|}
		\hline
		\multirow{3}*{Network}  & \multicolumn{2}{c|}{Food101} & \multicolumn{2}{c|}{Clothing1M}  \\      
		\cline{2-5}
		& ResNet18  & ConvNet   &ResNet18  & ConvNet    \\
		\cline{2-5}
		\hline
		Standard  & \(71.01\)   &\(73.56\)  &\(66.59\)  & \(68.15\) \\
		\hline
		Co-teaching &\(\underline{71.36}\)  &  \(\underline{74.34}\)  & \(\underline{69.78}\)  & \(69.94\)  \\
		\hline
		Co-teaching+ &\(69.84\)  & \(71.10\)   &\(67.93\)   & \(\underline{70.08}\) \\
		\hline
		 \textbf{SPRL} &\(\mathbf{76.14}\)  & \(\mathbf{74.61}\)    &\(\mathbf{71.63}\)   & \(\mathbf{71.81}\) \\
	    \hline	
	\end{tabular}
	\label{table:real}
	\vspace{-1em}
\end{table}
%

\vspace{-0.5em}
\subsection{Discussion and Future Work}
Experiments on multiple large-scale benchmark datasets and two different backbone network architectures demonstrate that SPRL can significantly reduce the effects of various types of corrupted labels by using the resistance loss to alleviate model overfitting, thus avoiding the performance degradation of CNNs during training. Additionally, experiments on noisy labels generated by CNNs suggest that SPRL can be potentially utilized to further improve the performance of semi-supervised and unsupervised deep methods. 

Although SPRL has achieved robust and better accuracy than many state-of-the-art methods, SPRL cannot be directly applied on multi-label datasets with noisy labels, because it calculates the class probability of each sample by using the softmax function, which usually performs poorly on multi-label classification tasks. However, SPRL might be extended to handle multi-label tasks by replacing the softmax function with a sigmoid function. In the future, SPRL might be further improved based on the following two potential directions: (i) Introducing a small amount of clean validation data for training \cite{zhang2020distilling}, instead of training models with only noisy training data without using any clean validation data. (ii) Employing model predictions of SPRL to generate labels to further boost the model performance (like Section \ref{section:NL}), or distinguishing and changing the possibly corrupted labels by using the other popular methods \cite{arazo2019unsupervised} \cite{berthelot2019mixmatch} \cite{li2020dividemix} \cite{sohn2020fixmatch}.
\vspace{-0.5em}
\section{Conclusion}
In this paper, we propose a novel framework, SPRL, to alleviate model overfitting for robustly training CNNs on noisy labels. The proposed framework contains two major modules: curriculum learning, which utilizes the memorization skill of deep neural networks to learn a curriculum to provide meaningful supervision for other training samples; parameters update, which leverages the selected confident samples  and a resistance loss to simultaneously update model parameters and significantly reduce the effect of corrupted labels. Experiments on multiple large-scale benchmark datasets and typical deep architectures demonstrate the effectiveness of the proposed framework, and its significantly superior performance over recent state-of-the-art methods.

\vspace{-0.5em}
{
\bibliographystyle{IEEEtran}
\bibliography{egbib}
}

\appendix
\renewcommand{\appendixname}{Appendix~\Alph{section}}

\begin{table}[htbp] \renewcommand\thetable{A1}
	\scriptsize 
	\centering
	\caption{ResNet18.}
	\vspace{0.2em}
	\begin{tabular}{c|c}
		\hline
		\textbf{Layer} & \textbf{Hyperparameters} \\
		\hline
		1 & conv(3, 1, 1)-64+ReLU \\
		\hline
		3 &  conv(3, 1, 1)-64+ReLU \\ 
		\hline  
		4 & conv(3, 1, 1)-64+ReLU \\ 
		\hline
		5 &  conv(3, 1, 1)-64+ReLU \\ 
		\hline  
		6 & conv(3, 1, 1)-64+ReLU \\ 
		\hline
		7 &  conv(3, 2, 1)-128+ReLU     \\
		\hline
		8 &  conv(3, 1, 1)-128+ReLU     \\
		\hline
		9 &  conv(3, 1, 1)-128+ReLU     \\
		\hline
		10 &  conv(3, 1, 1)-128+ReLU     \\
		\hline
		11 &  conv(3, 2, 1)-256+ReLU     \\
		\hline
		12 &  conv(3, 1, 1)-256+ReLU     \\
		\hline
		13 &  conv(3, 1, 1)-256+ReLU     \\
		\hline
		14 &  conv(3, 1, 1)-256+ReLU     \\
		\hline
		15 &  conv(3, 2, 1)-512+ReLU     \\
		\hline
		16 &  conv(3, 1, 1)-512+ReLU     \\
		\hline
		17 &  conv(3, 1, 1)-512+ReLU     \\
		\hline
		18 &  conv(3, 1, 1)-512+ReLU     \\
		\hline
		19 & avgpool \\
		\hline
		20 & fc-\(c\)   \\
		\hline
	\end{tabular}
	\label{table:resnet18}
\end{table}

\begin{table}[htbp]\renewcommand\thetable{A2}
	\scriptsize 
	\centering
	\caption{ConvNet.}
	\begin{tabular}{c|c}
		\hline
		\textbf{Layer} & \textbf{Hyperparameters} \\
		\hline
		1 & conv(3, 1, 1)-128+LReLU(\(\alpha =0.1\)) \\
		\hline
		2 & conv(3, 1, 1)-128+LReLU(\(\alpha =0.1\)) \\
		\hline
		3 & conv(3, 1, 1)-128+LReLU(\(\alpha =0.1\)) \\
		\hline  
		4 & maxpool(2, 2) \\
		\hline
		5 & dropout (\(p=0.5\))  \\
		\hline
		6 &  conv(3, 1, 1)-256+LReLU(\(\alpha =0.1\))    \\
		\hline
		7 &  conv(3, 1, 1)-256+LReLU(\(\alpha =0.1\))   \\
		\hline
		8 &  conv(3, 1, 1)-256+LReLU(\(\alpha =0.1\))   \\
		\hline
		9 & maxpool(2, 2) \\
		\hline
		10 & dropout (\(p=0.5\))  \\
		\hline
		11 &  conv(3, 1, 0)-512+LReLU(\(\alpha =0.1\))    \\
		\hline
		12 &  conv(1, 1, 0)-256+LReLU(\(\alpha =0.1\))   \\
		\hline
		13 &  conv(1, 1, 0)-128+LReLU(\(\alpha =0.1\))   \\
		\hline
		14 & avgpool  \\
		\hline
		\hline
		15& fc-\(c\)  \\
		\hline
	\end{tabular}
	\label{table:convnet}
\end{table}

\begin{table*}[htb] \renewcommand\thetable{A5}
	\scriptsize 
	\centering
	\caption{Average of testing accuracy (\%) of SPRL, Co-teaching and Co-teaching+ on CIFAR-10 and CIFAR-100 using ResNet18 and without using data augmentation.}
	\vspace{-0.5em}
	\begin{tabular}{|c|c|c|c|c|}
		\hline
		\multirow{2}*{Method}&\multicolumn{3}{c|}{Symmetry}   & Pair \\
		\cline{2-5}
		&\(\epsilon=0.2\) & \(\epsilon=0.5\)  & \(\epsilon=0.8\)   & \(\epsilon=0.45\)  \\
		\hline
		\multicolumn{5}{|c|}{CIFAR-10} \\
		\hline
		Co-teaching& \(78.48\pm 0.18\) &\(68.55\pm 0.06\) &\(19.63\pm 0.14\)  & \(67.99\pm 0.31\)   \\
		\hline
		Co-teaching+& \(74.14\pm 0.22\) & \(46.69\pm 0.59\) & \(16.74\pm 0.08\)  & \(45.44\pm 0.32\)   \\
		\hline
		SPRL&\(\mathbf{84.51\pm0.12}\) & \(\mathbf{71.91\pm 0.29}\) &\(\mathbf{32.39\pm 0.41}\)  & \(\mathbf{79.20\pm 0.14}\)  \\
		\hline
		\multicolumn{5}{|c|}{CIFAR-100}  \\
		\hline
		Co-teaching&\(47.12\pm 0.16\) &\(33.95\pm 0.17\)  &\(13.34\pm 0.08\)  &\(30.19\pm 0.10\)    \\
		\hline
		Co-teaching+& \(48.39\pm 0.08\) &\(30.81\pm 0.36\)  & \(6.68\pm 0.08\) & \(25.89\pm 0.14\)   \\
		\hline
		SPRL&\(\mathbf{59.68\pm 0.11}\) &\(\mathbf{42.47\pm 0.21}\)  & \(\mathbf{16.05\pm 0.14}\) & \(\mathbf{41.64\pm 0.04}\)   \\
		\hline
	\end{tabular}
	\label{table:noaug}
\end{table*}

\begin{figure*}[tbp]\renewcommand\thefigure{A1}
	\includegraphics[trim={39em 5em 36em 23em}, clip, width=0.1\textwidth]{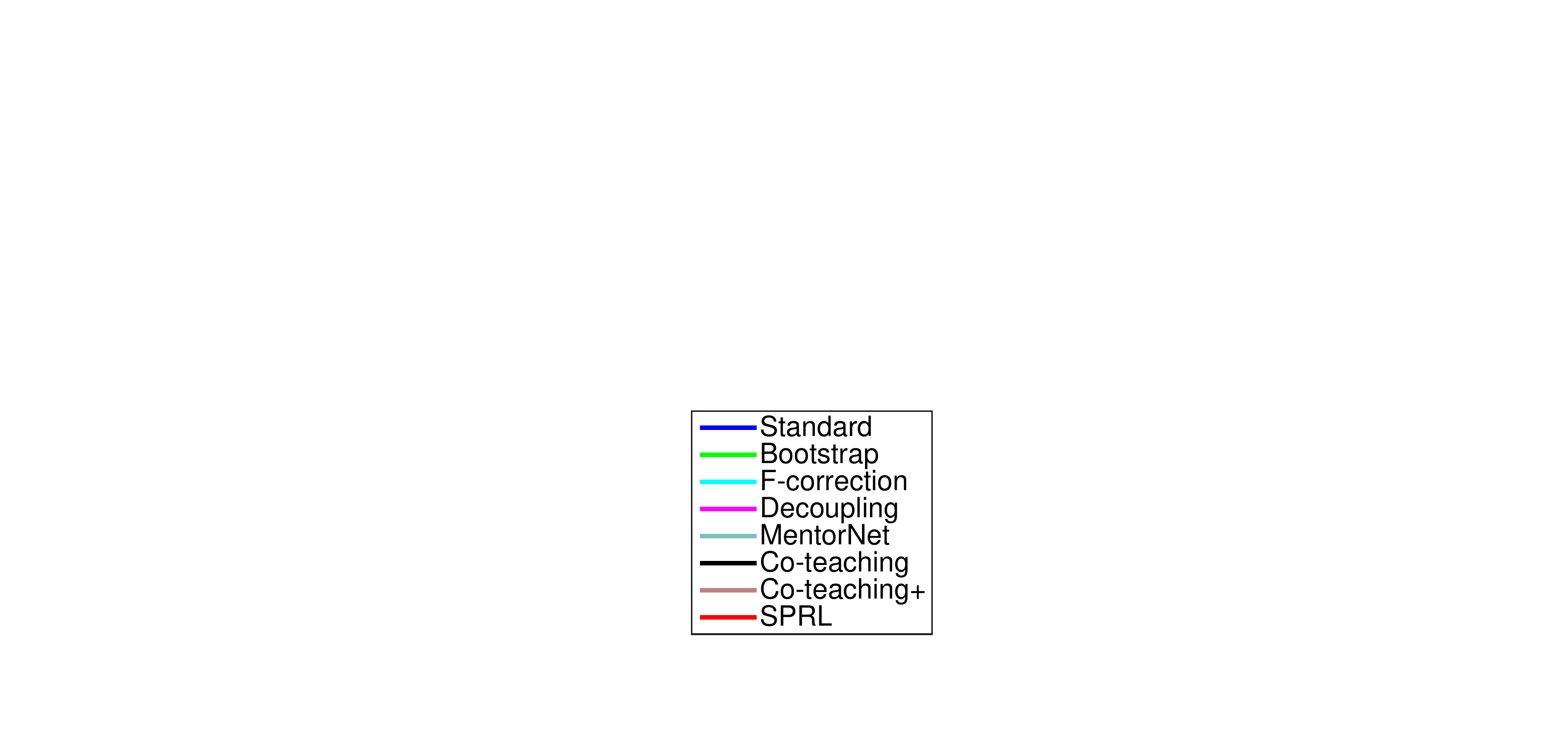}
	\subfigure[ResNet18/Symmetry \(\epsilon\)=0.2]{\includegraphics[trim={0.5em 0em 3em 1em}, clip, width=0.22\textwidth]{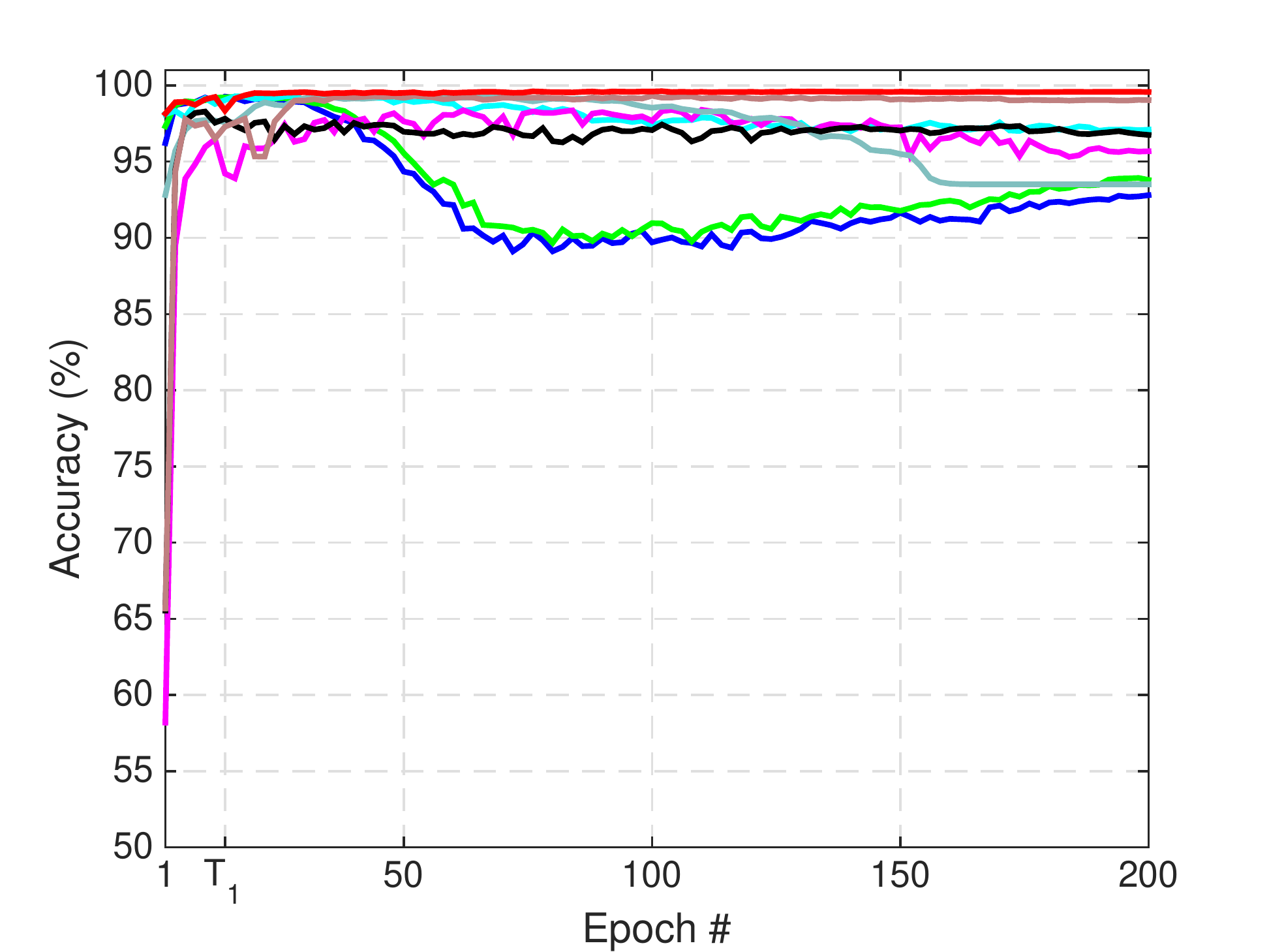}}
	\subfigure[ResNet18/Symmetry \(\epsilon\)=0.5]{\includegraphics[trim={0.5em 0em 3em 1em}, clip, width=0.22\textwidth]{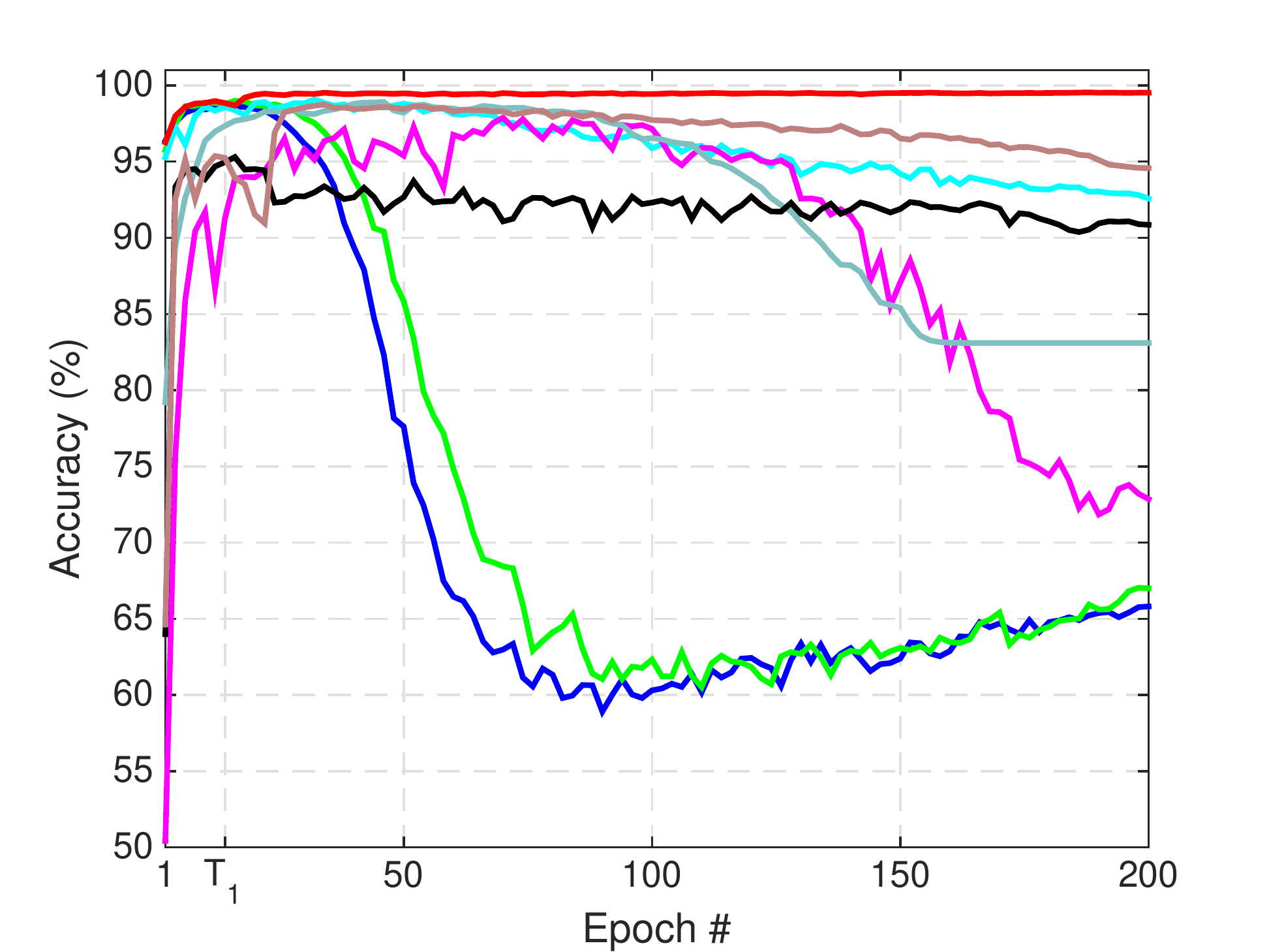}}
	\subfigure[ResNet18/Symmetry \(\epsilon\)=0.8]{\includegraphics[trim={0.5em 0em 3em 1em}, clip, width=0.22\textwidth]{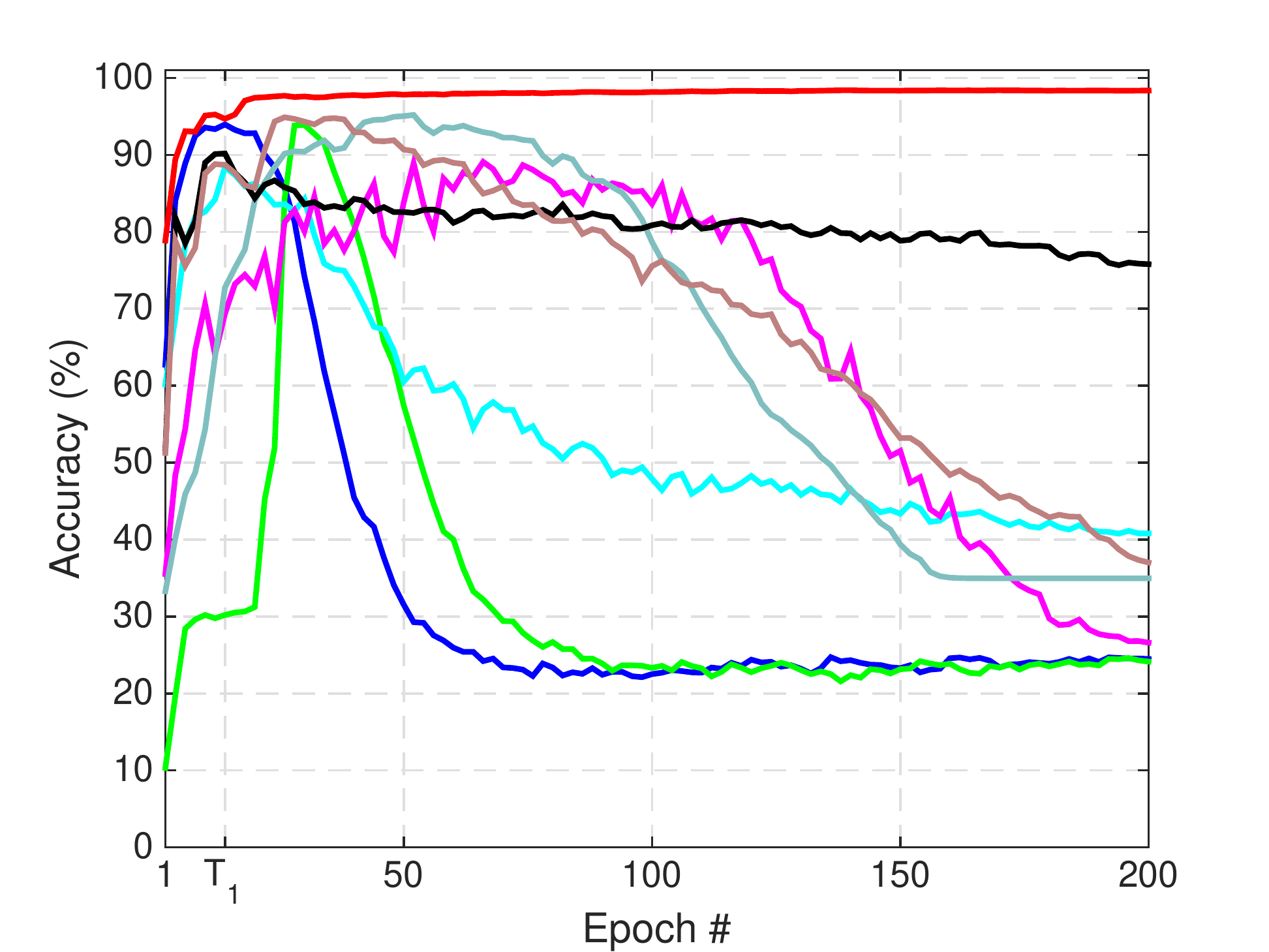}}
	\subfigure[ResNet18/Pair \(\epsilon\)=0.45]{\includegraphics[trim={0.5em 0em 3em 1em}, clip, width=0.22\textwidth]{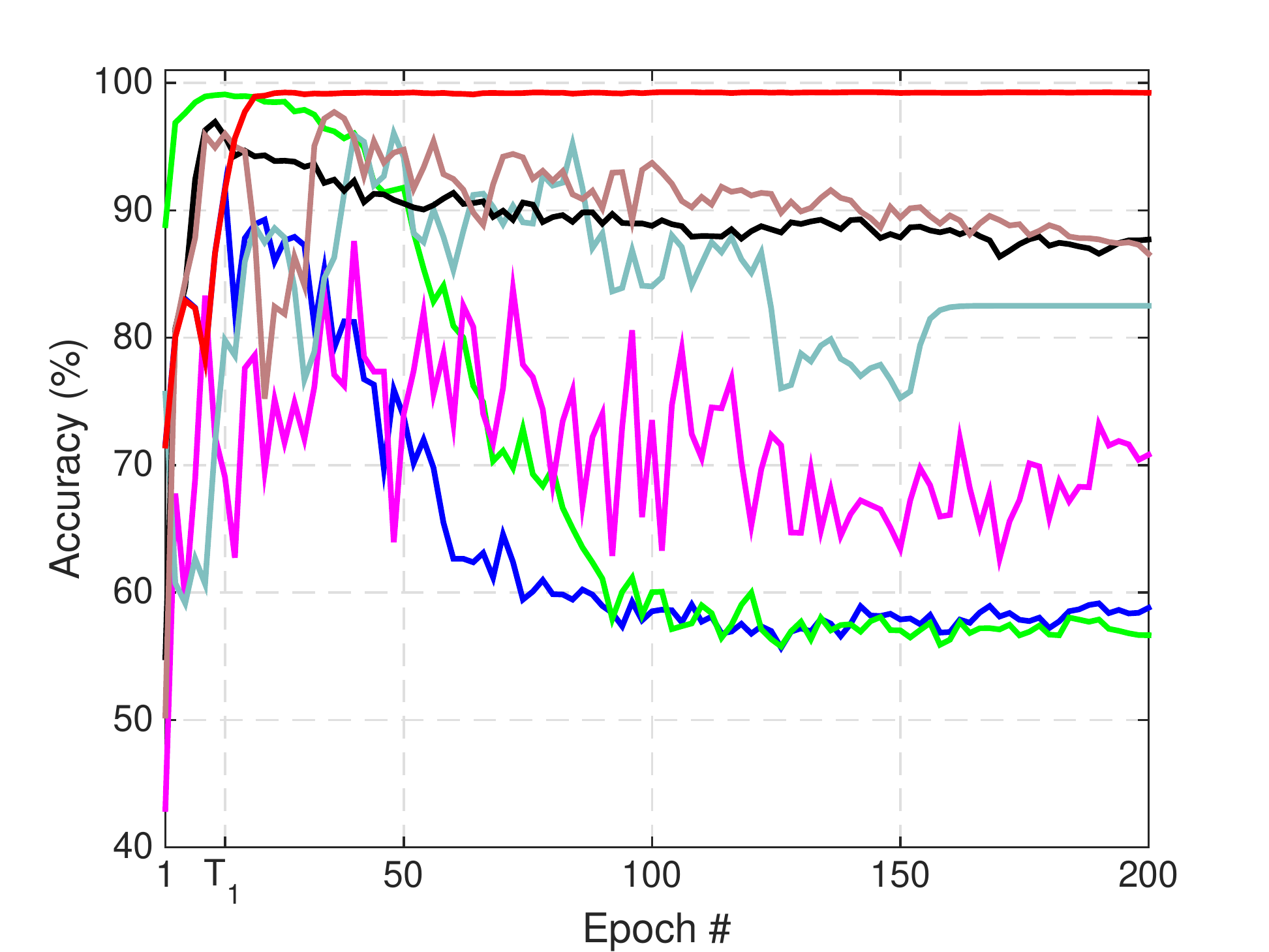}}
	\includegraphics[trim={39em 5em 36em 23em}, clip, width=0.1\textwidth]{figures/legend.pdf}
	\subfigure[ConvNet/Symmetry \(\epsilon\)=0.2]{\includegraphics[trim={0.5em 0em 3em 1em}, clip, width=0.22\textwidth]{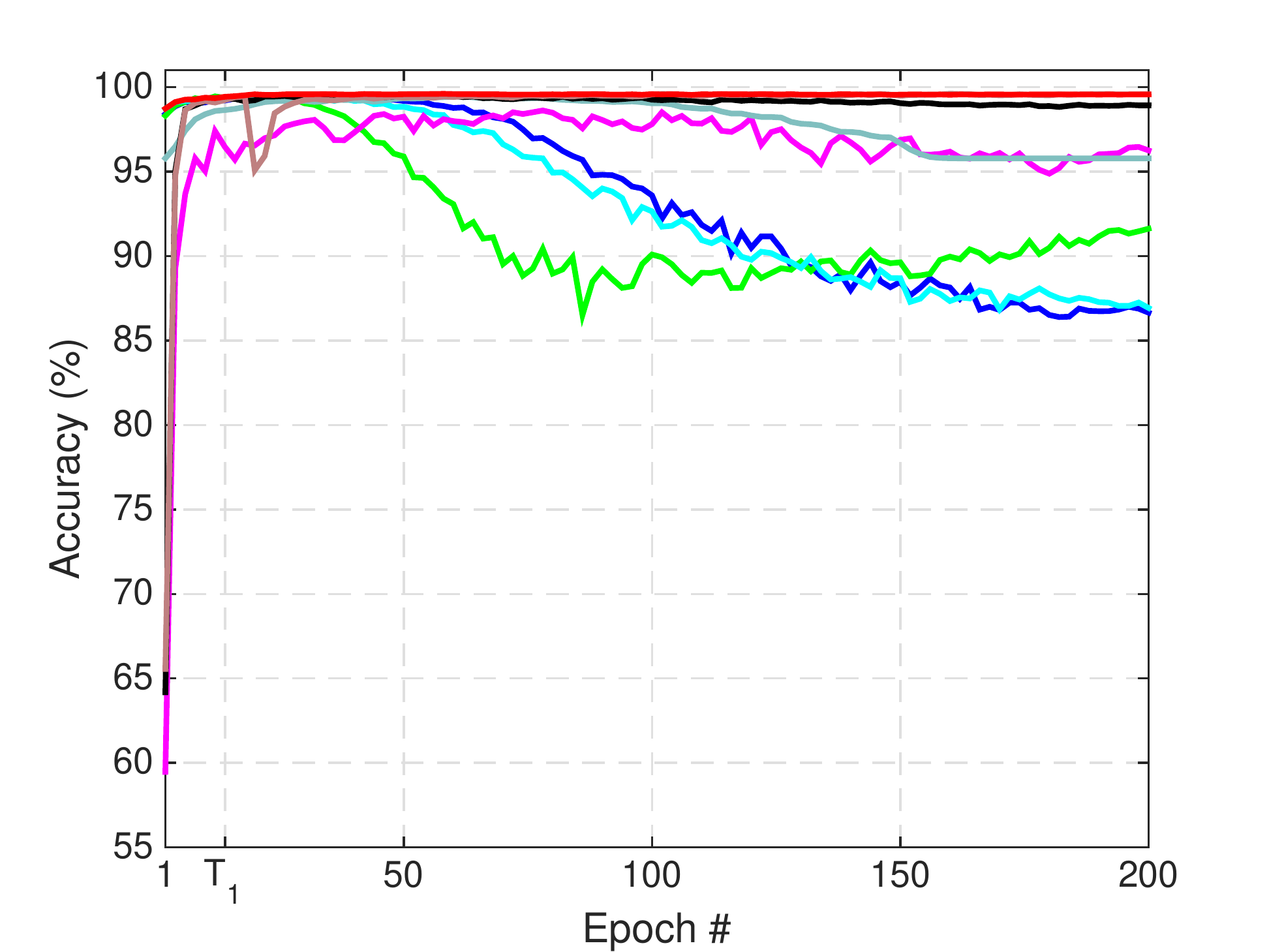}}
	\subfigure[ConvNet/Symmetry \(\epsilon\)=0.5]{\includegraphics[trim={0.5em 0em 3em 1em}, clip, width=0.22\textwidth]{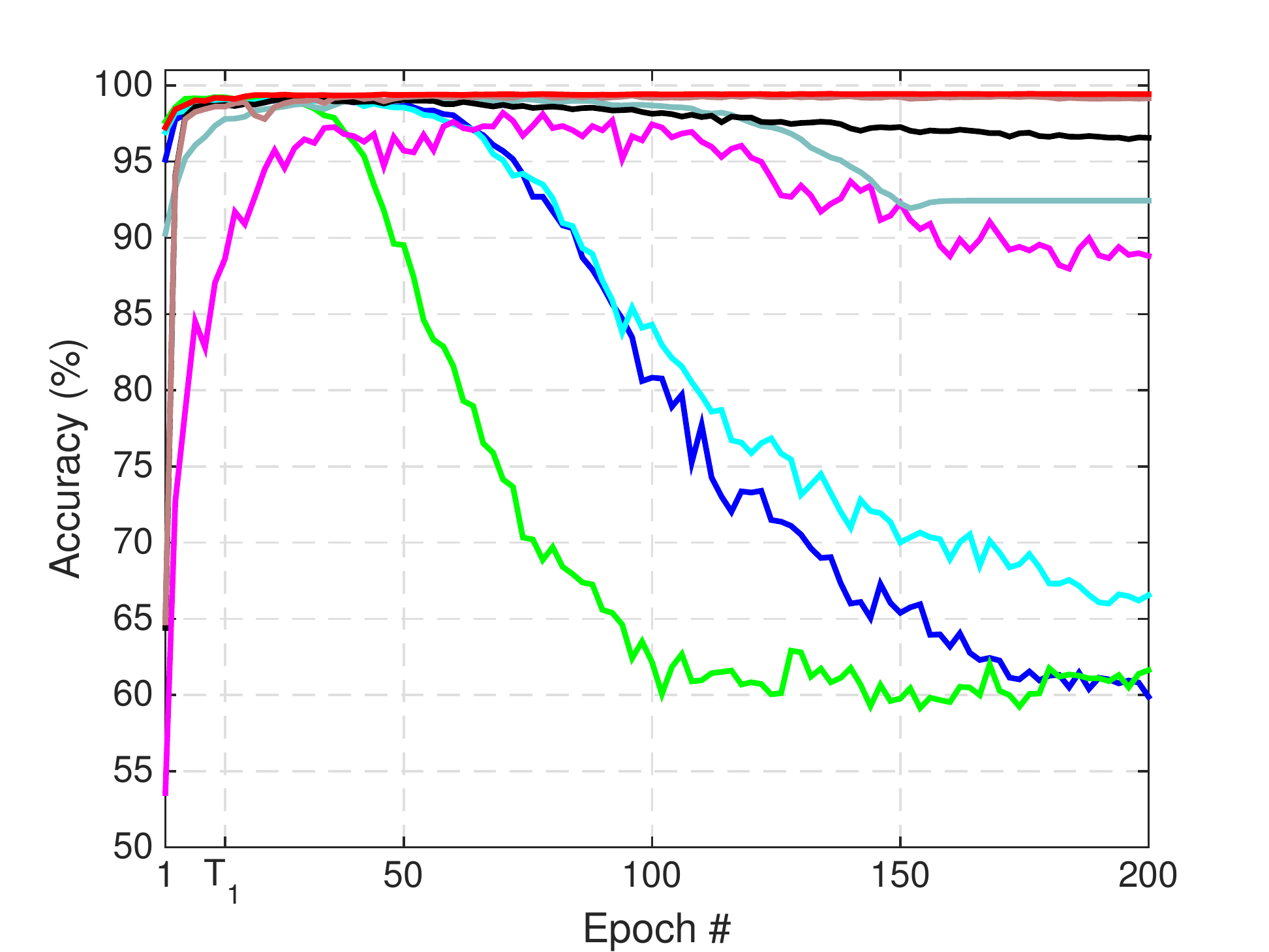}}
	\subfigure[ConvNet/Symmetry \(\epsilon\)=0.8]{\includegraphics[trim={0.5em 0em 3em 1em}, clip, width=0.22\textwidth]{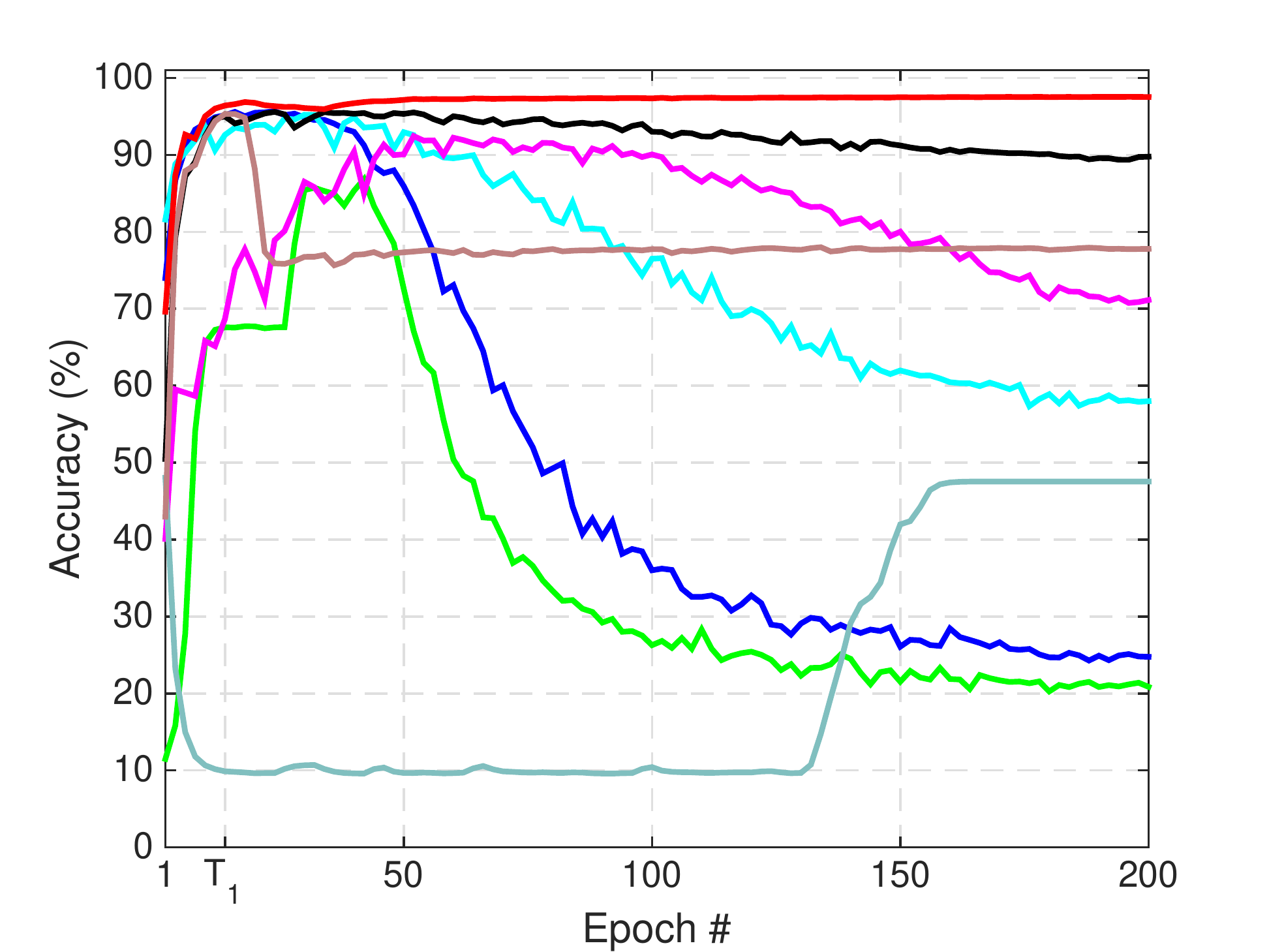}}
	\subfigure[ConvNet/Pair \(\epsilon\)=0.45]{\includegraphics[trim={0.5em 0em 3em 1em}, clip, width=0.22\textwidth]{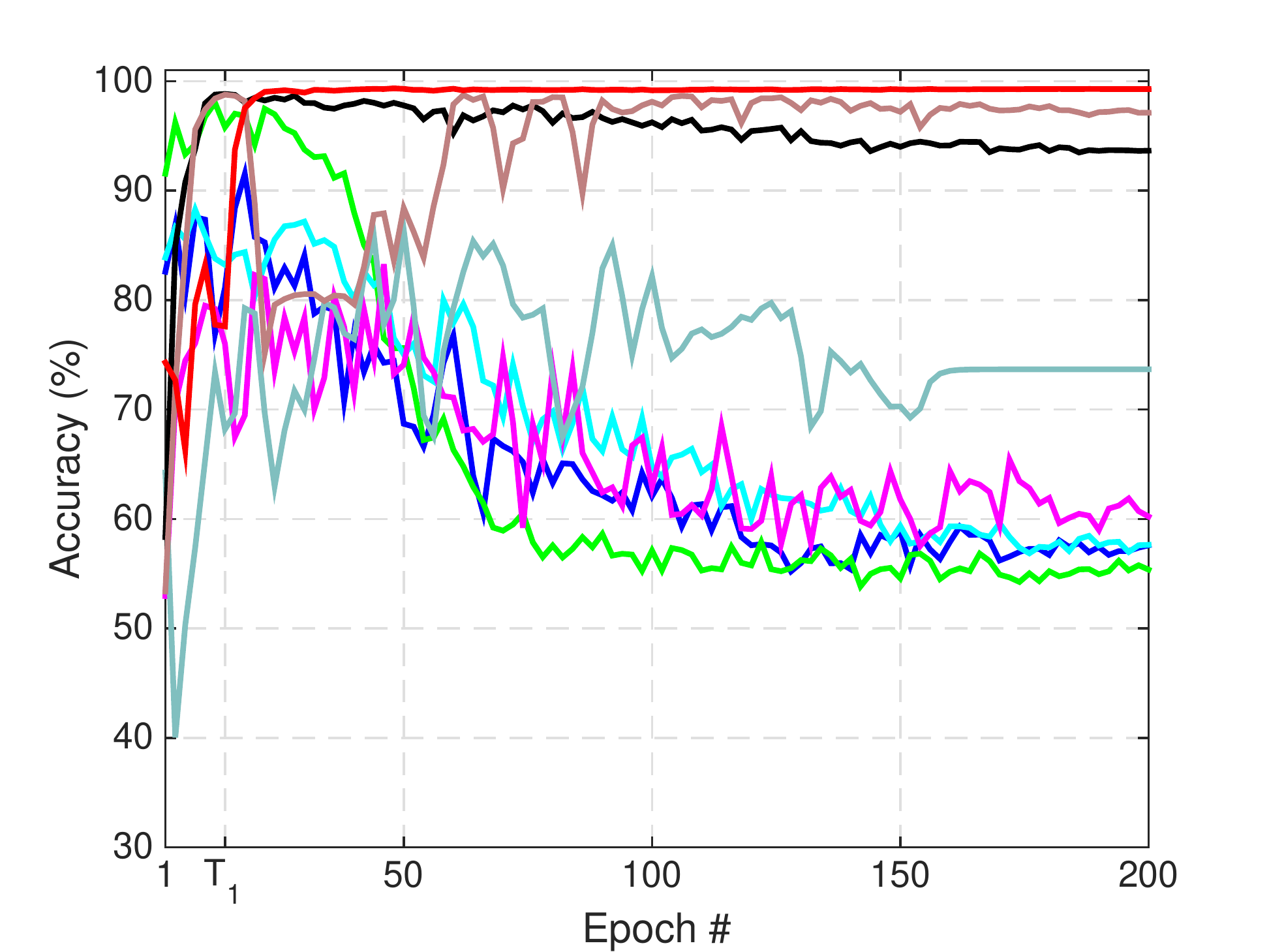}}
	\vspace{-1.3em}
	\caption{Testing accuracy of seven methods at different numbers of epochs on MNIST.} 
	\vspace{-1em}
	\label{fig:mnist}
\end{figure*}

\begin{figure*}[!tbp] \renewcommand\thefigure{A2}
	\includegraphics[trim={39em 5em 36em 23em}, clip, width=0.1\textwidth]{figures/legend.pdf}
	\subfigure[ResNet18/Symmetry \(\epsilon\)=0.2]{\includegraphics[trim={0.5em 0em 3em 1em}, clip, width=0.22\textwidth]{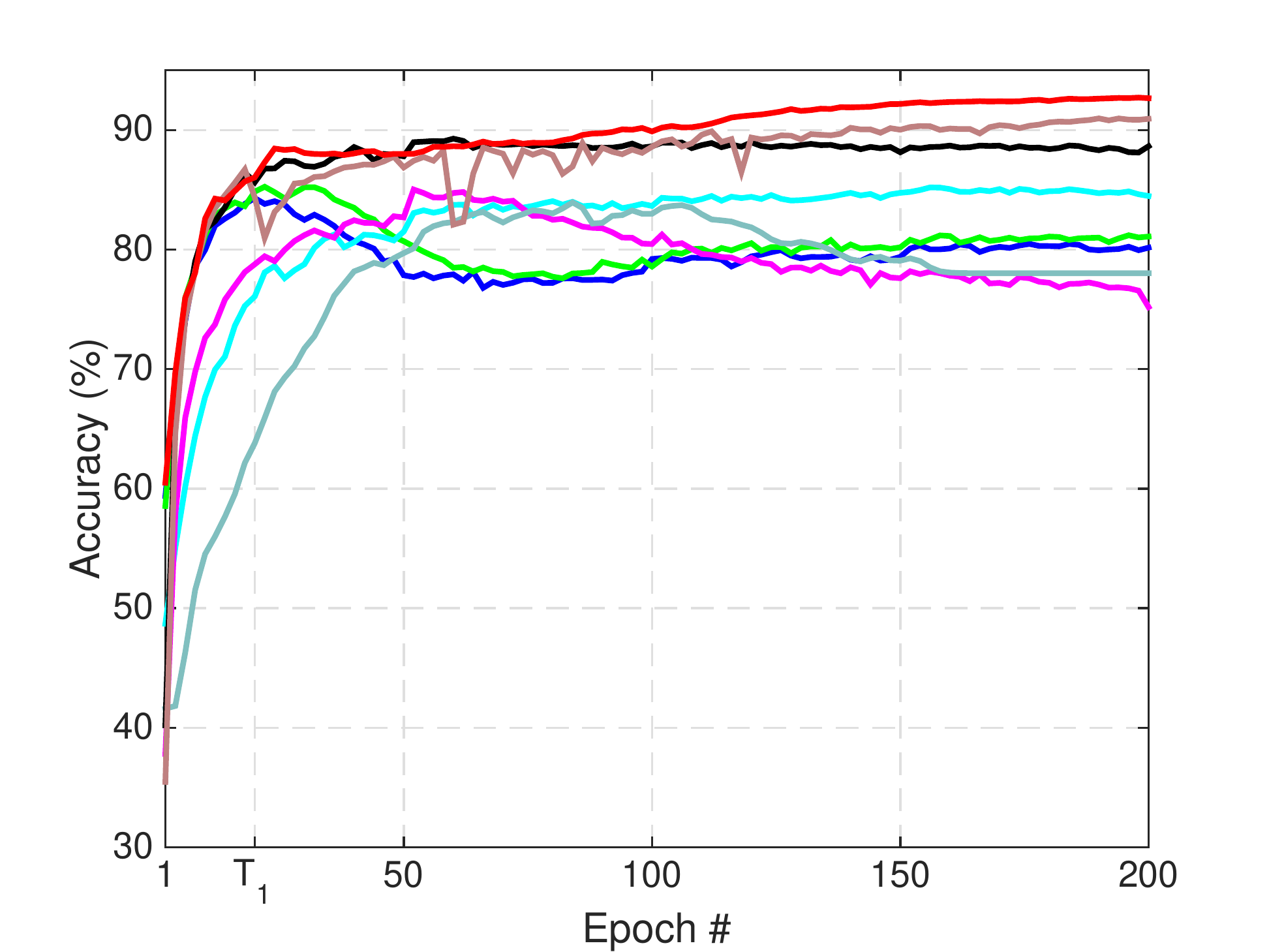}}
	\subfigure[ResNet18/Symmetry \(\epsilon\)=0.5]{\includegraphics[trim={0.5em 0em 3em 1em}, clip, width=0.22\textwidth]{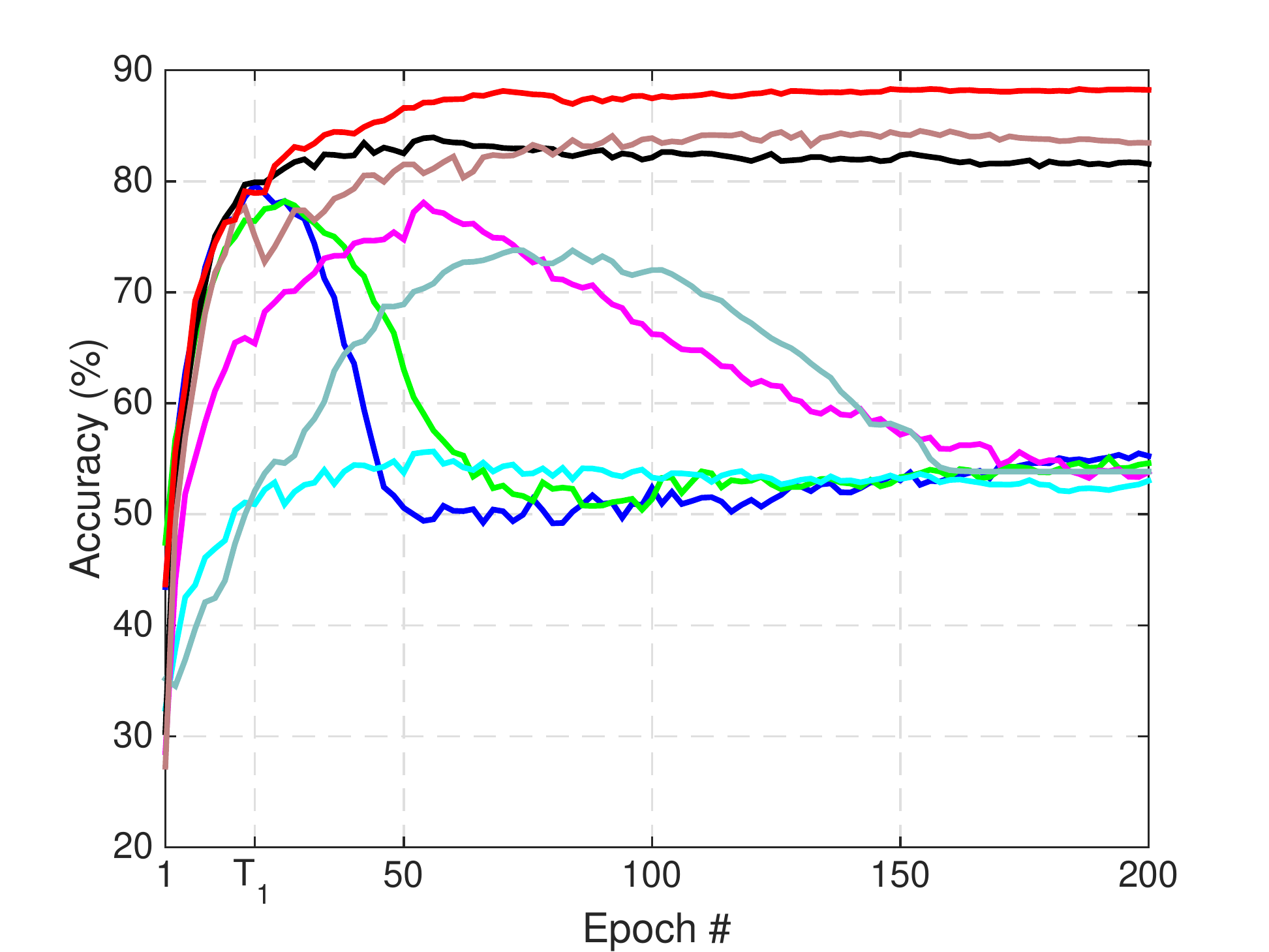}}
	\subfigure[ResNet18/Symmetry \(\epsilon\)=0.8]{\includegraphics[trim={0.5em 0em 3em 1em}, clip, width=0.22\textwidth]{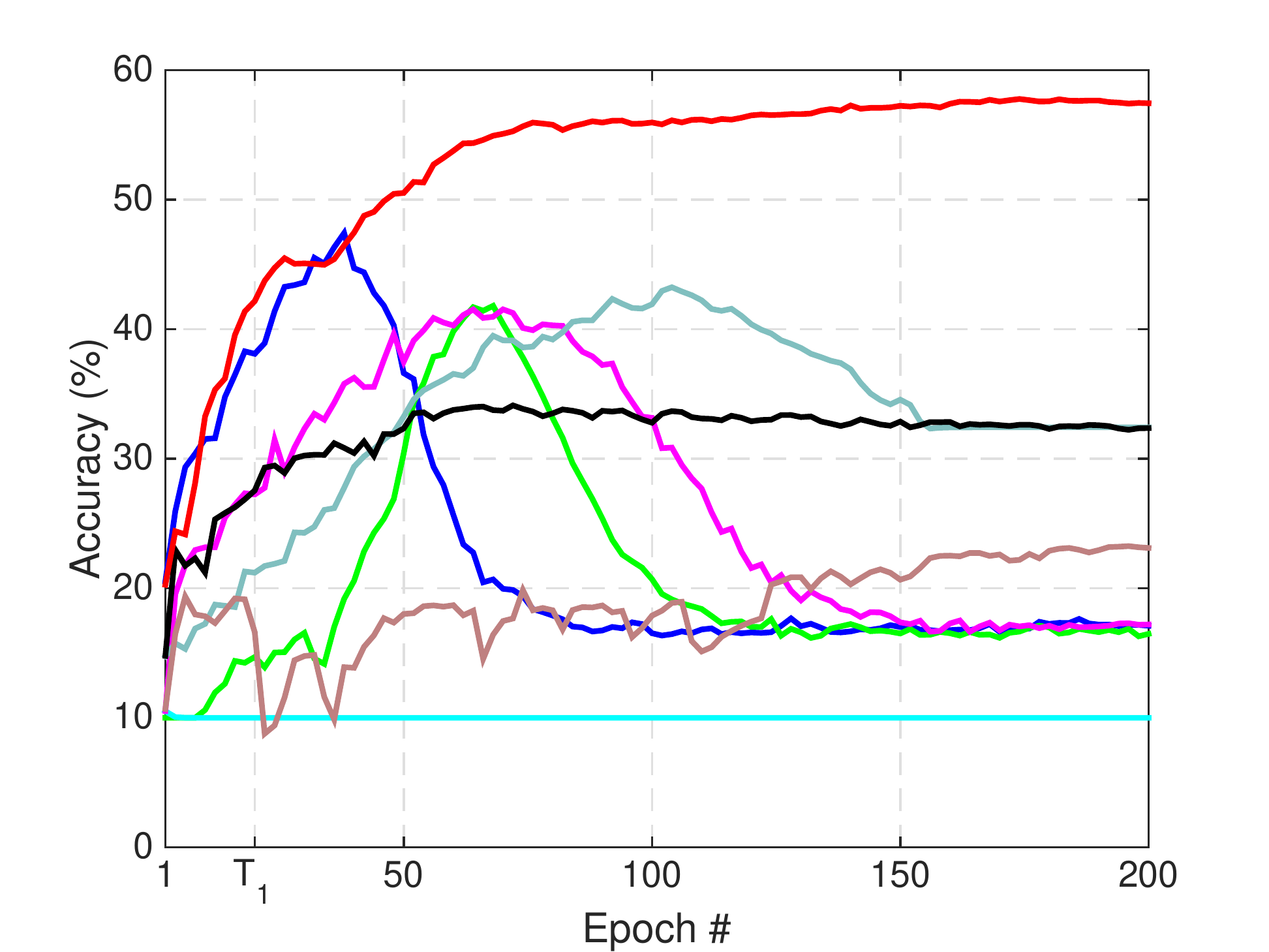}}
	\subfigure[ResNet18/Pair \(\epsilon\)=0.45]{\includegraphics[trim={0.5em 0em 3em 1em}, clip, width=0.22\textwidth]{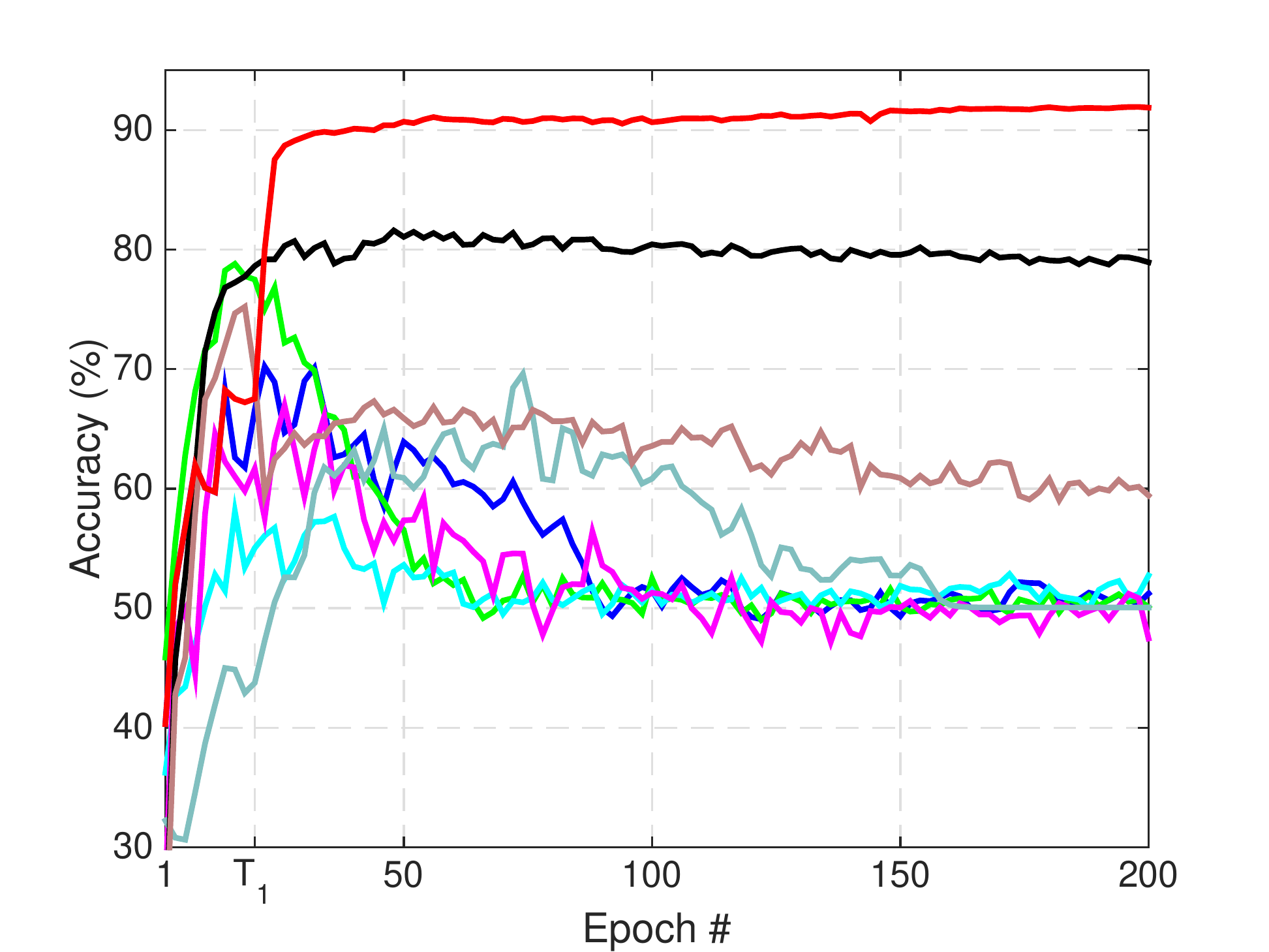}}
	\includegraphics[trim={39em 5em 36em 23em}, clip, width=0.1\textwidth]{figures/legend.pdf}
	\subfigure[ConvNet/Symmetry \(\epsilon\)=0.2]{\includegraphics[trim={0.5em 0em 3em 1em}, clip, width=0.22\textwidth]{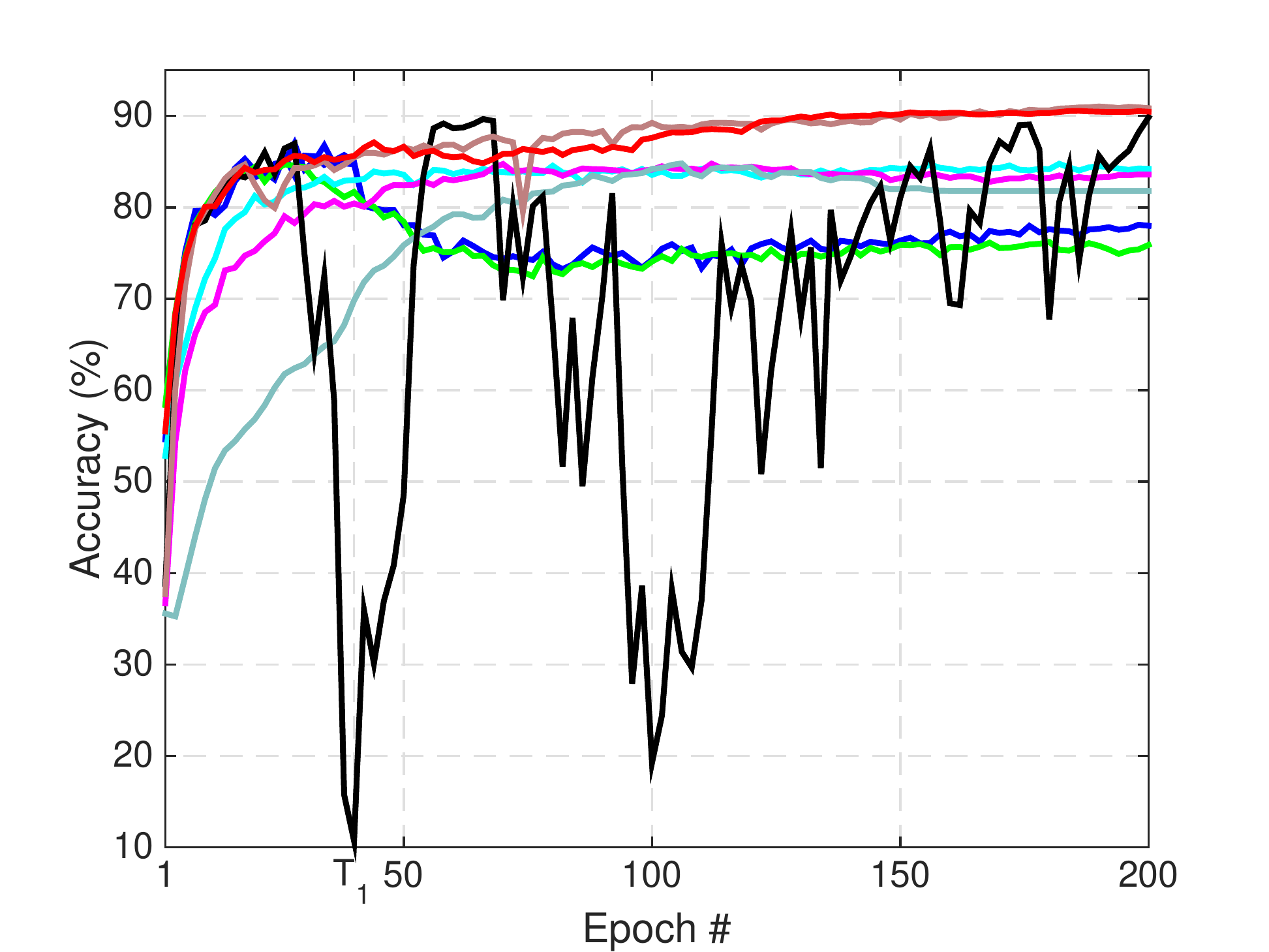}}
	\subfigure[ConvNet/Symmetry \(\epsilon\)=0.5]{\includegraphics[trim={0.5em 0em 3em 1em}, clip, width=0.22\textwidth]{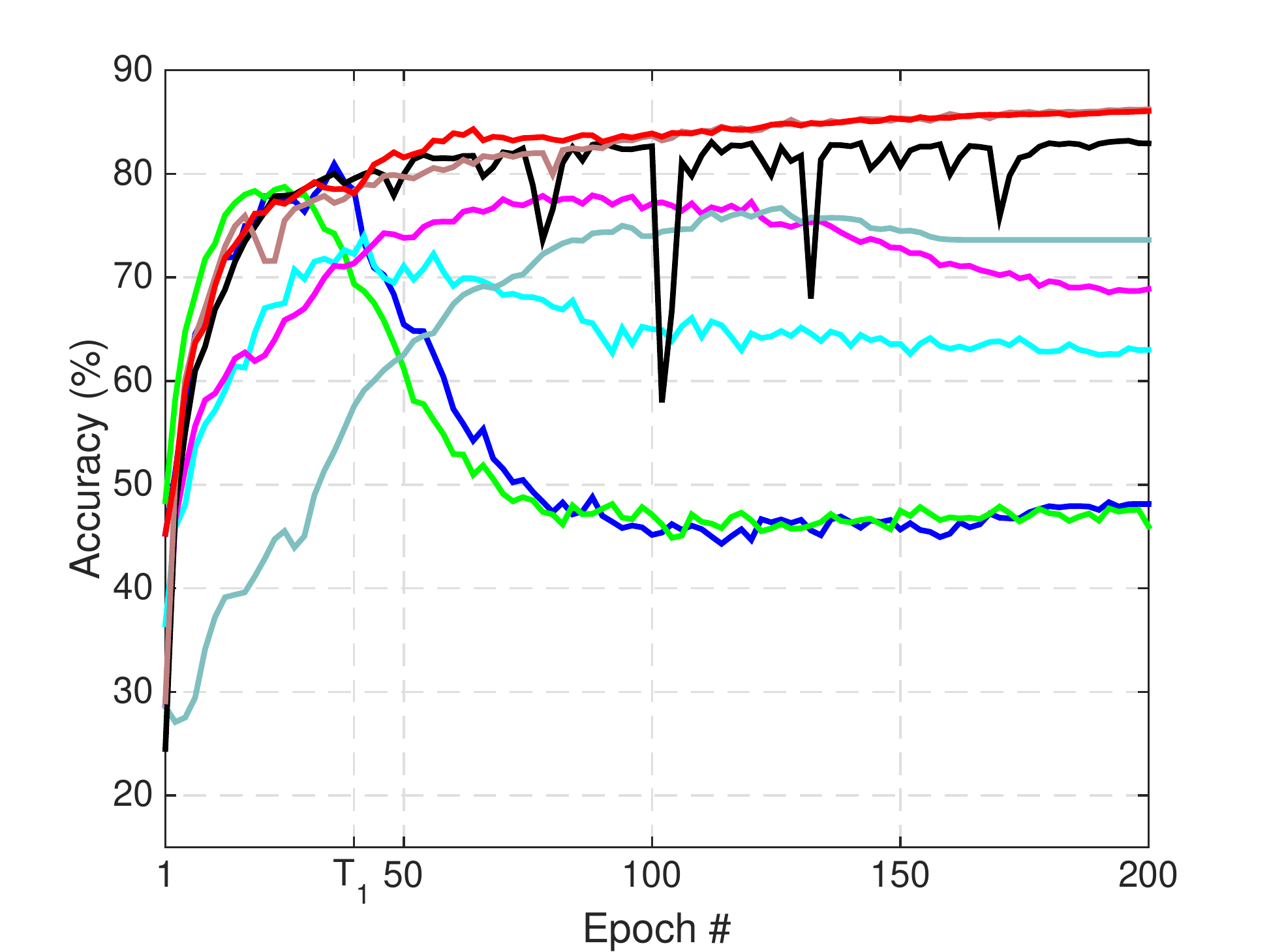}}
	\subfigure[ConvNet/Symmetry \(\epsilon\)=0.8]{\includegraphics[trim={0.5em 0em 3em 1em}, clip, width=0.22\textwidth]{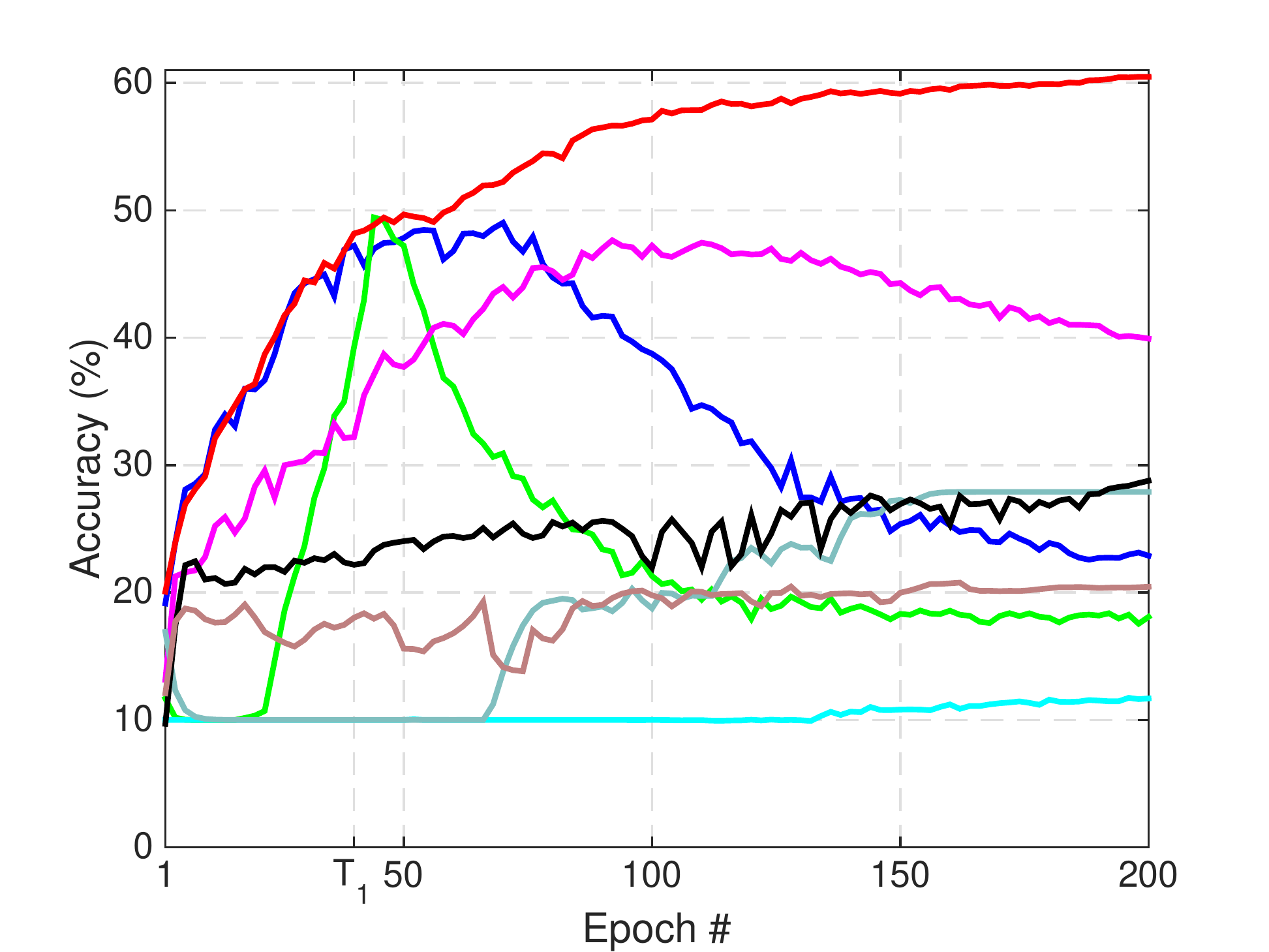}}
	\subfigure[ConvNet/Pair \(\epsilon\)=0.45]{\includegraphics[trim={0.5em 0em 3em 1em}, clip, width=0.22\textwidth]{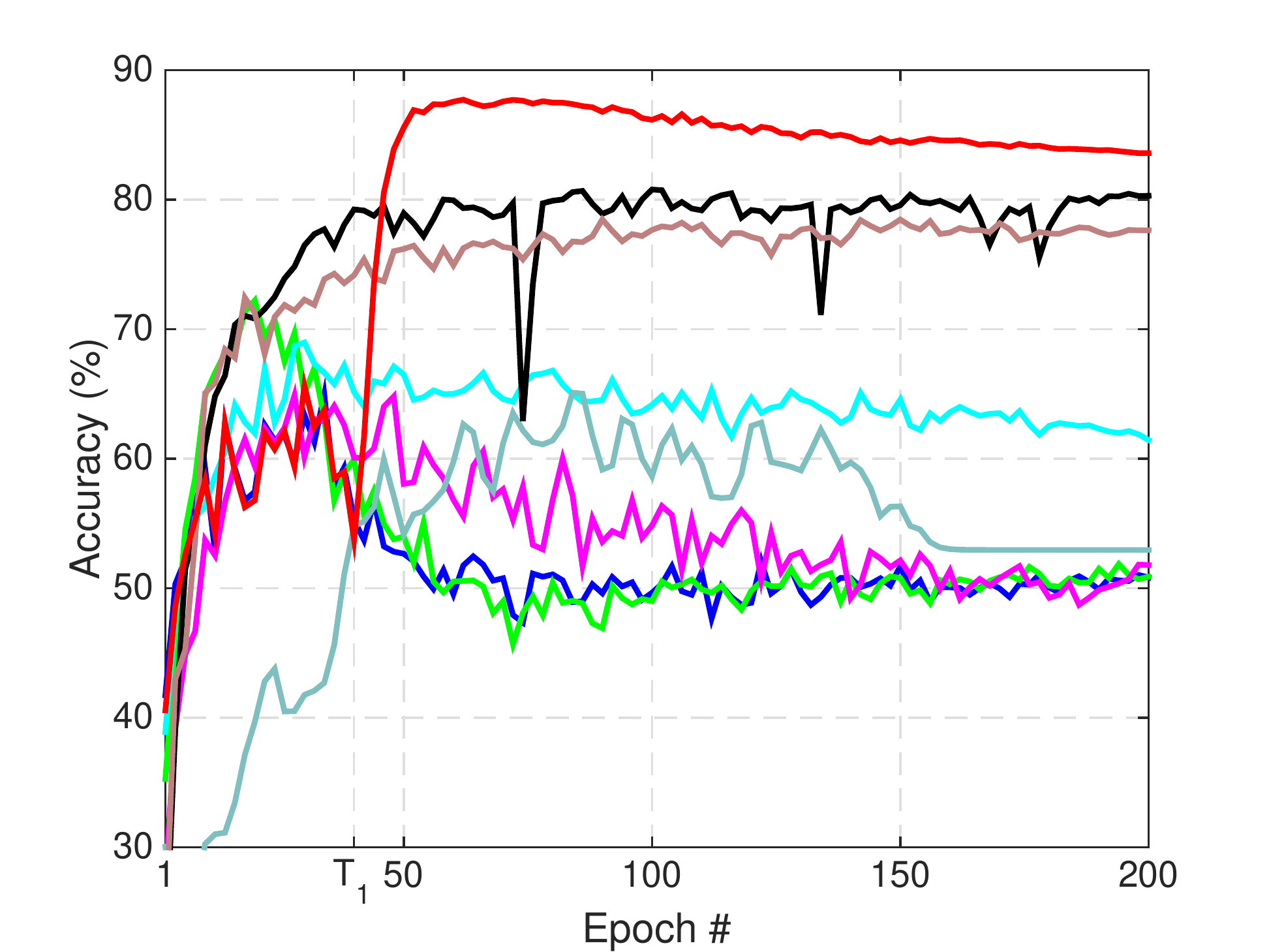}}
	\vspace{-1.3em}
	\caption{Testing accuracy of seven methods at different numbers of epochs on CIFAR-10.} 
	\vspace{-1em}
	\label{fig:cifar10}
\end{figure*}

\begin{figure*}[htb]\renewcommand\thefigure{A3}
	\includegraphics[trim={39em 5em 36em 23em}, clip, width=0.1\textwidth]{figures/legend.pdf}
	\subfigure[ResNet18/Symmetry \(\epsilon\)=0.2]{\includegraphics[trim={0.5em 0em 3em 1em}, clip, width=0.22\textwidth]{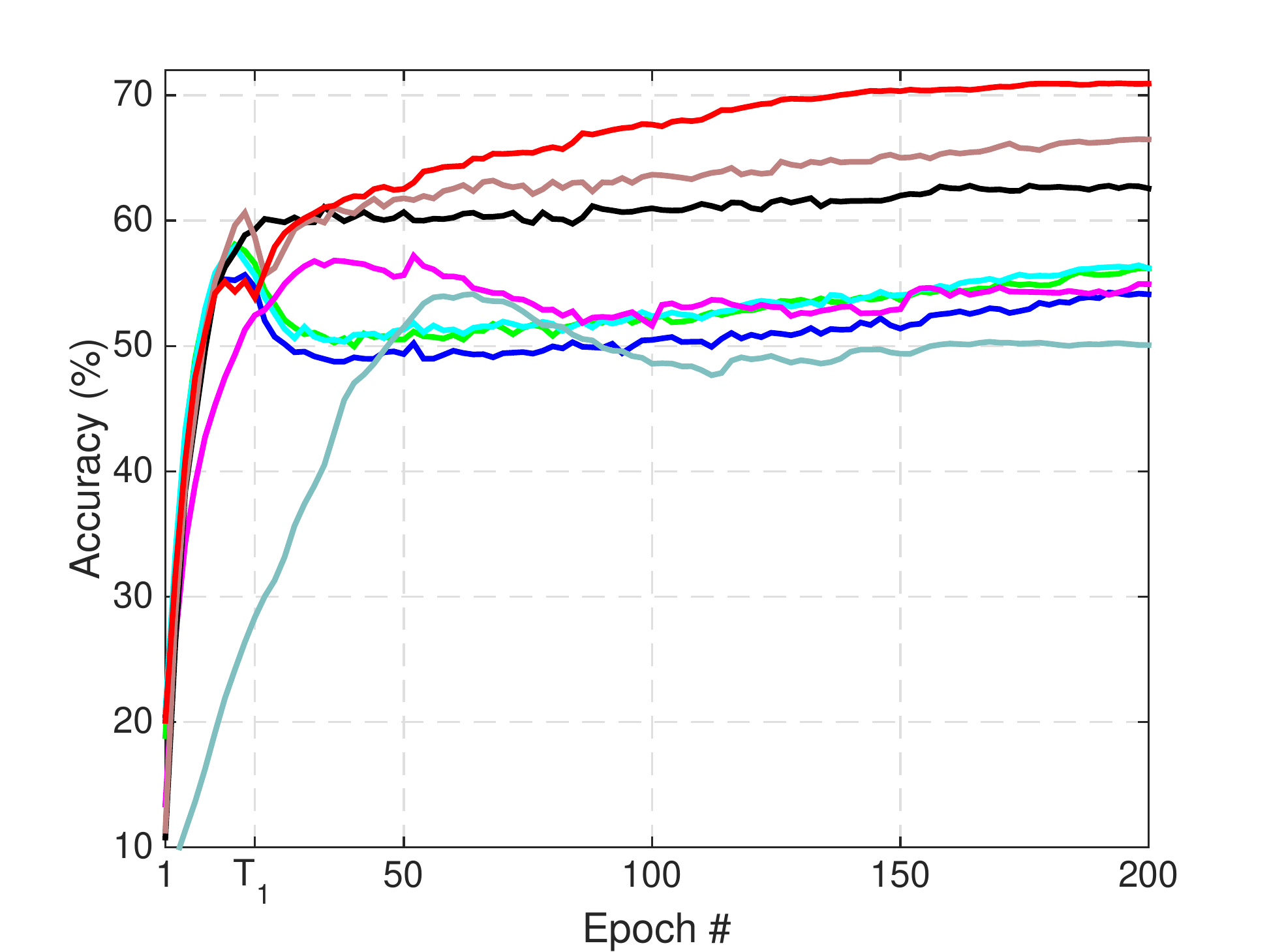}}
	\subfigure[ResNet18/Symmetry \(\epsilon\)=0.5]{\includegraphics[trim={0.5em 0em 3em 1em}, clip, width=0.22\textwidth]{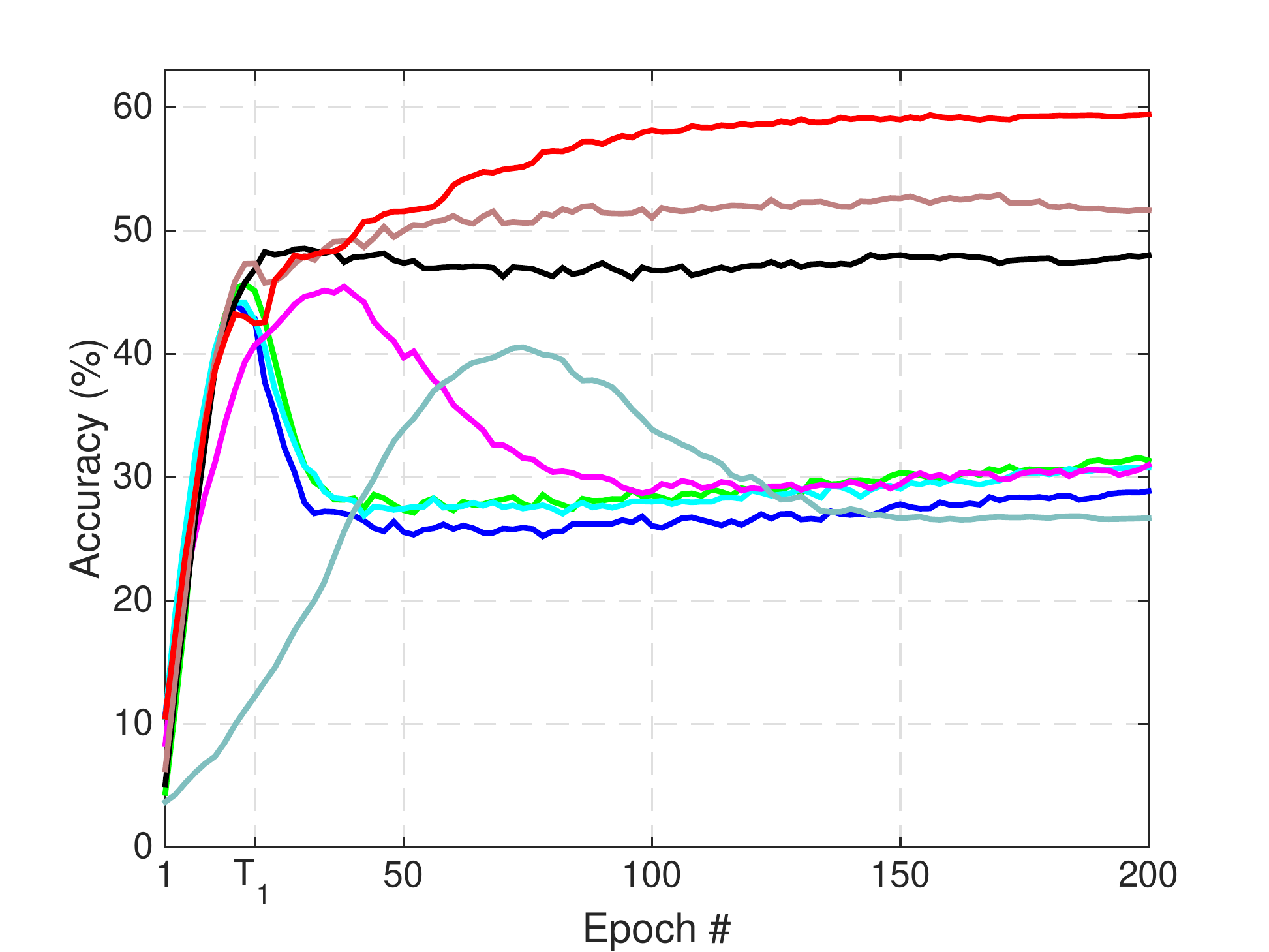}}
	\subfigure[ResNet18/Symmetry \(\epsilon\)=0.8]{\includegraphics[trim={0.5em 0em 3em 1em}, clip, width=0.22\textwidth]{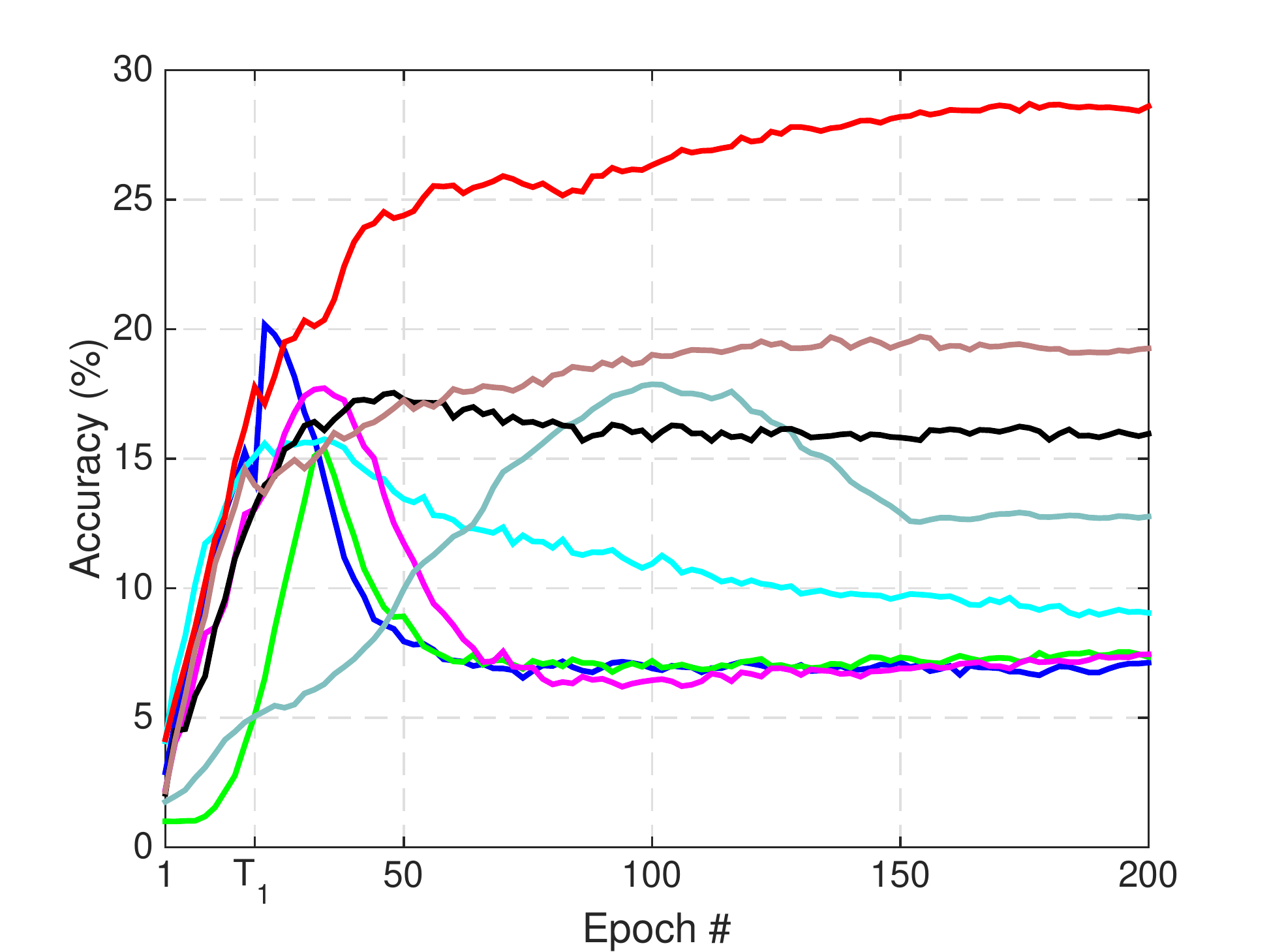}}
	\subfigure[ResNet18/Pair \(\epsilon\)=0.45]{\includegraphics[trim={0.5em 0em 3em 1em}, clip, width=0.22\textwidth]{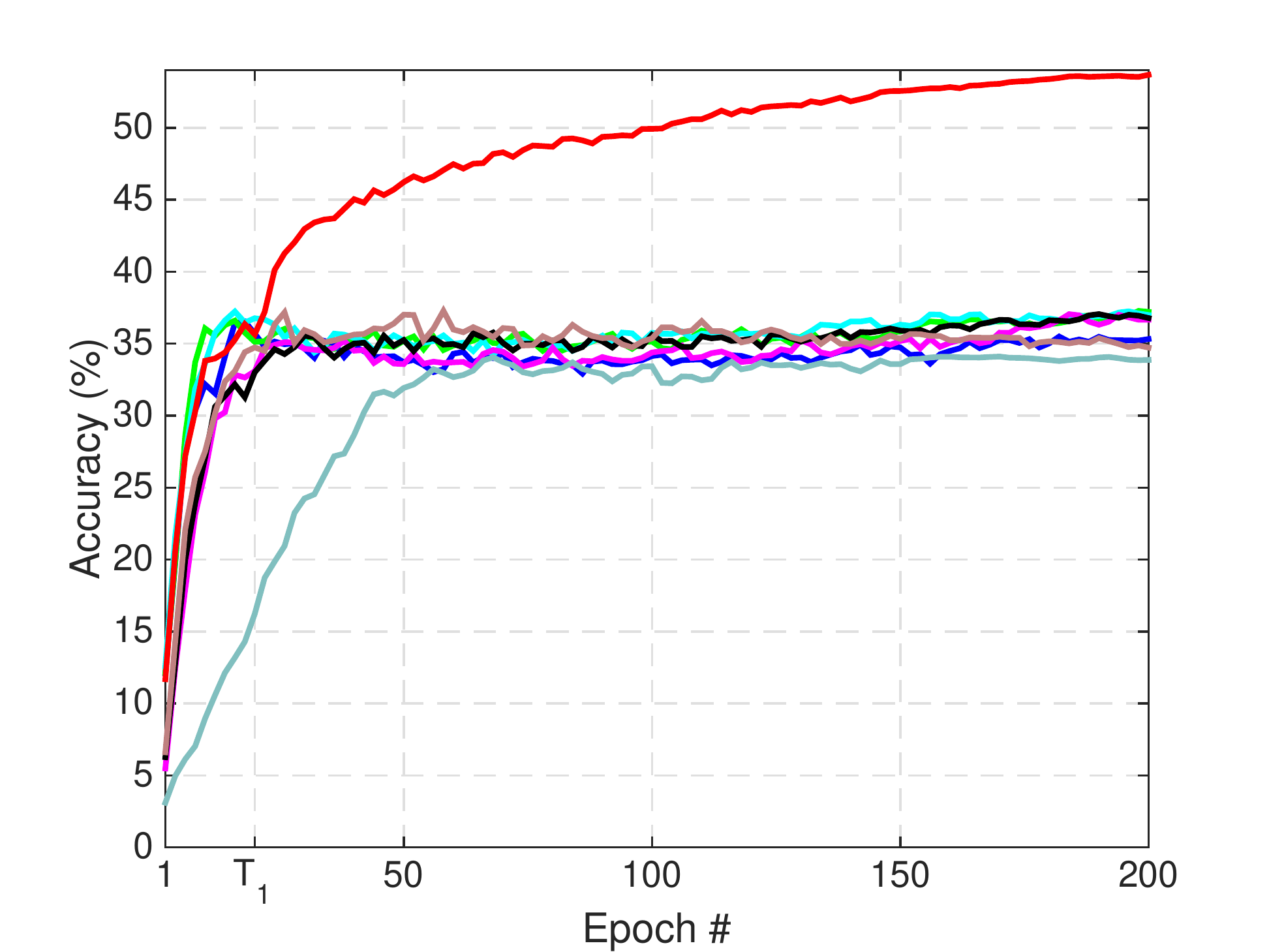}}
	\includegraphics[trim={39em 5em 36em 23em}, clip, width=0.1\textwidth]{figures/legend.pdf}
	\subfigure[ ConvNet/Symmetry \(\epsilon\)=0.2]{\includegraphics[trim={0.5em 0em 3em 1em}, clip, width=0.22\textwidth]{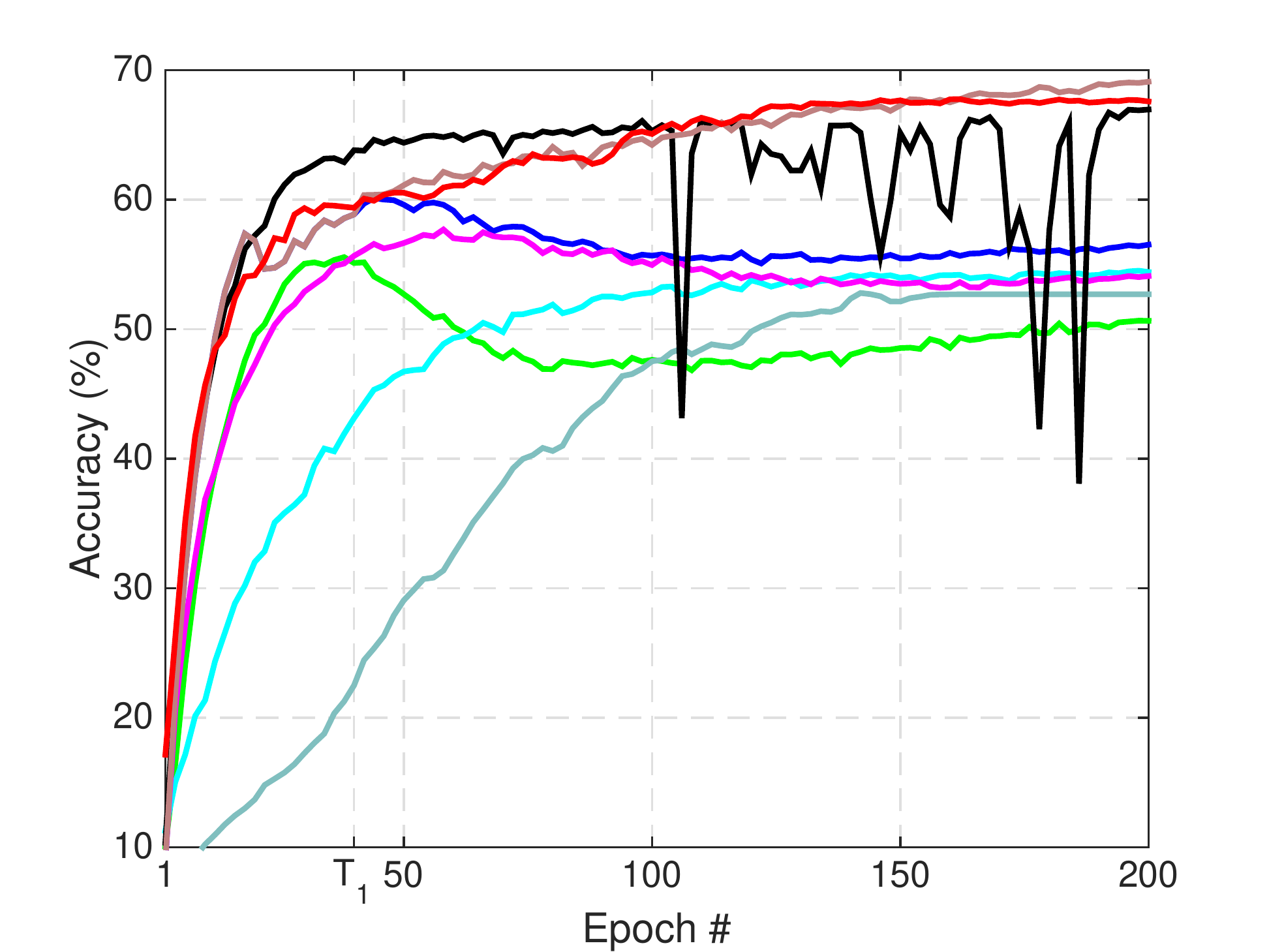}}
	\subfigure[ ConvNet/Symmetry \(\epsilon\)=0.5]{\includegraphics[trim={0.5em 0em 3em 1em}, clip, width=0.22\textwidth]{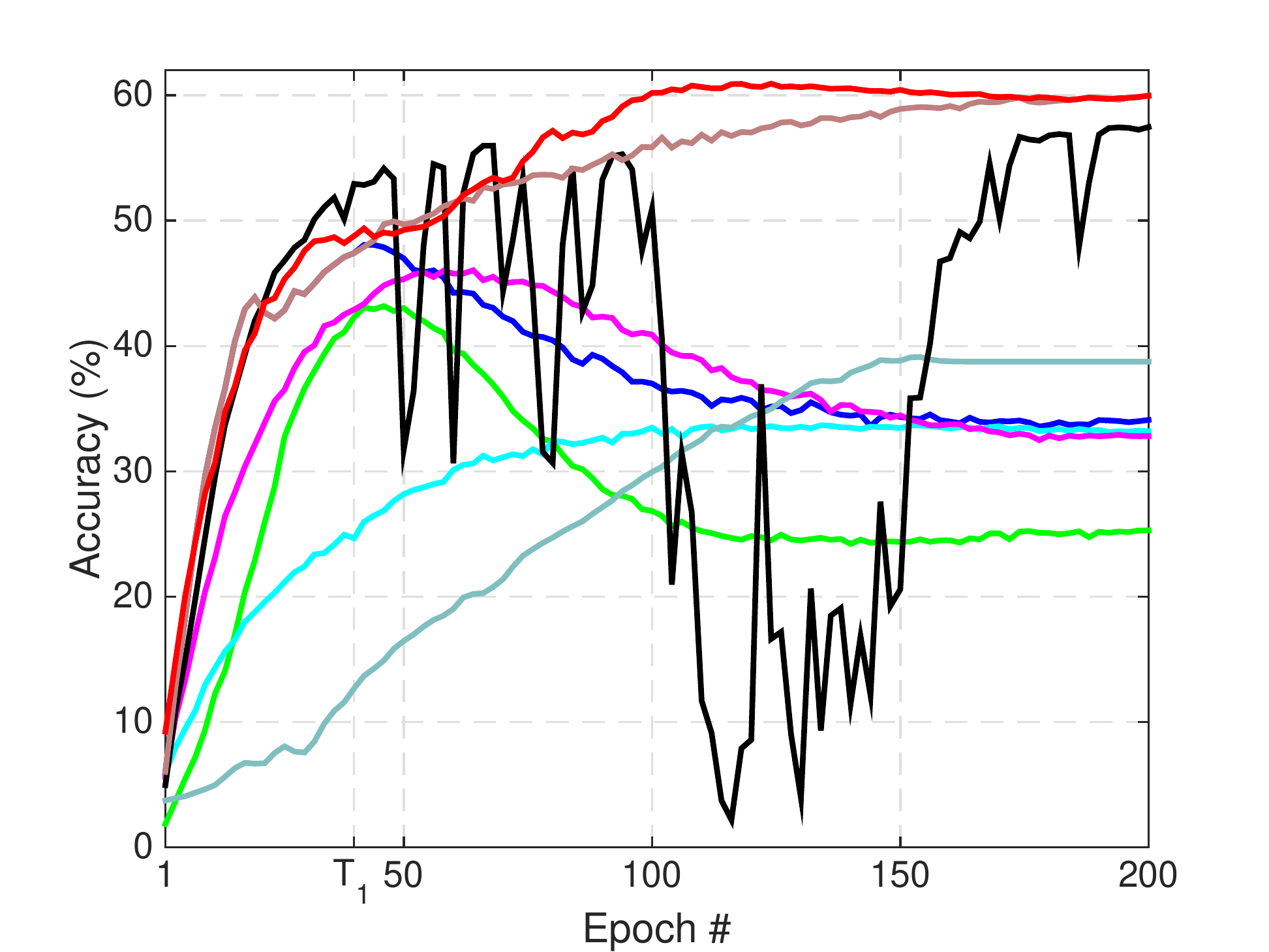}}
	\subfigure[ ConvNet/Symmetry \(\epsilon\)=0.8]{\includegraphics[trim={0.5em 0em 3em 1em}, clip, width=0.22\textwidth]{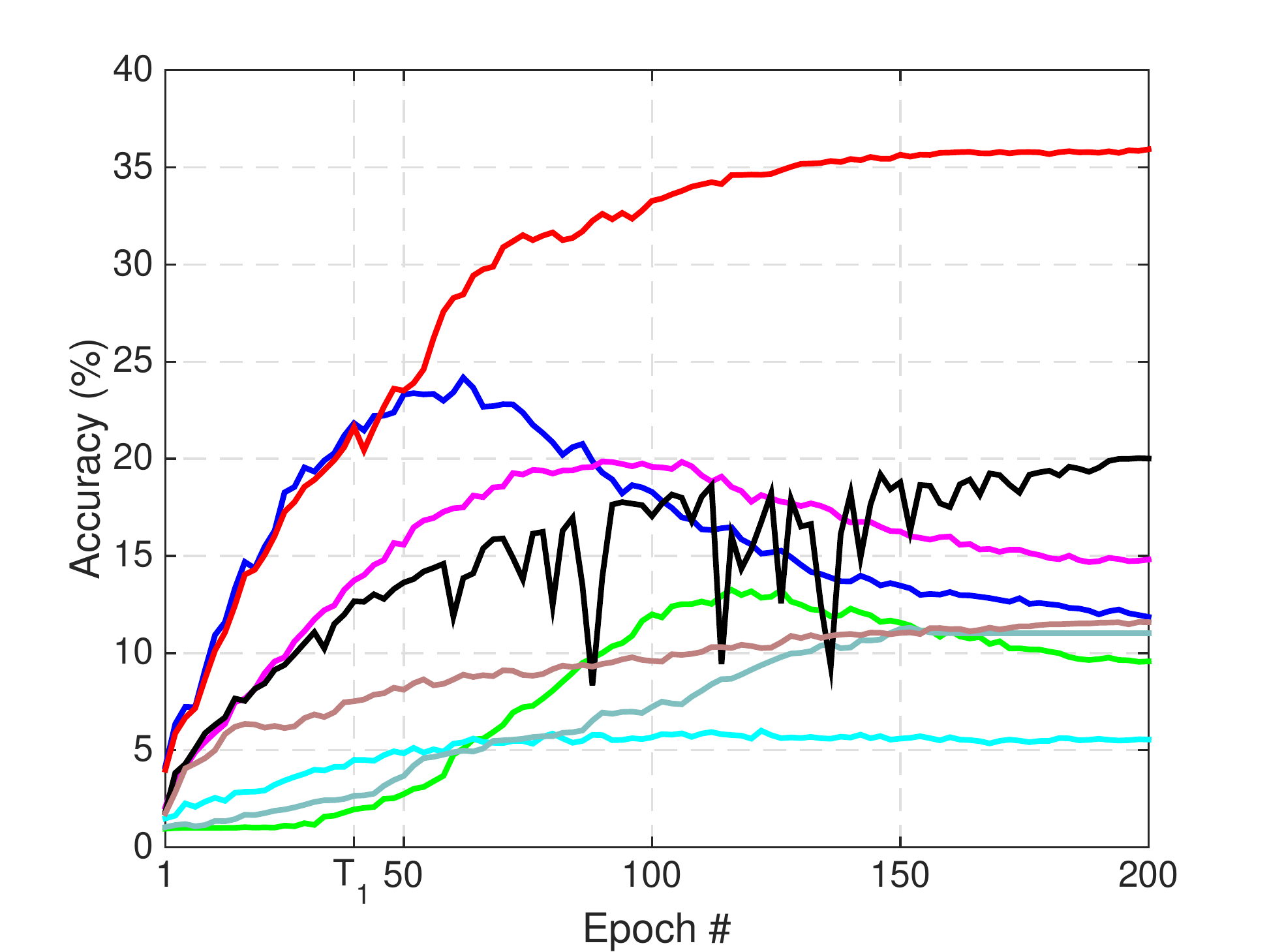}}
	\subfigure[ConvNet/Pair \(\epsilon\)=0.45]{\includegraphics[trim={0.5em 0em 3em 1em}, clip, width=0.22\textwidth]{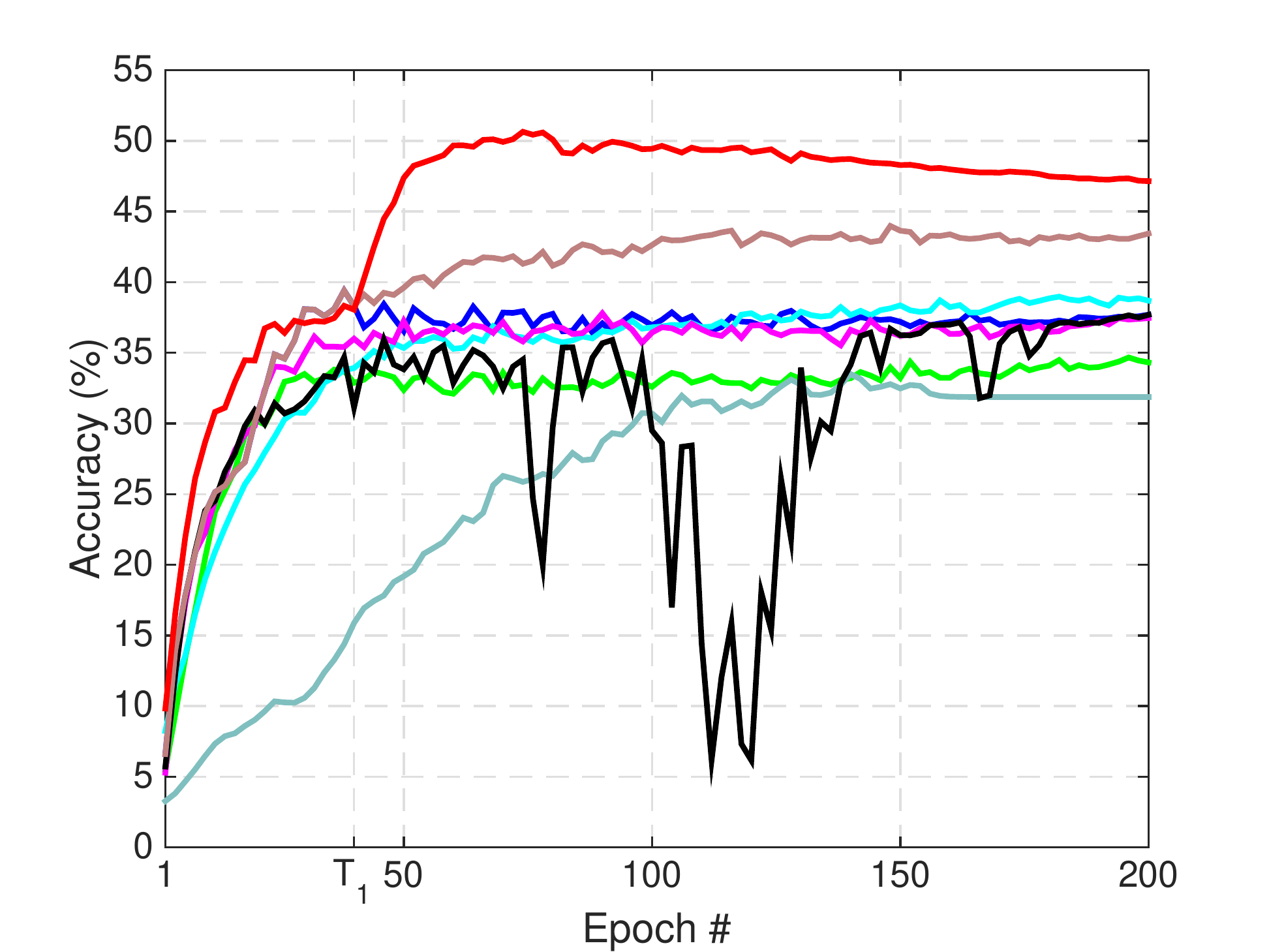}}
	\vspace{-1em}
	\caption{Testing accuracy of seven methods at different numbers of epochs on CIFAR-100.} 
	\label{fig:cifar100}
	\vspace{-1em}
\end{figure*}

\begin{figure*}[ht]\renewcommand\thefigure{A4}
	\includegraphics[trim={39em 5em 36em 23em}, clip, width=0.1\textwidth]{figures/legend.pdf}
	\subfigure[ResNet18/Symmetry \(\epsilon\)=0.2]{\includegraphics[trim={0.5em 0em 3em 1em}, clip, width=0.22\textwidth]{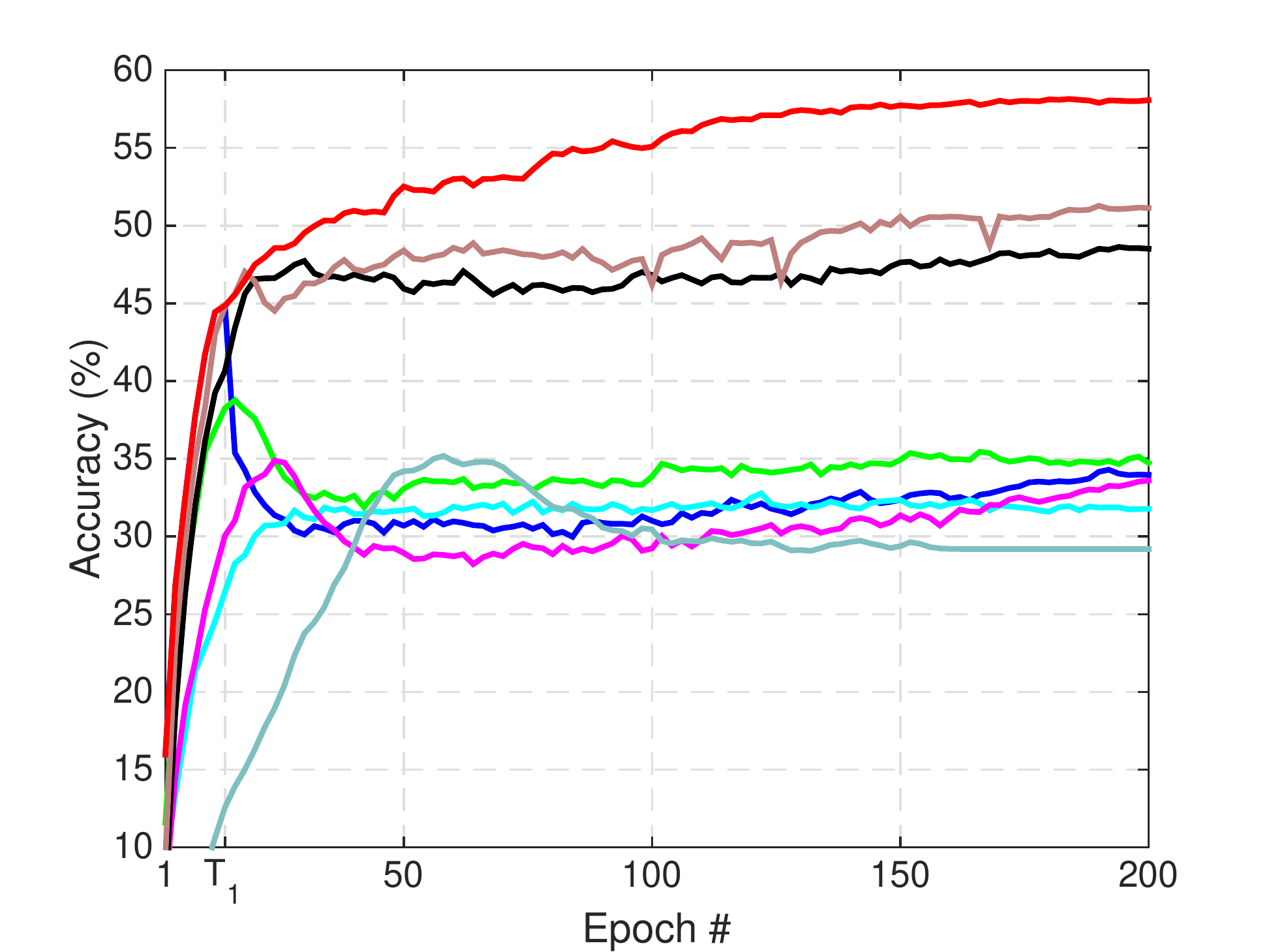}}
	\subfigure[ResNet18/Symmetry \(\epsilon\)=0.5]{\includegraphics[trim={0.5em 0em 3em 1em}, clip, width=0.22\textwidth]{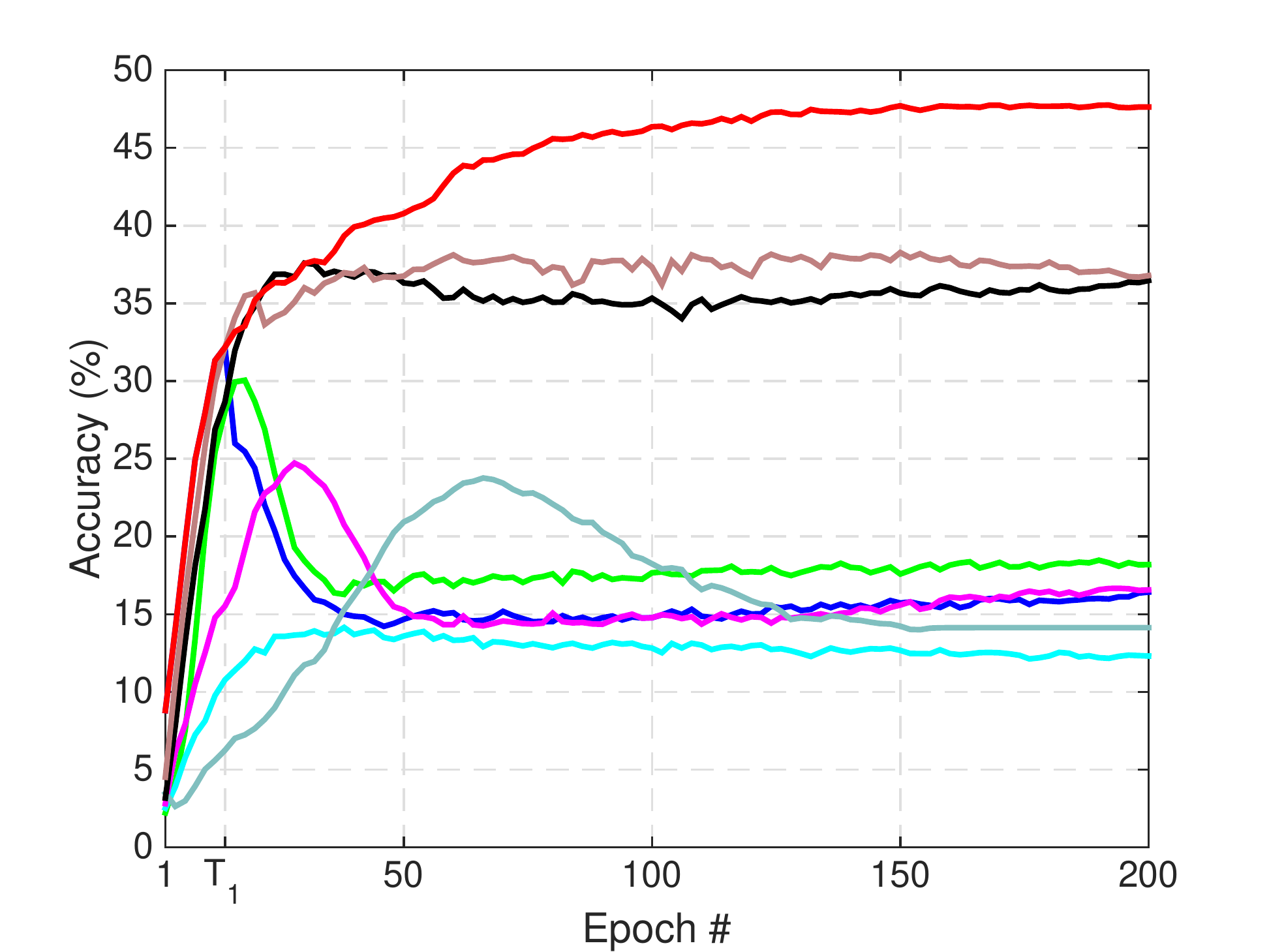}}
	\subfigure[ResNet18/Symmetry \(\epsilon\)=0.8]{\includegraphics[trim={0.5em 0em 3em 1em}, clip, width=0.22\textwidth]{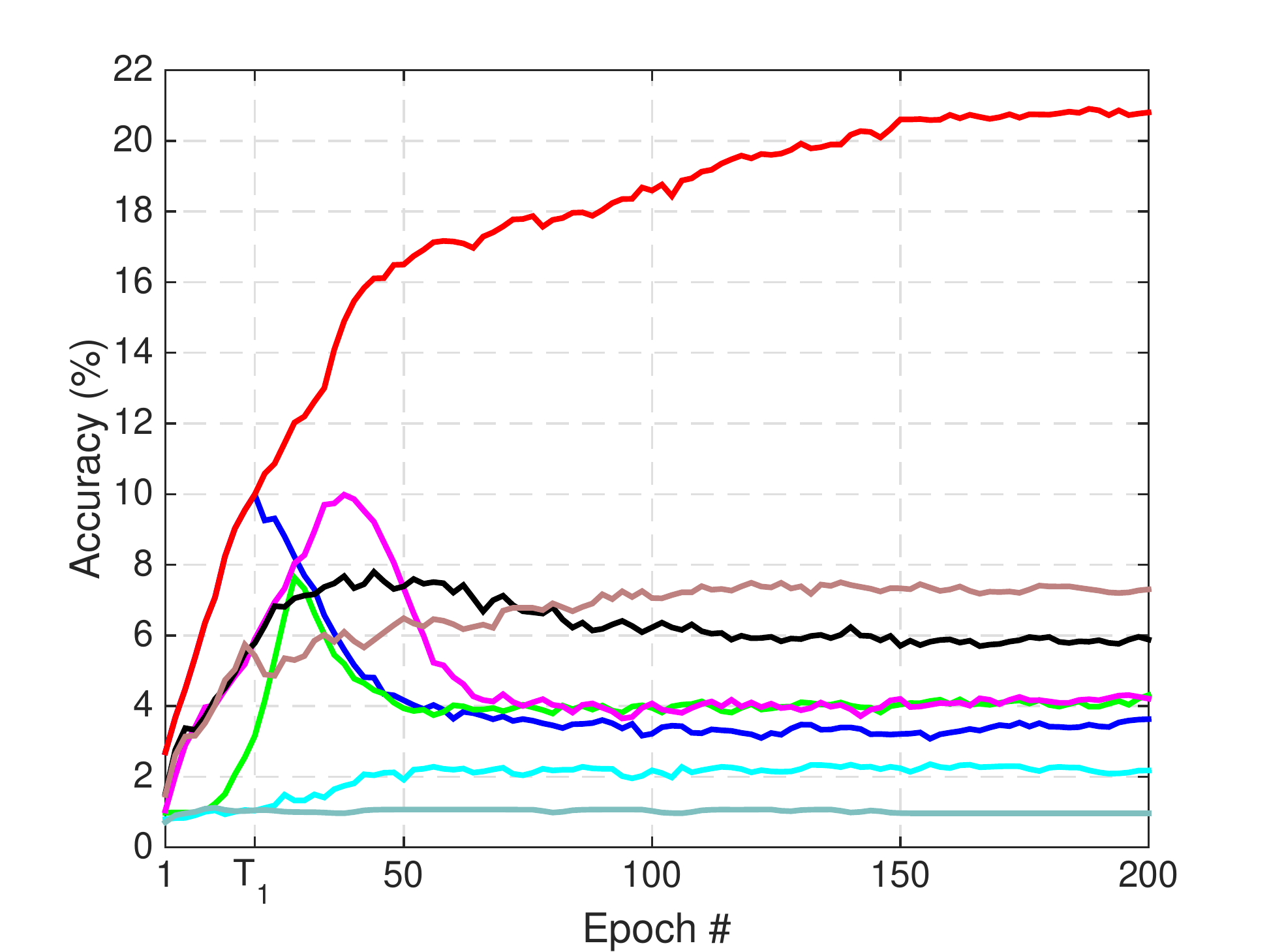}}
	\subfigure[ResNet18/Pair \(\epsilon\)=0.45]{\includegraphics[trim={0.5em 0em 3em 1em}, clip, width=0.22\textwidth]{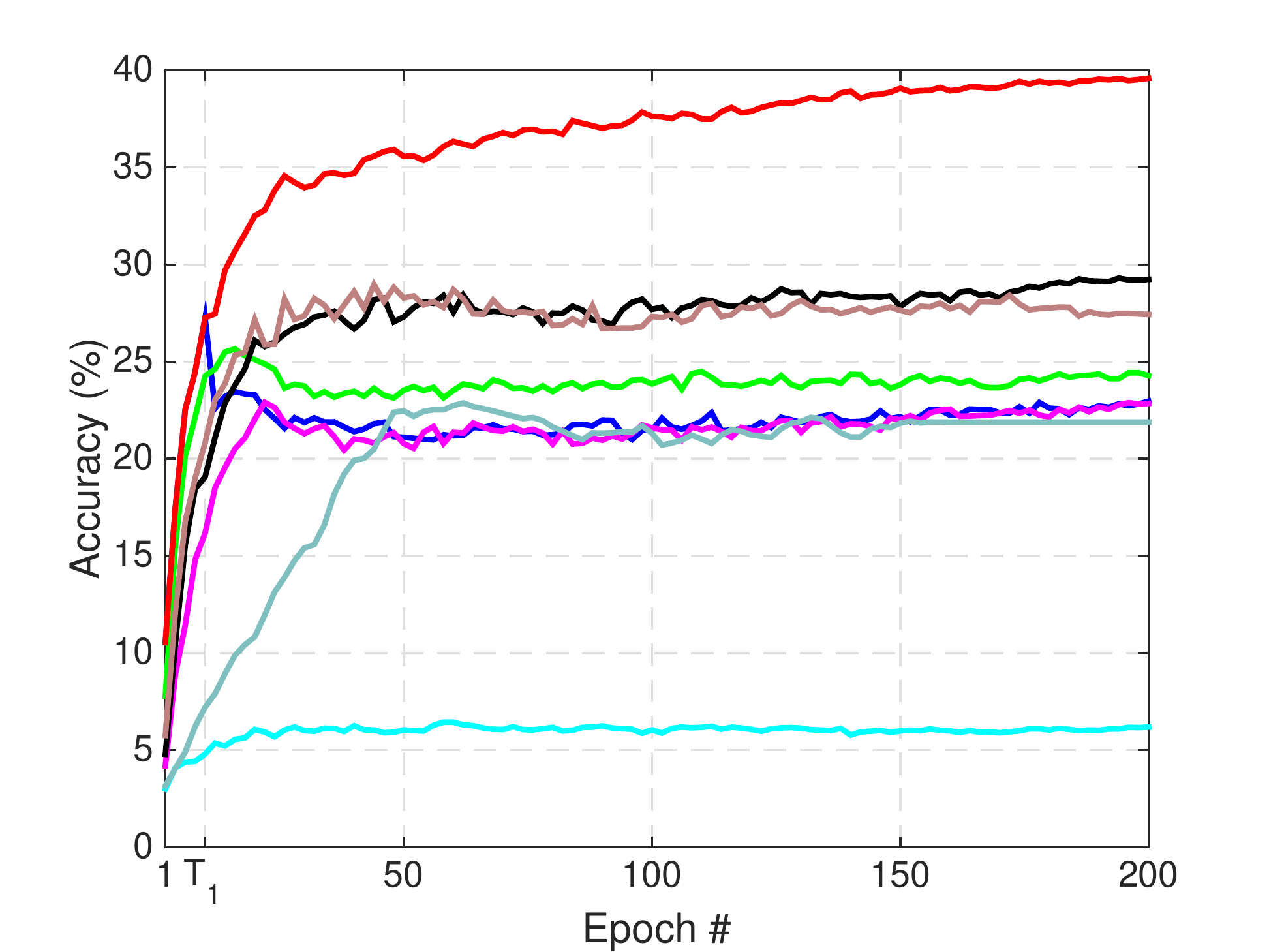}}
	\includegraphics[trim={39em 5em 36em 23em}, clip, width=0.1\textwidth]{figures/legend.pdf}
	\subfigure[ ConvNet/Symmetry \(\epsilon\)=0.2]{\includegraphics[trim={0.5em 0em 3em 1em}, clip, width=0.22\textwidth]{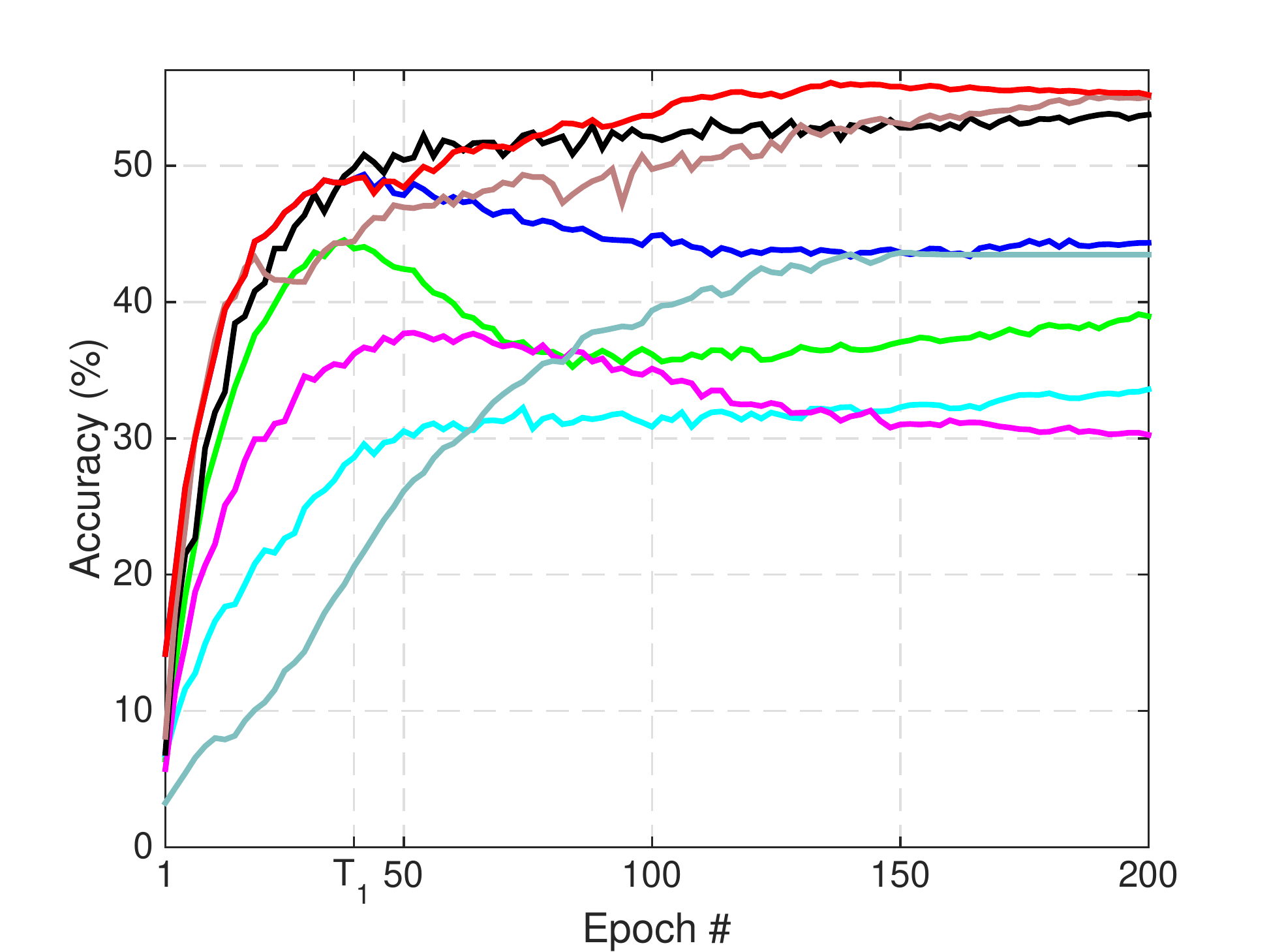}}
	\subfigure[ConvNet/Symmetry \(\epsilon\)=0.5]{\includegraphics[trim={0.5em 0em 3em 1em}, clip, width=0.22\textwidth]{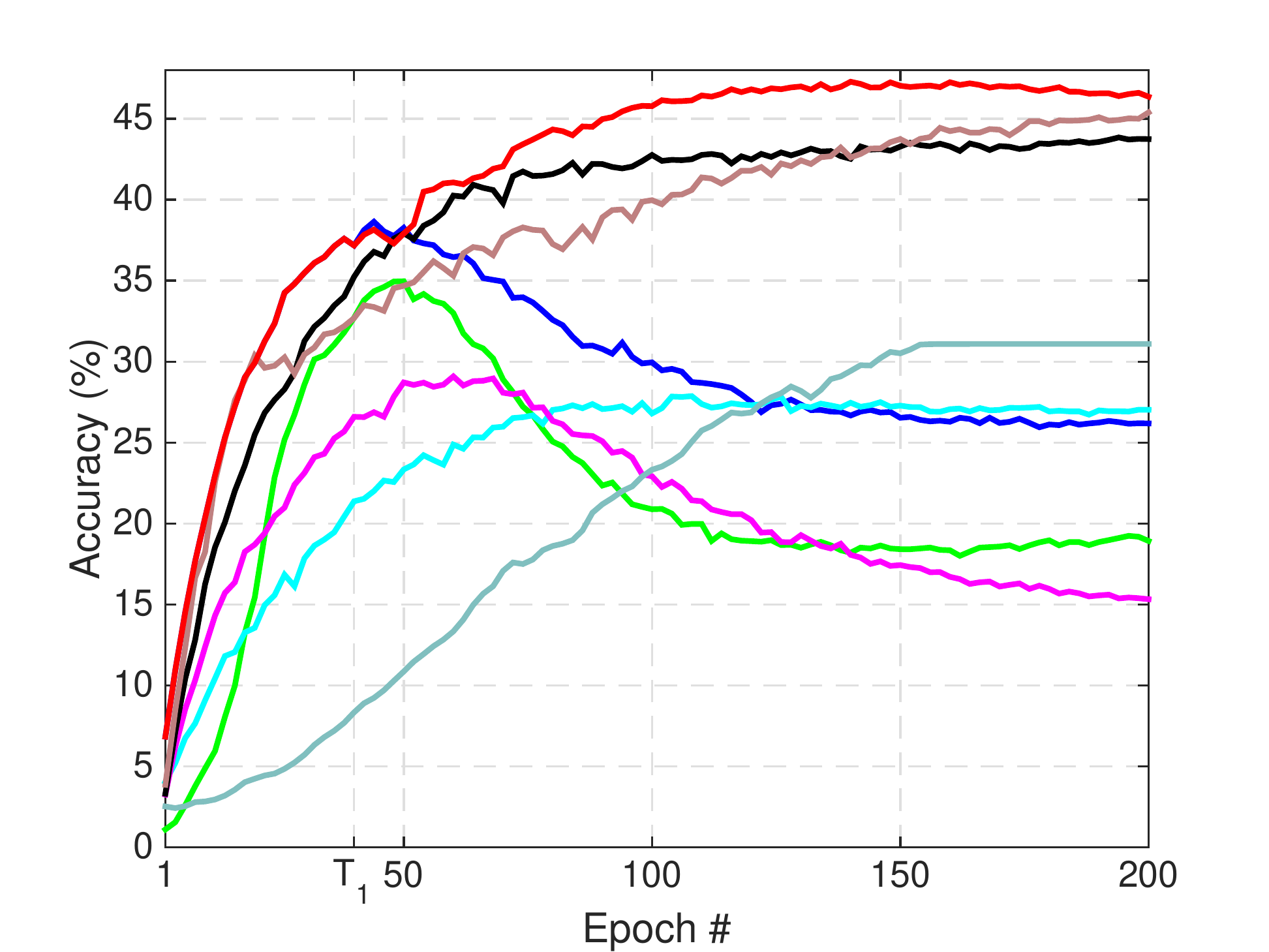}}
	\subfigure[ConvNet/Symmetry \(\epsilon\)=0.8]{\includegraphics[trim={0.5em 0em 3em 1em}, clip, width=0.22\textwidth]{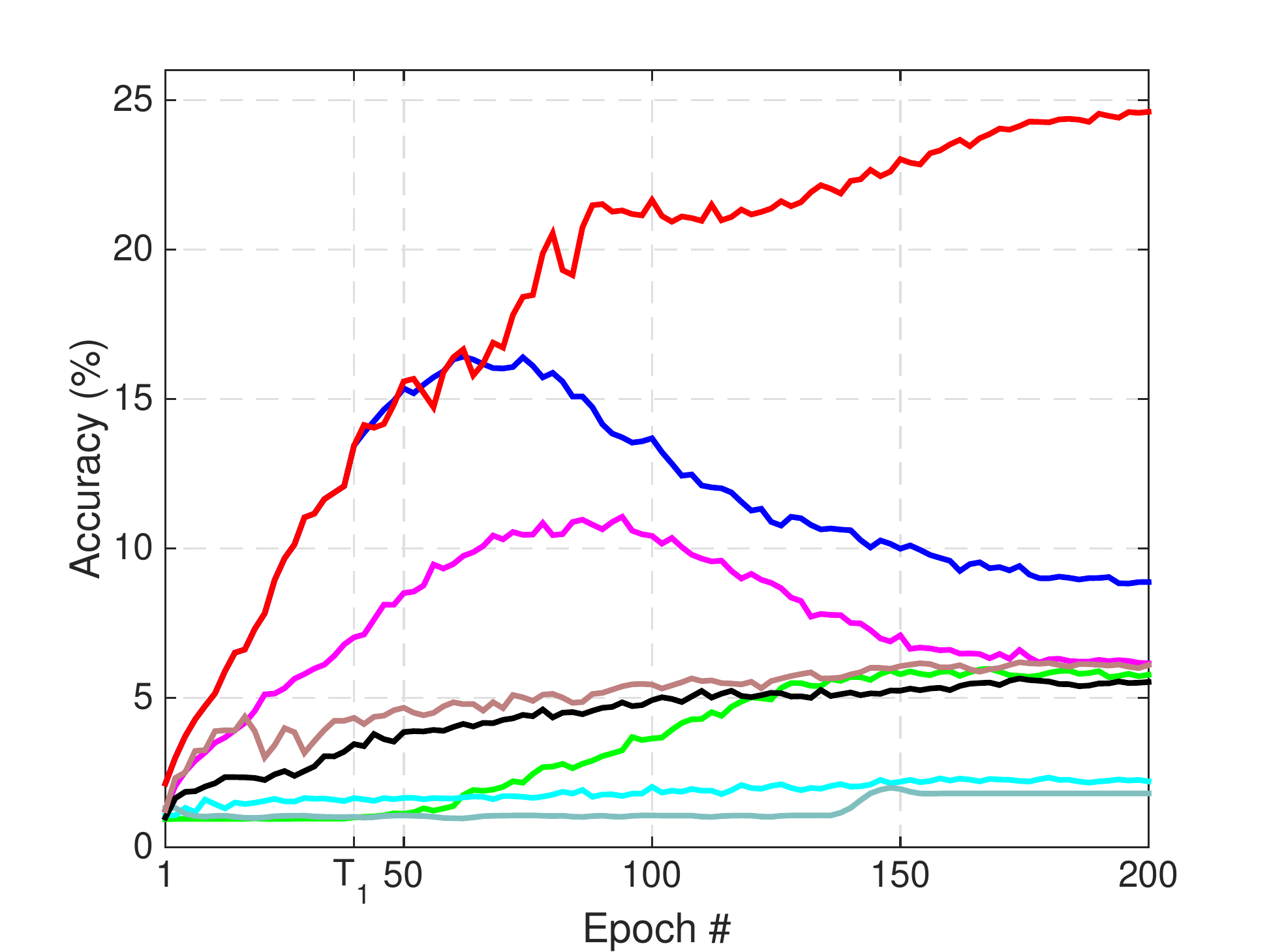}}
	\subfigure[ConvNet/Pair \(\epsilon\)=0.45]{\includegraphics[trim={0.5em 0em 3em 1em}, clip, width=0.22\textwidth]{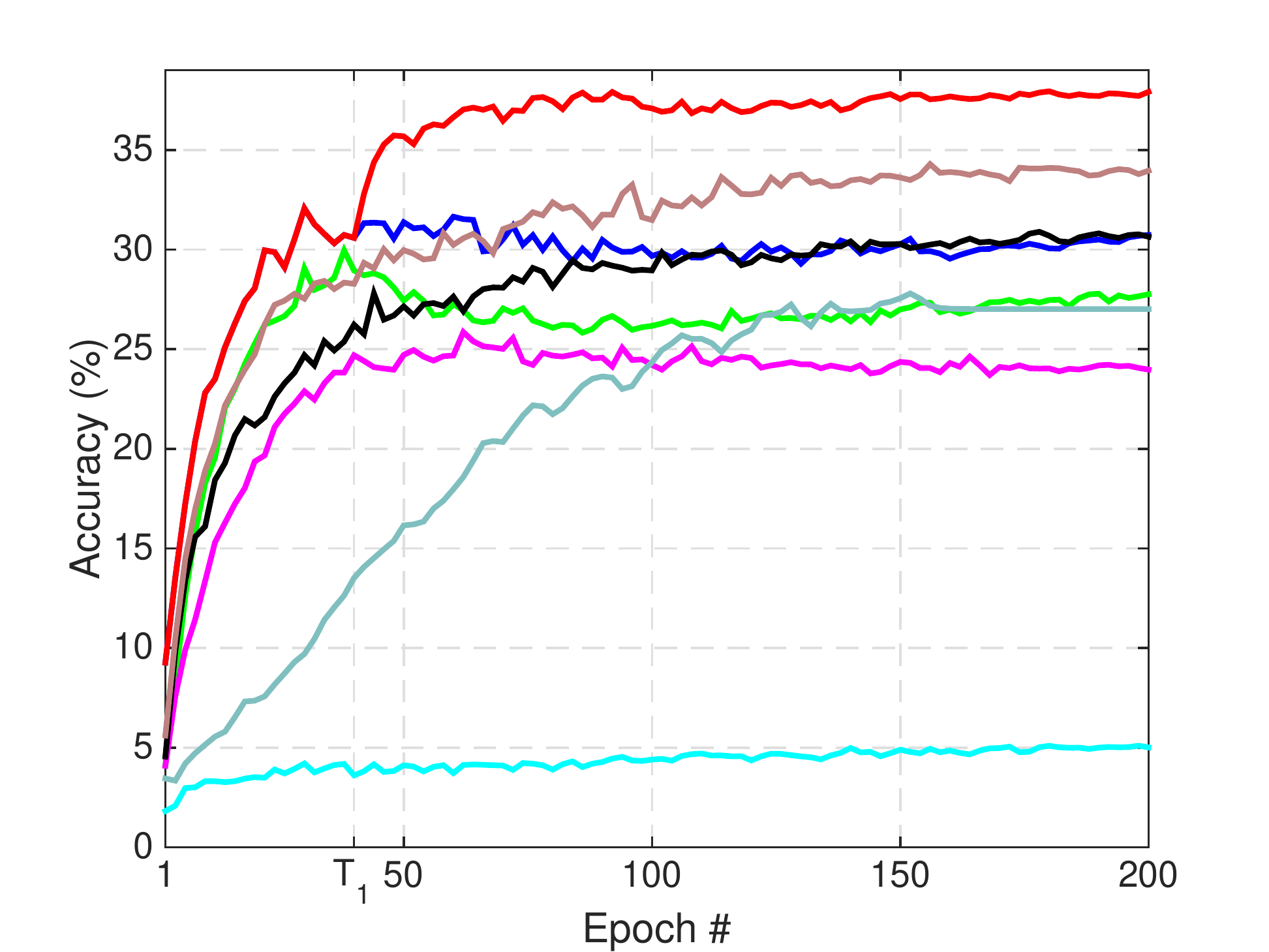}}
	\vspace{-1em}
	\caption{Testing accuracy of seven methods at different numbers of epochs on Mini-ImageNet.} 
	\label{fig:ImgNet}
	\vspace{-1em}
\end{figure*}

\section*{Deep Architectures}
\label{sec:Architectures}
Tables \ref{table:resnet18}-\ref{table:convnet} present the used network architectures of ResNet18 \cite{he2016deep} and ConvNet \cite{rasmus2015semi} \cite{laine2016temporal}, which are re-implemented with the PyTorch framework. A convolutional layer is represented by 'conv', and we display kernel size, stride and padding in brackets, and the number of kernels after a dash. The convolutional average-pooling layer is denoted by `avgpool', and the convolutional max-pooling layer is represented by `maxpool'. We provide the pooling size and stride in brackets. We utilize `fc' to denote the fully-connected layer and provide a number of output hidden units after a dash. The ReLU is used as the non-linearity function in ResNet18, and 'LReLU' denotes the leaky ReLU as the non-linearity in ConvNet and we provide the negative slope (\(\alpha\)=0.1) in brackets.  and \(c\) represents the number of classes.  Additionally, all data layers of ConvNet were initialized following \cite{he2015delving}.

\begin{figure*}[htb]\renewcommand\thefigure{A5}
	\includegraphics[trim={39em 5em 36em 23em}, clip, width=0.1\textwidth]{figures/legend.pdf}
	\subfigure[CIFAR-10/ResNet18]{\includegraphics[trim={0.5em 0em 3em 1em}, clip, width=0.22\textwidth]{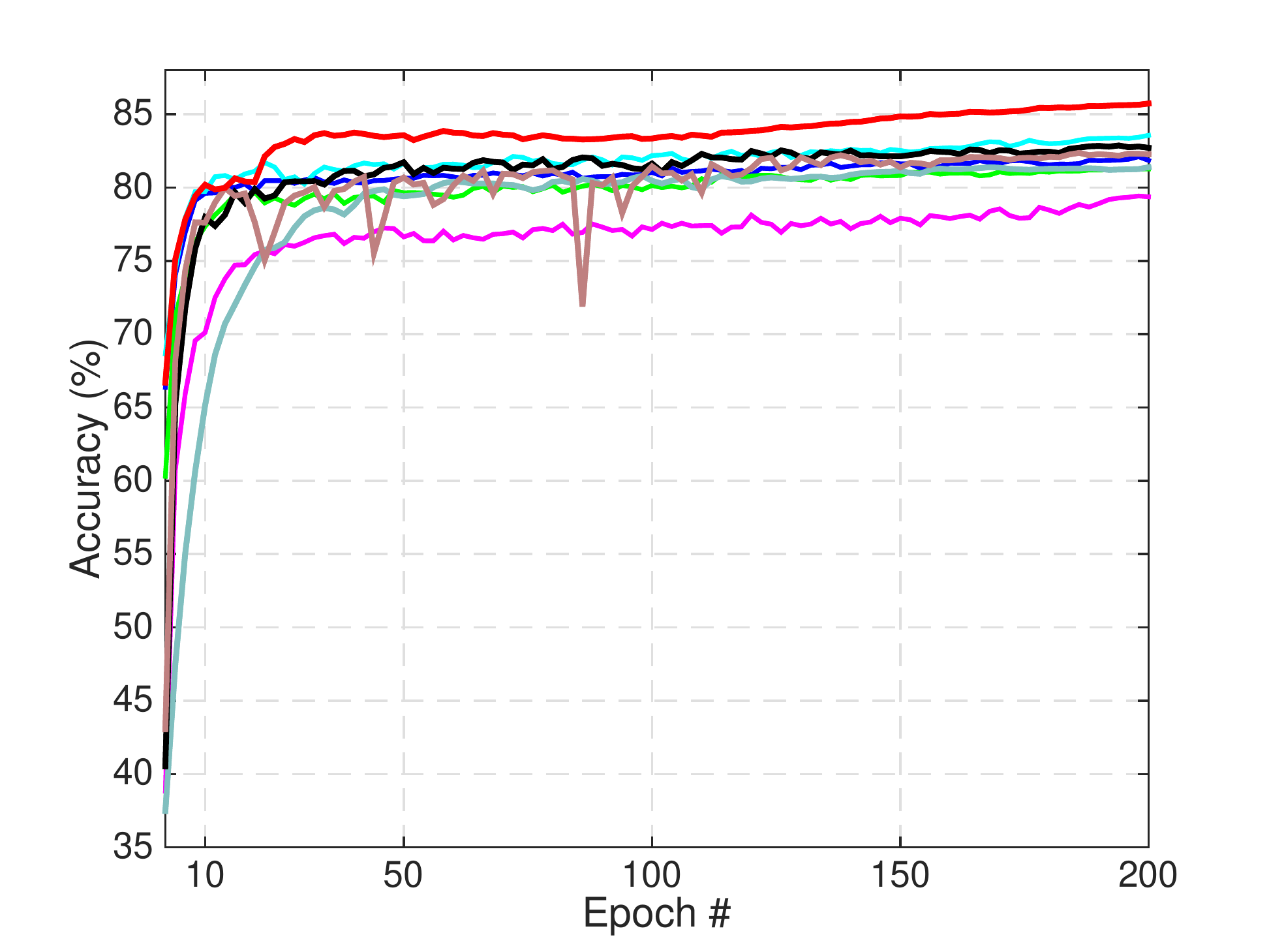}}
	\subfigure[CIFAR-10/ConvNet]{\includegraphics[trim={0.5em 0em 3em 1em}, clip, width=0.22\textwidth]{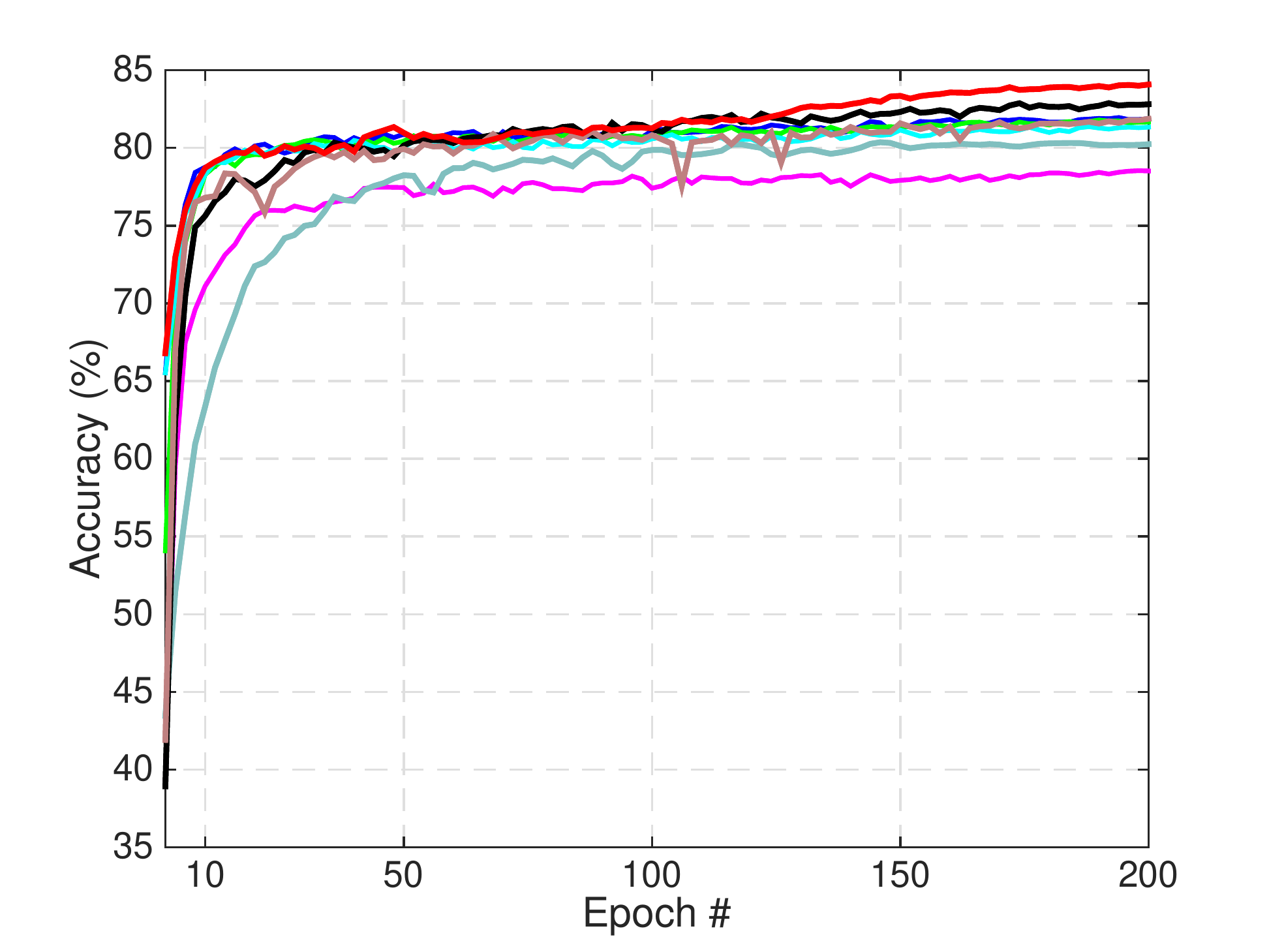}}
	\subfigure[CIFAR-100/ResNet18]{\includegraphics[trim={0.5em 0em 3em 1em}, clip, width=0.22\textwidth]{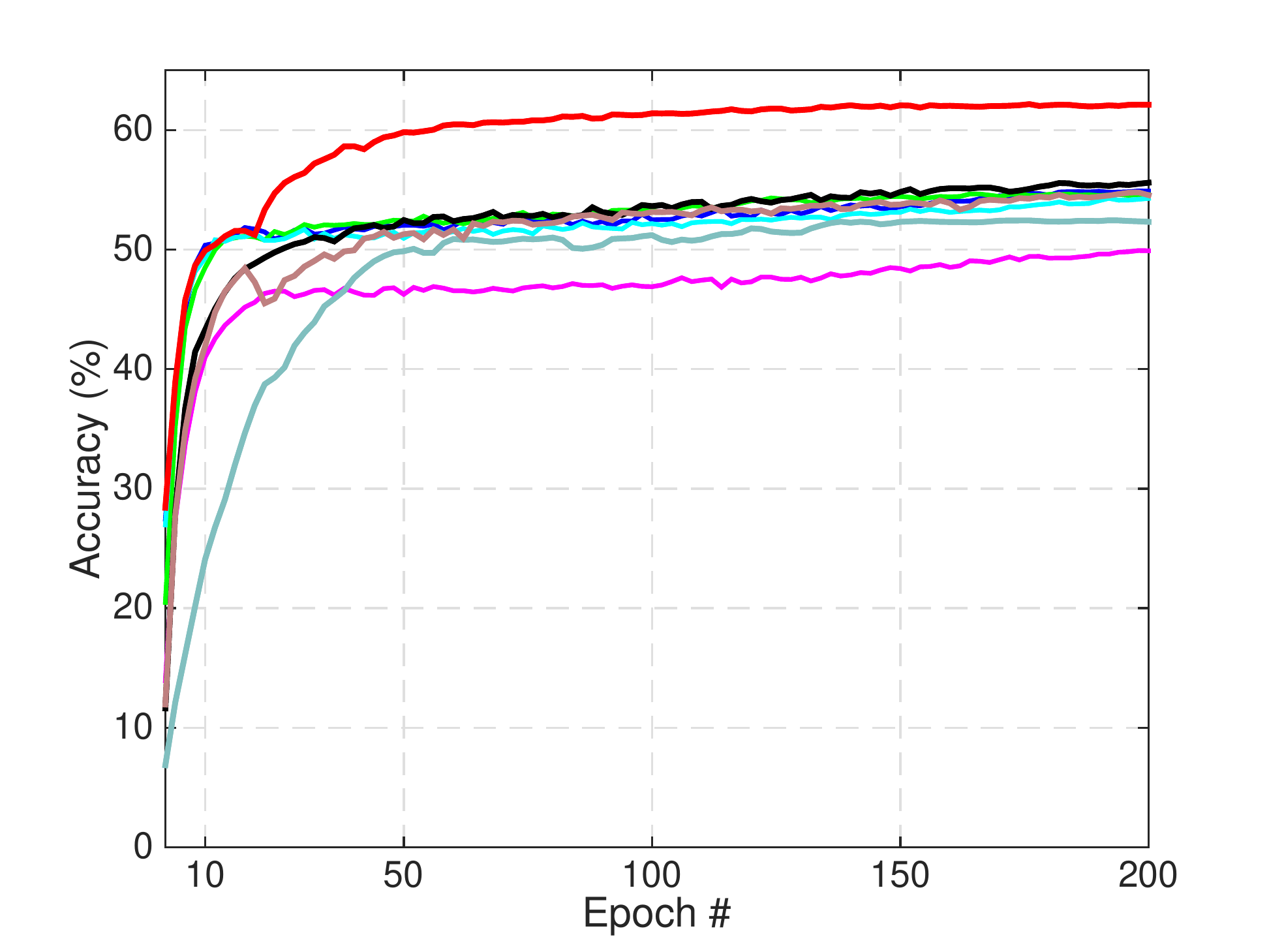}}
	\subfigure[CIFAR-100/ConvNet]{\includegraphics[trim={0.5em 0em 3em 1em}, clip, width=0.22\textwidth]{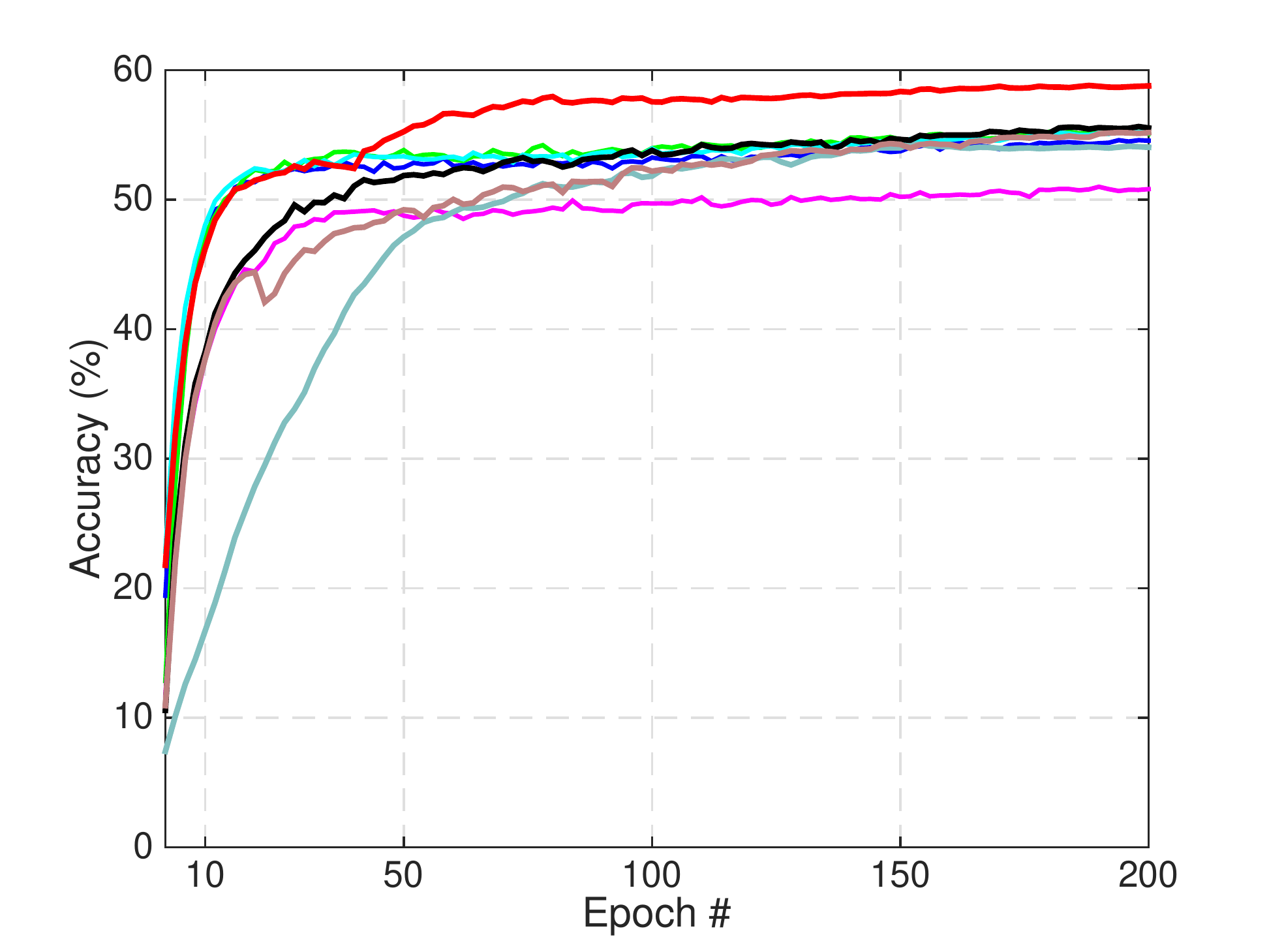}}
	\vspace{-1em}
	\caption{Testing accuracy of seven methods with noisy labels generated by CNNs on CIFAR-10 and CIFAR-100 at different numbers of training epochs.} 
	\label{fig:semi}
	\vspace{-1em}
\end{figure*}

\begin{figure*}[htb]\renewcommand\thefigure{A6}
	\includegraphics[trim={20em 6em 18em 27em}, clip, width=0.1\textwidth]{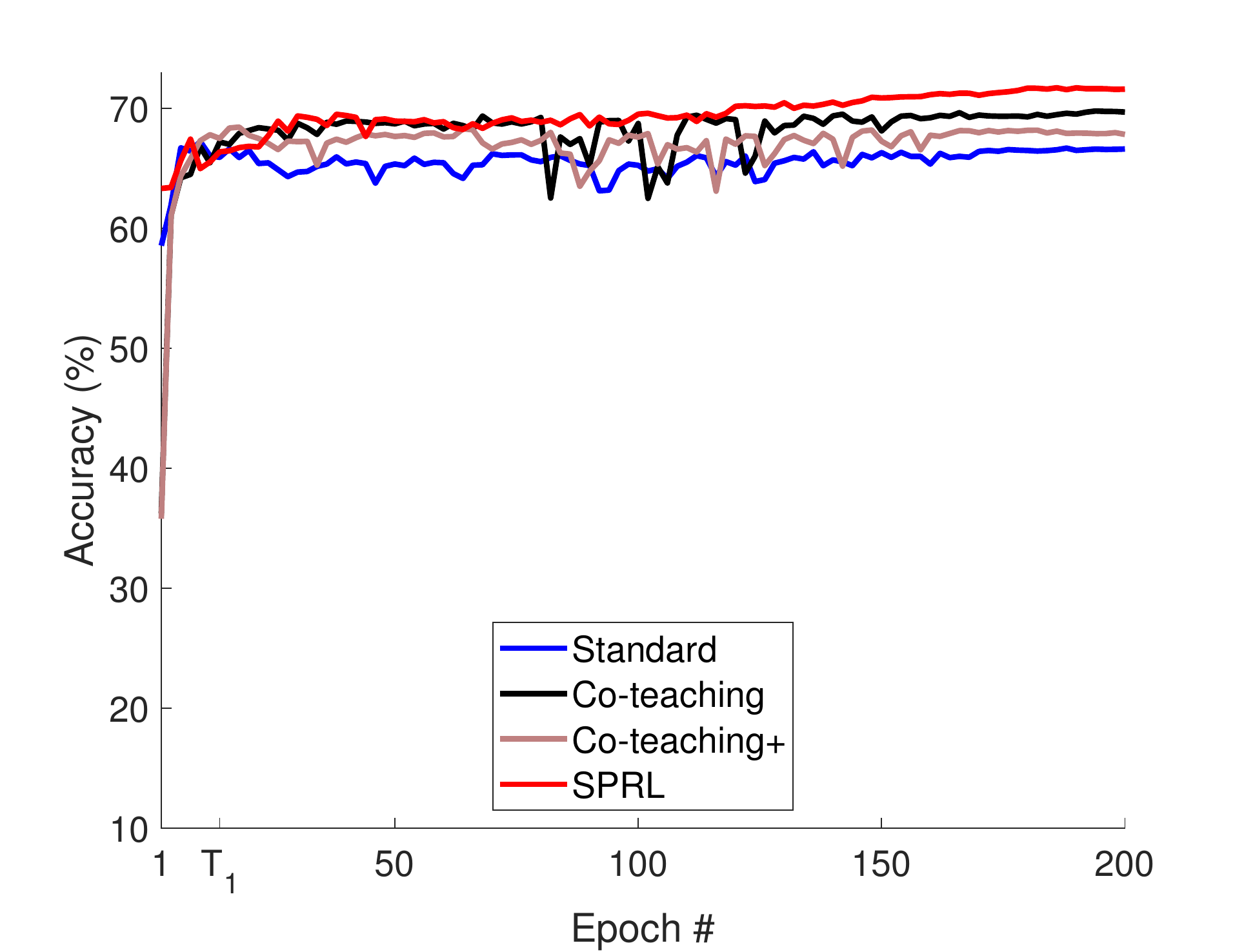}
	\subfigure[Food101/ResNet18]{\includegraphics[trim={0.5em 0em 3em 1em}, clip, width=0.22\textwidth]{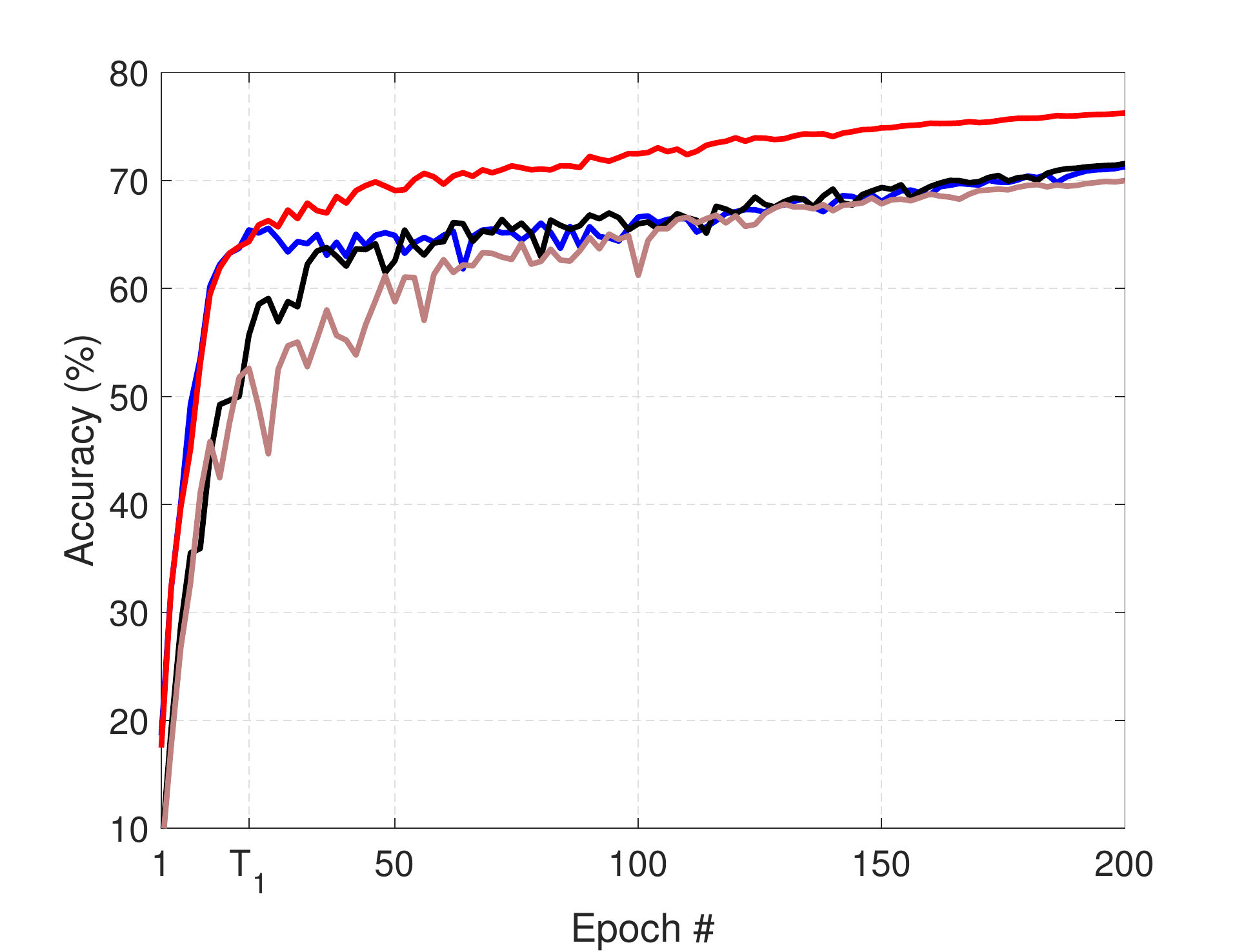}}
	\subfigure[Food101/ConvNet]{\includegraphics[trim={0.5em 0em 3em 1em}, clip, width=0.22\textwidth]{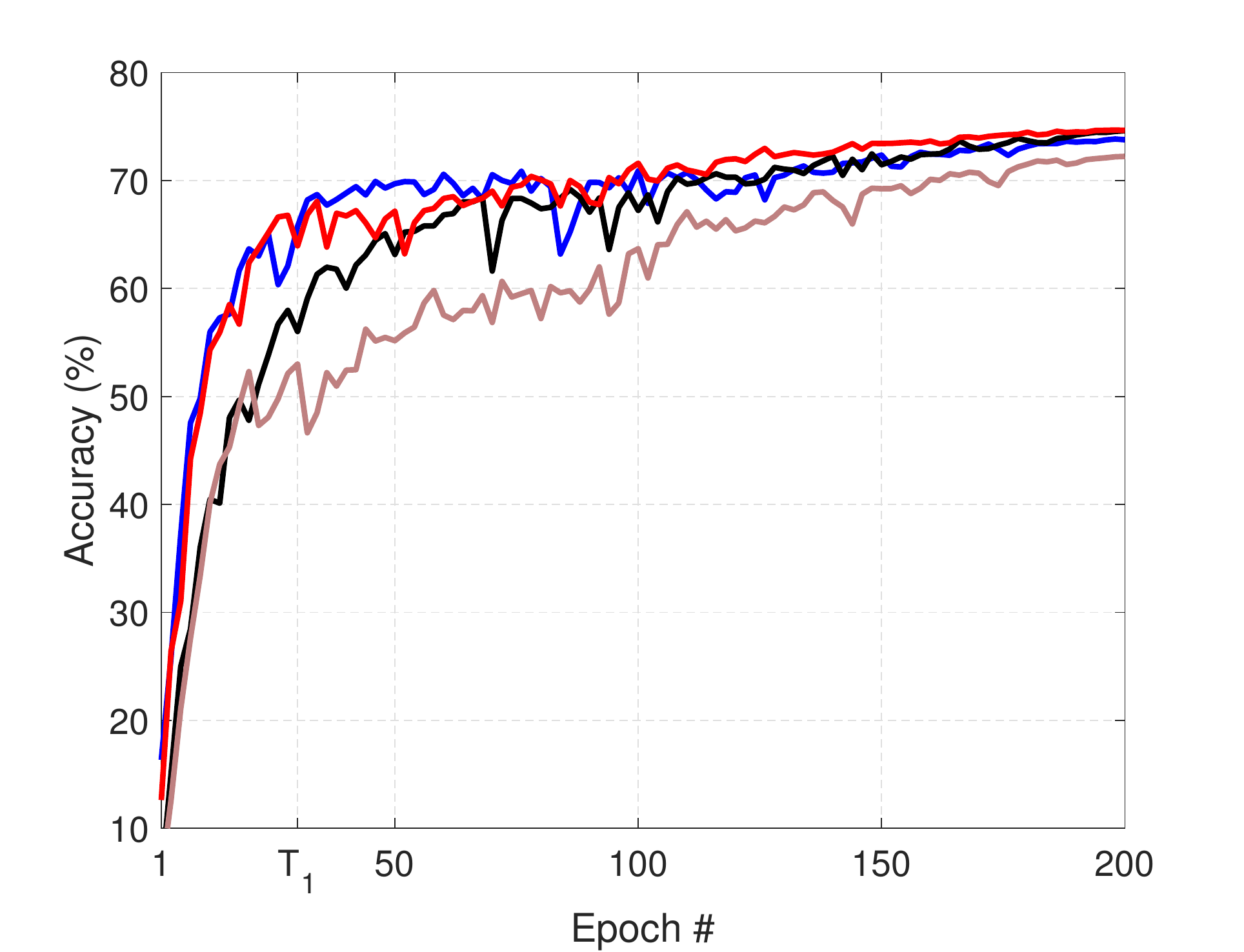}}
	\subfigure[Cloth1M/ResNet18]{\includegraphics[trim={0.5em 0em 3em 1em}, clip, width=0.22\textwidth]{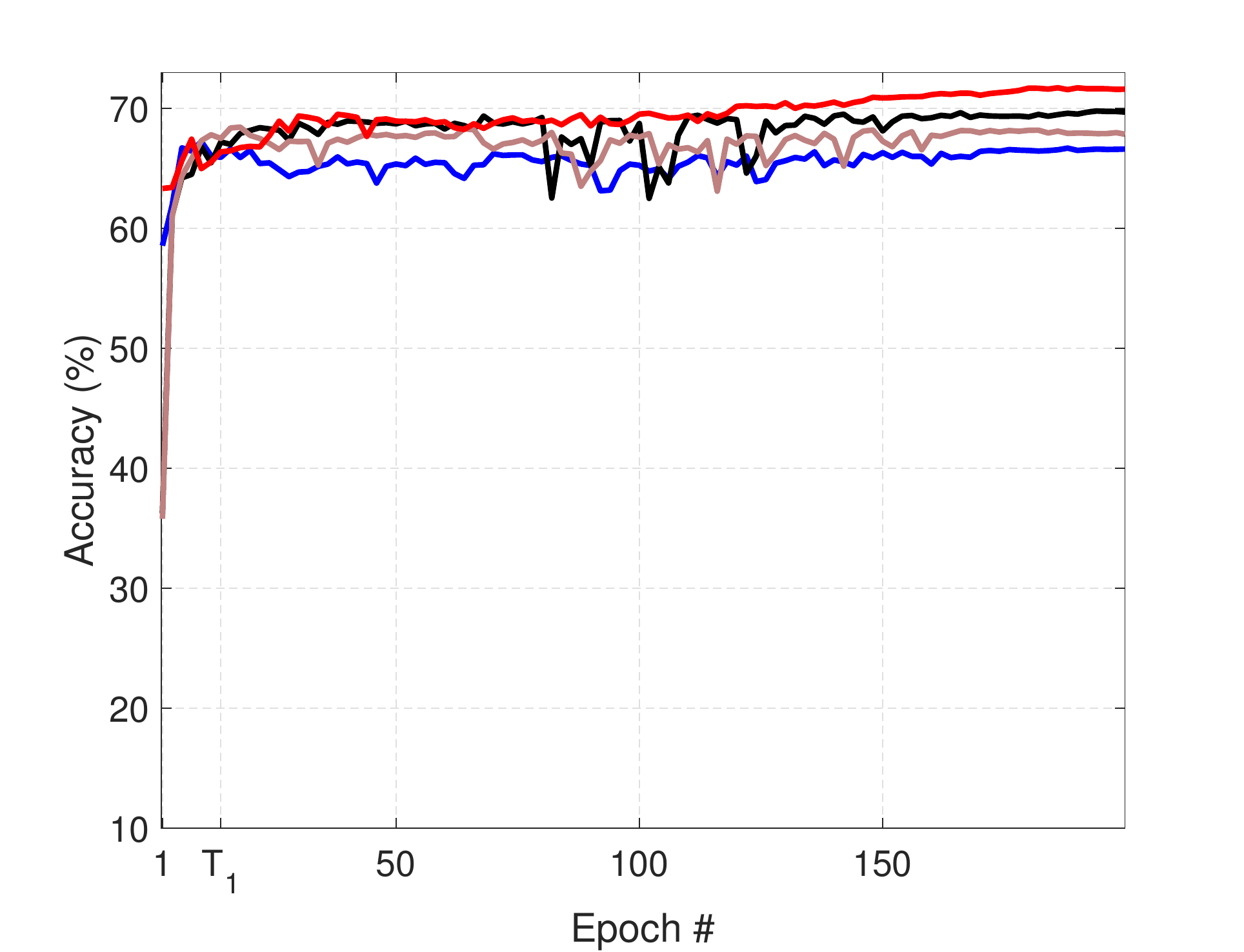}}
	\subfigure[Cloth1M/ConvNet]{\includegraphics[trim={0.5em 0em 3em 1em}, clip, width=0.22\textwidth]{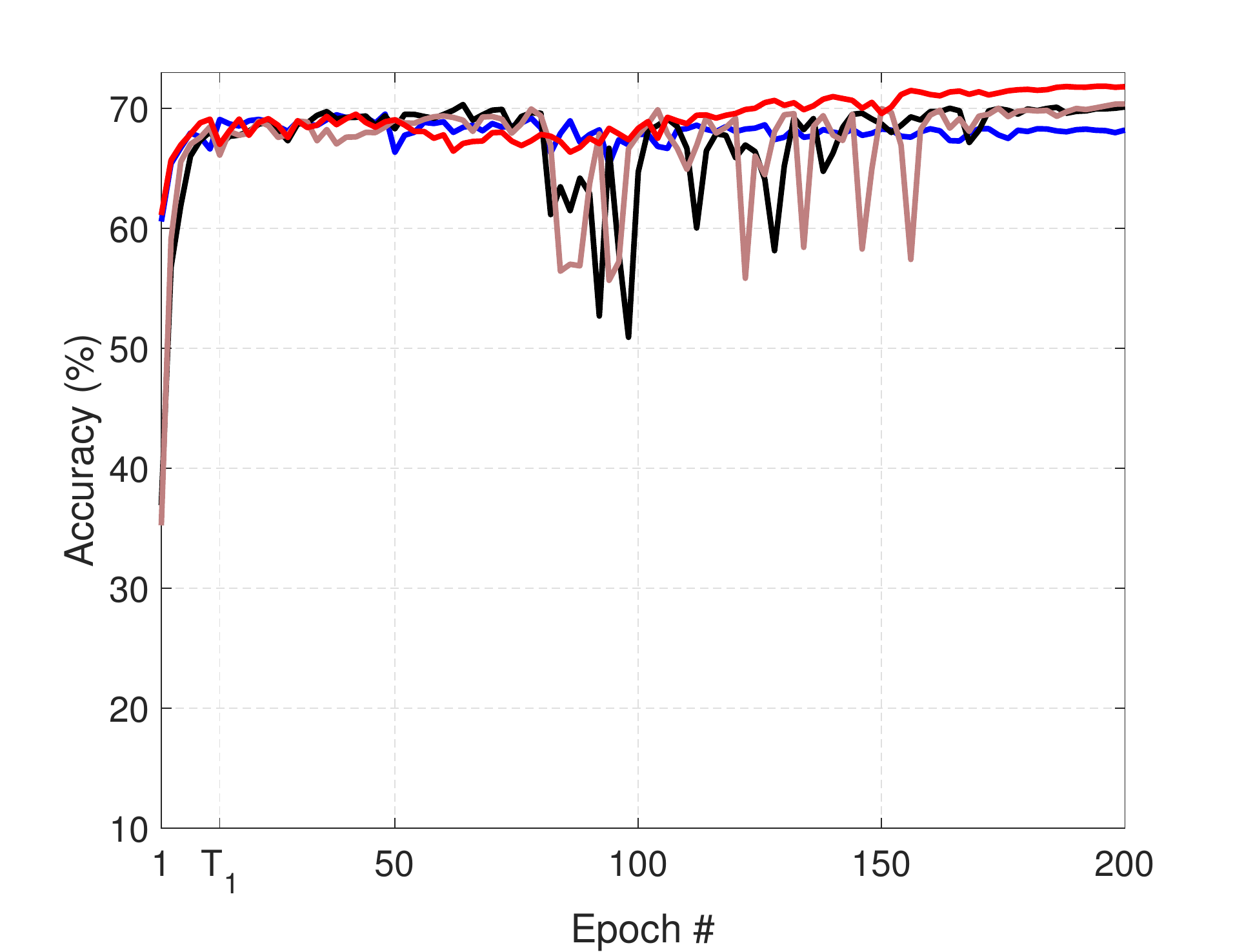}}
	\vspace{-1em}
	\caption{Testing accuracy of four methods with real-world noisy labels at different numbers of training epochs.}
	\label{fig:real}
	\vspace{-1em}
\end{figure*}

\section*{Testing Accuracy without Data Augmentation}
Table \ref{table:noaug} presents the average of testing accuracy (\%) of SPRL, Co-teaching and Co-teaching+ on CIFAR-10 and CIFAR-100 using ResNet18 and without using data augmentation. It further demonstrates that SPRL outperforms the best competitors Co-teaching and Co-teaching+ even without using data augmentation.

\section*{Parameter Settings}
Here, we present the settings of two essential parameters \(T_{1}\) and \(\gamma_d\) in SPRL. Table \ref{table:noisy} shows their values on MNIST, CIFAR-10, CIFAR-100 and Mini-ImageNet when using noisy labels generated by symmetric and pair flipping; Table \ref{table:semi} displays the values on CIFAR-10 and CIFAR-100 when using noisy labels generated by CNNs.

\section*{Testing Accuracy vs. Training Epochs}
Figs. \ref{fig:mnist}-\ref{fig:ImgNet}  present testing accuracy of the eight methods at different numbers of training epochs on MNIST, CIFAR-10, CIFAR-100 and ImageNet. It is worth noting that the accuracy of Co-teaching with ConvNet is drastically fluctuating during training on CIFAR-10 and CIFAR-100, while Co-teaching with ResNet18 can achieve stable accuracy on these two datasets. This might be caused by that we set the dropout rate to 0.5 instead of 0.25 in ConvNet.  Fig. \ref{fig:semi} displays testing accuracy of the eight methods at different  numbers of training epochs on CIFAR-10 and CIFAR-100 when using noisy labels generated by CNNs. Fig. \ref{fig:real} presents testing accuracy of Standard, Co-teaching, Co-teaching+ and SPRL on Food101 and Cloth1M with real-world noisy labels at different numbers of training epochs.

\section*{Accuracy of Selected Confident Samples}
Fig. \ref{fig:stds} shows the accuracy of selected confident samples of ResNet18, and it suggests that the network might first memorize the probably correct-label data
and then corrupt-label samples. Additionally, Fig. \ref{fig:sl} presents the accuracy of selected confident samples of Co-teaching on different levels
of noisy labels. It infers that the selection accuracy of Co-teaching is signifcantly decreased on extremely noisy labels.

\begin{table}[tb]\renewcommand\thetable{A3}
	\scriptsize 
	\centering
	\caption{Parameters for SPRL using symmetric and pair noisy labels on MNIST, CIFAR-10, CIFAR-100 and Mini-ImageNet.}
	\vspace{-0.5em}
	\begin{tabular}{|c|c|c|c|c|c|c|c|}
		\hline
		\multicolumn{8}{|c|}{ResNet18}  \\      
		\hline
		\multicolumn{6}{|c|}{Symmetry}   & \multicolumn{2}{c|}{Pair}    \\
		\hline
		\multicolumn{2}{|c|}{\(\epsilon=0.2\)} & \multicolumn{2}{c|}{\(\epsilon=0.5\)}  & \multicolumn{2}{c|}{\(\epsilon=0.8\)}   & \multicolumn{2}{c|}{\(\epsilon=0.45\)}   \\
		\hline
		\(T_1\) & \(\gamma_d\) & \(T_1\) & \(\gamma_d\) & \(T_1\) & \(\gamma_d\) & \(T_1\) & \(\gamma_d\)   \\
		\hline
		\multicolumn{8}{|c|}{MNIST}  \\
		\hline
		15 & 300 & 15 & 300 & 15 & 300 & 15 & 300  \\
		\hline
		\multicolumn{8}{|c|}{CIFAR-10}  \\
		\hline
		20 & 10 & 20 & 10 &20 &10 & 20 &50 \\
		\hline
		\multicolumn{8}{|c|}{CIFAR-100}  \\
		\hline
		20 & 10 & 20 & 10 &20 &10 &20 &50    \\
		\hline
		\multicolumn{8}{|c|}{Mini-ImageNet}  \\
		\hline
		15& 10 &15 &10 &20 &10 &10 &50  \\
		\hline
		\hline
		\multicolumn{8}{|c|}{ConvNet} \\
		\hline
		\multicolumn{6}{|c|}{Symmetry}   & \multicolumn{2}{c|}{Pair}  \\
		\hline
		\multicolumn{2}{|c|}{\(\epsilon=0.2\)}  & \multicolumn{2}{c|}{\(\epsilon=0.5\)}  & \multicolumn{2}{c|}{\(\epsilon=0.8\)}   & \multicolumn{2}{c|}{\(\epsilon=0.45\)}  \\
		\hline
		\(T_1\) & \(\gamma_d\) & \(T_1\) & \(\gamma_d\) & \(T_1\) & \(\gamma_d\) & \(T_1\) & \(\gamma_d\) \\
		\hline
		\multicolumn{8}{|c|}{MNIST}  \\
		\hline
		15 & 300 & 15 & 300 & 15  &300 & 15 & 300 \\
		\hline
		\multicolumn{8}{|c|}{CIFAR-10}  \\
		\hline
		40 &5 & 40 & 5 &40 &5  & 40 & 50 \\
		\hline
		\multicolumn{8}{|c|}{CIFAR-100}  \\
		\hline
		40 &5 &40 &5 &40 &5  &40 &50  \\
		\hline
		\multicolumn{8}{|c|}{Mini-ImageNet}  \\
		\hline
		40 &5 &40 &5 &40 &5  &40 &50  \\
		\hline
	\end{tabular}
	\label{table:noisy}
\end{table}

\begin{table}[htb] \renewcommand\thetable{A4}
	\scriptsize 
	\centering
	\caption{Parameters for SPRL using noisy labels generated by CNNs on CIFAR-10 and CIFAR-100.}
	\vspace{-0.5em}
	\begin{tabular}{|c|c|c|c|}
		\hline
		\multicolumn{2}{|c|}{ResNet18} & \multicolumn{2}{c|}{ConvNet}  \\      
		\hline
		\(T_1\) & \(\gamma_d\) & \(T_1\) & \(\gamma_d\)  \\
		\hline
		\multicolumn{4}{|c|}{CIFAR-10}  \\
		\hline
		10 & 10 & 10 & 10   \\
		\hline
		\multicolumn{4}{|c|}{CIFAR-100}  \\
		\hline
		20& 10 & 20 & 10   \\
		\hline
	\end{tabular}
	\label{table:semi}
\end{table}

\begin{figure}[tbp]\renewcommand\thefigure{A7}
	\center
	\includegraphics[trim={0em 0em 0em 0em}, clip, width=0.4\textwidth]{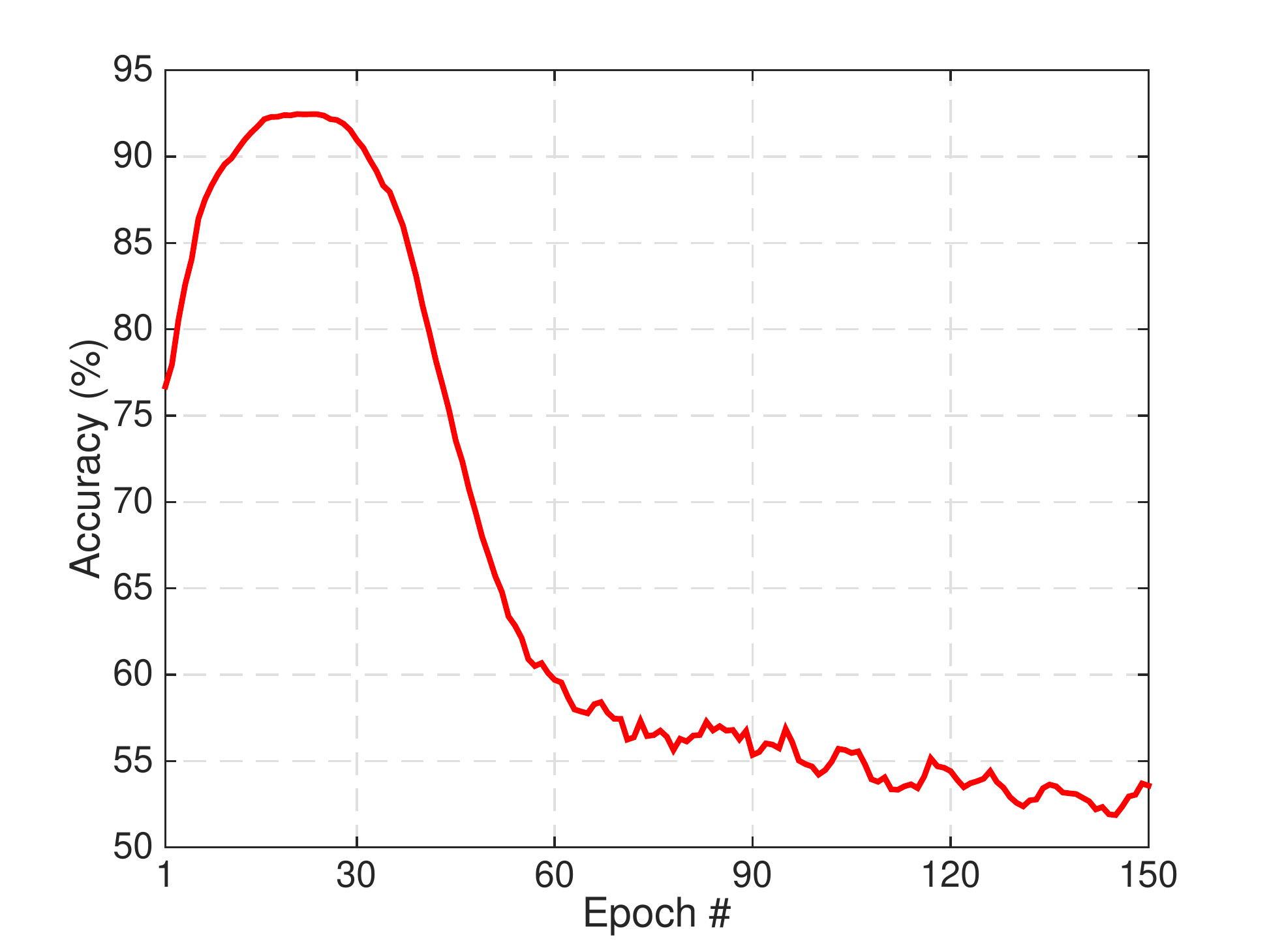}
	\vspace{-0.5em}
	\caption{The accuracy of selected confident samples from CIFAR-10 by using standard ResNet18 with symmetric label noise \(\epsilon=0.5\). Note that we only select 50\% training samples as confident ones. } 
	\label{fig:stds}
	\vspace{-1em}
\end{figure}

\begin{figure}[tbp]\renewcommand\thefigure{A8}
	\center
	\includegraphics[trim={0em 0em 0em 0em}, clip, width=0.4\textwidth]{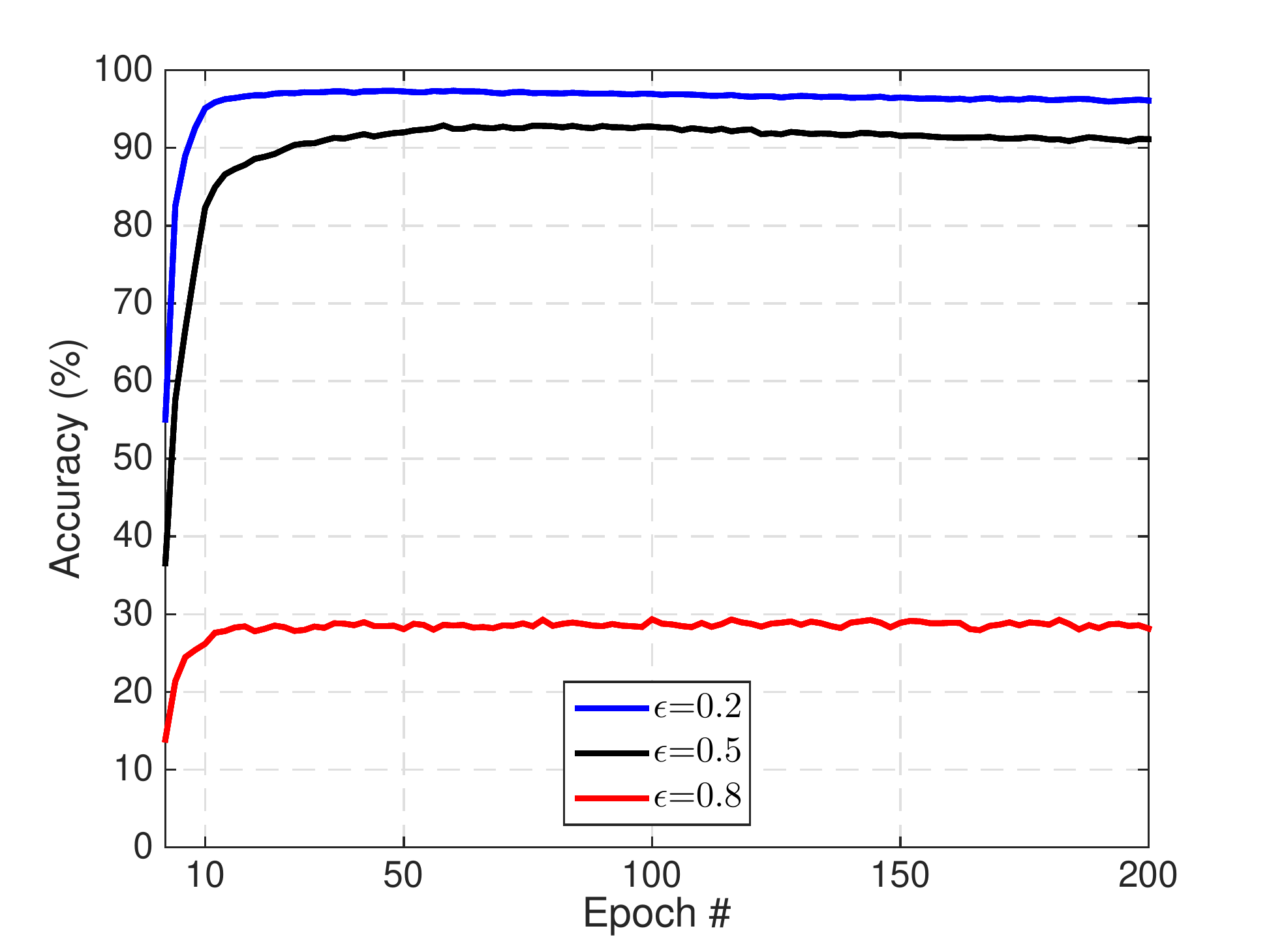}
	\vspace{-0.5em}
	\caption{The accuracy of selected confident samples from CIFAR-10 at different levels of symmetric label noise.} 
	\label{fig:sl}
	\vspace{-1em}
\end{figure}

\end{document}